\def\eqref#1{equation~\ref{#1}}
\def\1{\bm{1}}
\def\vb{{\bm{b}}}
\def\mA{{\bm{A}}}
\def\mB{{\bm{B}}}
\def\mI{{\bm{I}}}
\def\mO{{\bm{O}}}
\def\mQ{{\bm{Q}}}
\def\mX{{\bm{X}}}
\DeclareMathAlphabet{\mathsfit}{\encodingdefault}{\sfdefault}{m}{sl}
\SetMathAlphabet{\mathsfit}{bold}{\encodingdefault}{\sfdefault}{bx}{n}
\newcommand{\R}{\mathbb{R}}
\def\vbh{\hat{\vb}}
\def\bDelta{\boldsymbol{\Delta}}
\def\bdelta{\boldsymbol{\delta}}
\def\Xtilde{\boldsymbol{\tilde{\mathcal{X}}}}
\def\X{\boldsymbol{\mathcal{X}}}
\def\x{\mathbf{x}}
\def\o{\mathbf{o}}
\def\O{\boldsymbol{\mathcal{O}}}
\def\Rb{\mathcal{R}}
\def\Gam{\boldsymbol{\Gamma}}
\def\b{\mathbf{b}}
\def\z{\mathbf{z}}
\def\B{\mathbf{B}}
\def\etao{\eta_{\boldsymbol{\mathcal{O}}}}
\def\comax{\mathit{c}_{\boldsymbol{\mathcal{O}},\mathrm{max}}}
\def\I{\mathbf{I}}
\def\S{\mathcal{S}}
\def\muod{\boldsymbol{\mu}_{\boldsymbol{\mathcal{O}}}}
\def\muxd{\boldsymbol{\mu}_{\boldsymbol{\mathcal{X}}}}
\def\muzs{\boldsymbol{\mu}_{\mathbb{S}^{D-1}}}
\def\muzss{\boldsymbol{\mu}_{\mathbb{S}^{D-1}\cap \boldsymbol{S}}}
\def\Proj{\mathcal{P}_{\mathbb{S}^{D-1}}}
\def\ProjS{\mathcal{P}_{\mathcal{S}}}
\def\ProjSperp{\mathcal{P}_{\mathcal{S}_{\perp}}}
\def\cos{\mathrm{cos}}
\def\Sperp{\mathcal{S}_\perp}
\def\Sb{\mathbb{S}^{D-1}}
\def\Rb{\mathbb{R}}
\def\Eb{\mathbb{E}}
\def\bh{\hat{\mathbf{b}}}
\def\s{\hat{\mathbf{s}}}
\def\mXtilde{\tilde{\mX}}
\def\x{\boldsymbol{\mathit{x}}}
\def\o{\boldsymbol{\mathit{o}}}
\def\ex{\boldsymbol{\mathit{e}}_{\mX}}
\def\eo{\boldsymbol{\mathit{e}}_{\mO}}
\def\arctan{\mathrm{arctan}}
\def\etax{\boldsymbol{\eta}_{\boldsymbol{\mathcal{X}}}}
\def\etao{\boldsymbol{\eta}_{\boldsymbol{\mathcal{O}}}}
\def\sup{\mathrm{sup}}
\def\sgn{\mathrm{Sgn}}
\newcommand{\myparagraph}[1]{\textbf{#1.}}
\def\st{\ \ \mathrm{s.t.} \ \ }
\def\R{\mathbf{R}}
\DeclareMathOperator*{\rank}{rank}
\DeclareMathOperator*{\spann}{span}
\DeclareMathOperator*{\diag}{diag}
\newcommand{\BlackBox}{\rule{1.5ex}{1.5ex}}  
\newenvironment{proof}{\par\noindent{\bf Proof\ }}{\hfill\BlackBox\\[2mm]}
\newtheorem{theorem}{Theorem}
\newtheorem{lemma}[theorem]{Lemma}
\title{Implicit Bias of Projected Subgradient Method Gives Provable Robust Recovery of Subspaces of Unknown Codimension}
\author{Paris Giampouras, Benjamin D. Haeffele \& Ren\'e Vidal \thanks{ Correspondence to: parisg@jhu.edu} \\
Mathematical Institute for Data Science\\
Johns Hopkins University\\
Baltimore, MD , USA \\
}
\begin{document}

\maketitle

\begin{abstract}
Robust subspace recovery (RSR) is  a fundamental problem in robust representation learning. Here we focus on a recently proposed RSR method termed Dual Principal Component Pursuit (DPCP) approach, which aims to recover a basis of the orthogonal complement of the subspace and is amenable to handling subspaces of high relative dimension.  Prior work has shown that DPCP can provably recover the correct subspace in the presence of outliers, as long as the true dimension of the subspace is known.  We show that DPCP can provably solve RSR problems in the {\it unknown} subspace dimension regime, as long as orthogonality constraints -adopted in previous DPCP formulations- are relaxed and random initialization is used instead of spectral one. Namely, we propose a very simple algorithm based on running multiple instances of a projected sub-gradient descent method (PSGM), with each problem instance seeking to find one vector in the null space of the subspace.  We theoretically prove  that under mild conditions this approach will succeed with high probability.  In particular, we show that 1) all of the problem instances will  converge to a vector in the nullspace of the subspace and 2) the ensemble of problem instance solutions will be sufficiently diverse to fully span the nullspace of the subspace thus also revealing its true unknown codimension. We provide empirical results that corroborate our theoretical results and showcase the remarkable implicit rank regularization behavior of PSGM algorithm that allows us to perform RSR without being aware of the subspace dimension.
\end{abstract}
\section{Introduction}
 Robust subspace recovery (RSR) refers to methods designed to identify an underlying linear subspace (with dimension less than the ambient data dimension) in a dataset which is potentially corrupted with outliers (i.e., points that do not lie in the linear subspace). 
Many methods for RSR have been proposed in the literature over the past several years \citet{xu2012robust,You:CVPR17,lerman2018overview}. 
Formulations based on convex relaxations and decompositions of the data matrices into a low-rank matrices plus sparse corruptions -- either uniformly at random as in Robust PCA (RPCA) \citet{candes2011robust} or column-sparse corruptions as in \citet{xu2012robust,trop:ejs2011}\footnote{Note that RPCA is closely related but distinct from RSR.  In RPCA the corruptions are assumed to be entry-wise in a data matrix, while in RSR the corruptions apply to an entire datapoint (i.e., column-wise).} --  can, in certain situations, be shown to provably recover the true subspace when the dimension is unknown. However, these theoretical guarantees often require the dimension of the subspace, $d$, to be significantly less than the ambient dimension of the data, $D$, and these methods are not suitable to the more challenging high relative subspace dimension regime (i.e., when $\frac{d}{D}\approx 1$). 

\myparagraph{The Dual Principal Component Pursuit approach for RSR}
Recently, progress has been made towards solving the RSR problem in the high relative dimension regime by a formulation termed Dual Principal Component Pursuit (DPCP), 
which is provably robust in recovering subspaces of high relative dimension, \citet{Tsakiris:DPCP-JMLR18}. As implied by its name, DPCP follows a {\it dual} perspective of RSR aiming to recover a basis for the orthogonal complement of the inliers' subspace.  However, a key limitations of DPCP is that it requires \textit{a priori} knowledge of the true subspace dimension.

{\bf DPCP for $c=1$}. Let $\tilde{\mX} \in \Rb^{D\times (N+M)}$ denote the data matrix defined as $\tilde{\mX} = [\mX\;\;\mO]\Gam$ where $\mX\in \Rb^{D\times N}$ is a matrix containing $N$ inliers as its columns, $\mO\in\Rb^{D\times M}$ is a matrix containing $M$ outliers, and $\Gam$ is an unknown permutation matrix. DPCP was first formulated in \citet{Tsakiris:DPCP-JMLR18} for handling subspaces of codimension $c=D-d$ equal to 1 (i.e., the subspace is a hyperplane with dimension $d=D-1$) formulated as the following optimization problem:
\begin{equation}
\min_{\vb\in \R^{D}} ~~\|\tilde{\mX}^\top \vb\|_{1} \st \|\vb\|_2=1. \label{eq:dpcp}
\end{equation}
Note that problem (\ref{eq:dpcp}) is nonconvex due to the spherical constraint imposed on vector $\vb\in\Sb$ which is a normal vector of the $D-1$ dimensional hyperplane. In \citet{Tsakiris:DPCP-JMLR18} the authors showed that the global minimizer of  (\ref{eq:dpcp}) will give a normal vector of the underlying true hyperplane as long as both inliers and outliers are well-distributed {\it or} the ratio between the number of inliers and number of outliers is sufficiently small. Following a probabilistic point of view, the authors in \citet{Zhu:NIPS18} presented an improved theoretical analysis of DPCP giving further insights on the remarkable robustness of DPCP in recovering the true underlying subspaces even in datasets heavily corrupted by outliers. Moreover, the authors introduced a projected subgradient method which converges to a normal vector of the true subspace at a linear rate.

{\bf Recursive DPCP for known $c>1$}. The authors of \citet{Zhu:NIPS18} also proposed an extension to DPCP which allows for subspaces with codimension $c>1$ via a projected subgradient algorithm which attempted to learn $c$ normal vectors to the subspace in a recursive manner.  Specifically, after convergence to a normal vector, the projected subgradient algorithm is initialized with a vector {\it orthogonal} to the previously estimated normal vector.  
However, for that approach to be successful the knowledge of the true subspace codimension $c$ becomes critical. Specifically, if  an underestimate of the true codimension $c$ is assumed the recovered basis for the null space, $\hat{\mB}$, will fail to span the whole null space, $\Sperp$. On the other hand, an overestimate of $c$ will lead to columns of $\hat{\mB}$ corresponding to vectors that lie in $\S$.

{\bf Orthogonal DPCP for known $c$}. In \citet{Zhu:NIPS19}, an alternative to ~(\ref{eq:dpcp}) was proposed which attempts to solve for $c$ normal vectors to the subspace at once via the formulation
\begin{equation}
\min_{\mB\in \Rb^{D\times c}} \|\mXtilde^\top\mB\|_{1,2}~~~\st~~ \mB^\top\mB = \I. \label{eq:dpcp_orth}
\end{equation}
The authors also propose an optimization algorithm based on the projected Riemannian subgradient method (RSGM), which builds on similar ideas as the projected subgradient method of \citet{Zhu:NIPS18} and enjoys a linear converge rate when the step size is selected based on a geometrically diminishing rule. In \citet{Ding:DPCP-ICML21} a theoretical analysis is provided on the geometric properties of \ref{eq:dpcp_orth} showing the merits of DPCP in handling datasets a) highly contaminated by outliers (in the order of $M=\mathcal{O}(N^{2})$) and b) subspaces of high relative dimension.  However again, it is critical to note that a key shortcoming of this approach is the fact that because all minimizers of problem (\ref{eq:dpcp_orth}) will be orthogonal matrices, a prerequisite for recovering the correct orthogonal complement of the inliers subspace is the \textit{a priori}  knowledge of the true codimension $c$ (see Fig. \ref{fig:orth_vs_non-orth}). 

\myparagraph{Contributions}
In this work, we address this key limitation by proposing a framework that allows us to perform robust subspace recovery in high relative subspace dimension regime \textbf{without} requiring {\it a priori} knowledge of the true subspace dimension. In particular, our proposed approach will be based on the simple approach of solving multiple, parallel instances of the DPCP formulation for solving for a single normal vector to the subspace,
\begin{equation}
\min_{\mB\in\Rb^{D\times c'}} ~\sum^{c'}_{i=1}\|\mXtilde^\top\vb_i\|_1 ~\st~~\|\vb_i\|_2 = 1,~~~ i=1,2,\dots, c'
\label{eq:prop_DPCP}
\end{equation}
where  $c'$ is assumed to be a safe overestimate of $c$ i.e., $c'\geq c$. Contrary to (\ref{eq:dpcp_orth}), the objective function in (\ref{eq:prop_DPCP}) decouples over the columns $\vb_{i}$ of matrix $\mB= [\vb_1~ \vb_2~ \cdots \vb_{c'}]$ and thus can be can be solved in a parallel manner by independently applying  a projected subgradient algorithm (referred to as PSGM) from $c'$ different {\it random initialization}. Moreover, we observe that with random initialization  we can get vectors sufficiently spread on the sphere that lead PSGM (initialized with those vectors) to return  normal vectors of $\S$. These are all {\it linearly independent} when $c'\leq c$ and thus can span  $\Sperp$ when $c'=c$. If $c'>c$ then PSGM will return $c'-c$ redundant vectors that will still lie in $\Sperp$ yet they will be linearly dependent (see Figure \ref{fig:orth_vs_non-orth}). That being said, we show that this simple strategy permits us to robustly recover the true subspace even without knowledge of the true codimension $c$.

 As is detailed in Sections 3 and 4, this remarkable behavior of PSGM originates from the {\it implicit bias} that is induced in the optimization problem due to a) the relaxation of {\it orthogonality constraints} in (\ref{eq:prop_DPCP}) and b) the {\it random initialization} scheme that is adopted. Our specific contributions are as follows:
\begin{enumerate}[leftmargin=*]
\item First, we focus on a continuous version of (\ref{eq:prop_DPCP}) i.e., in the case that inliers and outliers are distributed under continuous measures which induces a benign landscape in DPCP and hence is easier to be analyzed. We prove that DPCP problem formulated as in (\ref{eq:prop_DPCP}) can be solved via a projected subgradient algorithm that implicitly biases solutions towards low-rank matrices $\hat{\mB}\in \Rb^{D\times c'}$ whose columns are the projections of the randomly initialized columns of $\mB^0$ onto $\Sperp$. As a result,  $\hat{\mB}$ {\it almost surely} spans $\Sperp$ as long as it is {\it randomly initialized} with $c'\geq c$. 
\item Second, we analyze the discrete version which is more challenging, yet of more practical interest, showing that iterates of DPCP-PSGM converge to a {\it scaled} and {\it perturbed} version of the initial matrix $\mB^{0}$. This compelling feature of DPCP-PSGM allows to derive a sufficient condition and a probabilistic bound guaranteeing when the matrix $\hat{\mB}\in \Rb ^{D\times c'}$ spans $\Sperp$.
\item We provide empirical results both on simulated and a real datasets, corroborating our theory and showing the robustness of our approach even without knowledge of the true subspace codimension.
\end{enumerate}
\begin{figure}
\centering
\begin{tabular}{c c}
\includegraphics[width=0.45\textwidth,height=0.3\textwidth]{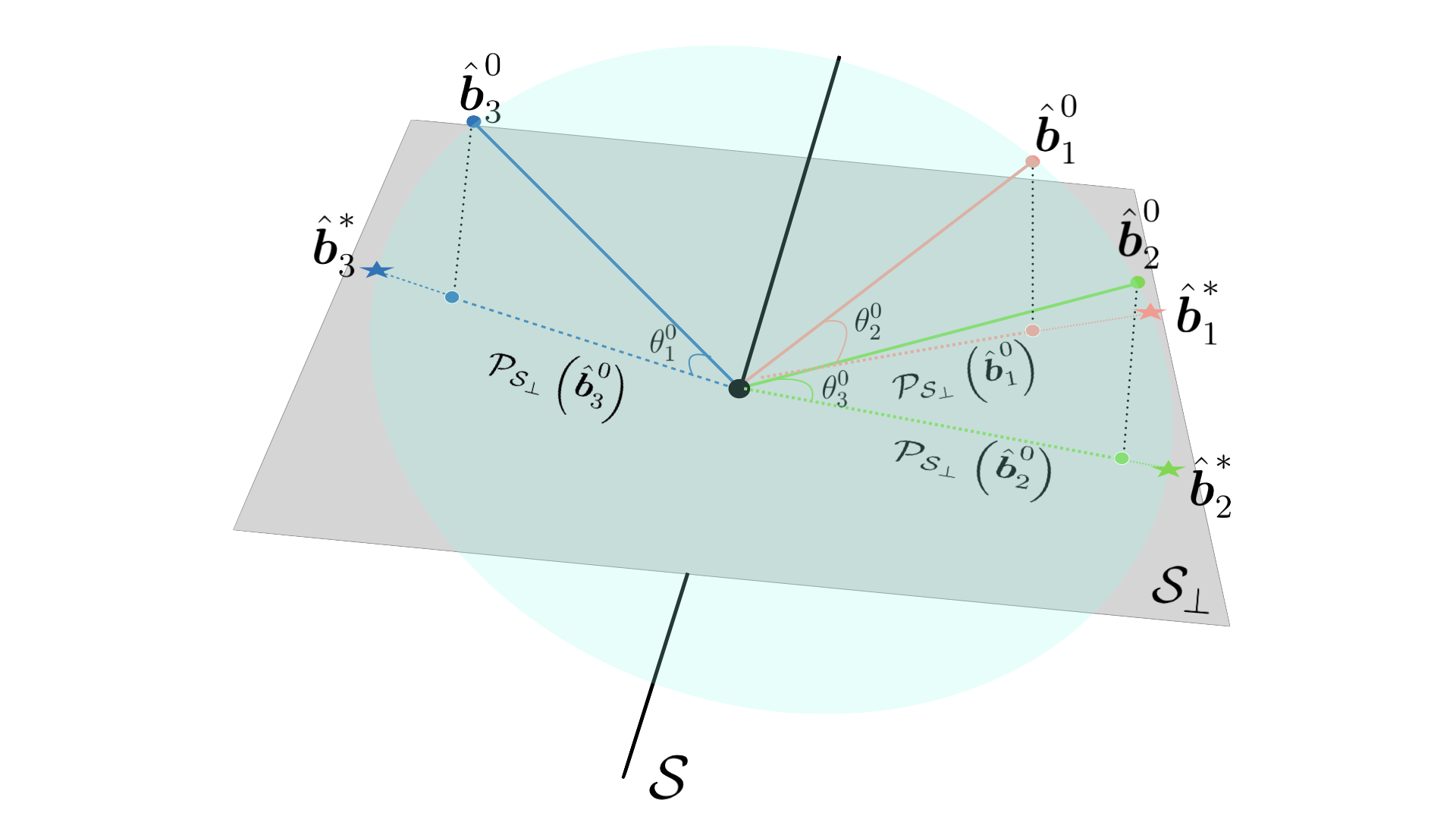} & \includegraphics[width=0.45\textwidth,height=0.3\textwidth]{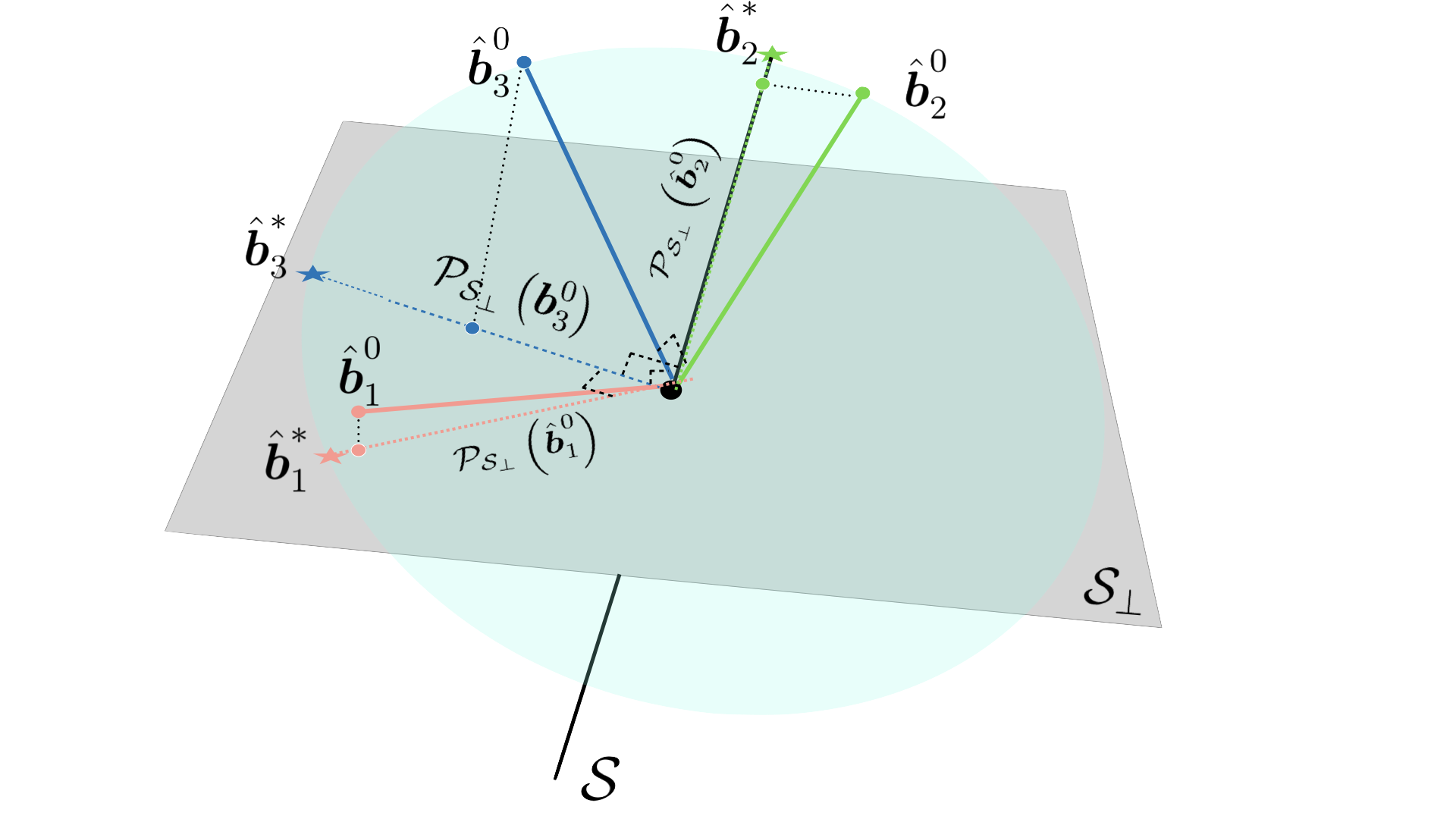} 
\end{tabular}
\caption{Graphical illustration of the recovered normal vectors of $\S$ by  (left) the proposed DPCP-PSGM approach and (right) methods  that use spectral initialization and impose orthogonality constraints  . Initial vectors $\vb^{0}_{1},\vb^{0}_{2},\vb^{0}_{3}$  are randomly initialized and are non-orthogonal in (left) and spectrally initialized (hence orthogonal) in (right). Note that in (left) $\rank(\hat{\mB}^{\ast}) $ (where $ \hat{\mB}^{\ast}=[\vb^{\ast}_{1},\vb^{\ast}_{2},\vb^{\ast}_{3}]$)  equals to the true codimension $c=2$ of $\S$ and $\spann(\mB^{\ast})\equiv \Sperp$ while in (right) $\mB^{\ast}$ is orthogonal hence $\rank(\mB^\ast)=3$ with $\vb^{\ast}_{2}\in \S$.}
\label{fig:orth_vs_non-orth}
\end{figure}

\section{Related work}
{\bf Subspace Recovery.} Learning underlying low-dimensional subspace representations of data has been a central topic of interest in machine learning research.
Principal Component Analysis (PCA) has been the most celebrated method of this kind and is based on the minimization of the perpendicular distances of the data points from the estimates linear subspace, \citet{jolliffe2016principal}. Albeit, it is originally formulated as nonconvex optimization problem, PCA can be easily solved in closed form using a singular value decomposition  (SVD) operation (see e.g. \citet{Rene:generalizedPCA-book-2015}). Despite its great success, PCA is prone to failure when handling datasets that contain {\it outliers} i.e., data points whose deviation from the inliers' subspace is ``large'' in the $\ell_{2}$ norm sense.  

{\bf Robust Subspace Recovery (RSR).} To remedy this weakness of PCA  {\it robust subspace recovery (RSR)} methods attempt to identify the outliers in the dataset an recover the true underlying low-dimensional subspace of the inliers \citet{lerman2018overview,maunu2019well}. 
A classical approach to this problem is RANSAC \citet{fischler1981random}, which is given a time budget and randomly chooses per iteration $d$ points and then fits a $d$-dimensional subspace to those points and checks how the proposed subspace fits the remaining data points. 
RANSAC then outputs the subspace that agrees with the largest number of points.  However, RANSAC's reliance on randomly sampling points to propose subspaces can be highly inefficient when the number of outliers is high (as well as the fact that RANSAC also needs knowledge of the true subspace dimension, $d$). The need to tackle inherent shortcomings of RANSAC pertaining to computational complexity issues inspired alternative convex formulations of RSR, \citet{xu2012robust,you2017provable,pmlr-v70-rahmani17a,zhang2014novel}. In  \citet{xu2012robust}  the authors decompose the data matrix as a sum of a low-rank and a column-sparse component. However, theoretical guarantees obtained for convex formulations only hold for subspaces of relatively low-dimensional subspaces i.e., for $d\ll D$ where $d$ and $D$ denote the subspace and the ambient dimension, respectively. 
 To the best of our knowledge, existing  RSR algorithms rely heavily on one of two key assumptions. 1) The subspace is very low-dimensional relative to the ambient dimension $(d \ll D)$ or 2) {\it The subspace dimension is a priori known}. Undoubtedly,  {the second hypothesis} is rather strong in real-world applications, and many applications also do not satisfy the first assumption. Moreover, heuristic strategies for selecting the dimension of the subspace are hard to be applied in the RSR setting since they incur computationally prohibitive procedures, \citet{lerman2018overview}.
 \\
{\bf Relation to Orthogonal Dictionary Learning (ODL).}
 Note that objective functions in the form of (\ref{eq:prop_DPCP}) show up beyond RSR problems i.e., in orthogonal dictionary learning (ODL), sparse blind deconvolution, etc., \citet{qu2020finding}. Specifically, based on a similar formulation the authors in \cite{bai2018subgradient} proved that $c'=\mathcal{O}(c \log c)$ independent random initial vectors suffice in order to recover with high probability a dictionary of size $D\times c$ with high accuracy. In this paper we aim to recover a basis of the orthogonal complement of a subspace of unknown dimension instead of accurately estimating a dictionary hence our goal differs from that in \cite{bai2018subgradient}.
\\
 {\bf Implicit bias in Robust Recovery Problems.} The notions of implicit bias and implicit regularization have been used interchangeably in the nonconvex optimization literature for describing the tendency of optimization algorithms to converge to global minima of minimal complexity with favorable generalization properties in overparameterized models, \cite{gunasekar18}. In the context of robust recovery, the authors in \citet{chong20} showed that Robust PCA can be suitably re-parametrized  in such a way to favor low-rank and sparse solutions without using any explicit regularization. In this work, we use the term implicit bias for describing the convergence of DPCP-PSGM to low-rank solutions, which are not necessarily global minimizers, that span the orthogonal complement of the subspace when a) orthogonality constraints in DPCP formulation are relaxed b) DPCP is overparameterized  i.e., $c'\geq c$ and c) PSGM randomly initialized. To the best of our knowledge our work is the first that explores implicit bias in {\it projected subgradient} algorithms that minimize objective functions with sphere/orthogonality constraints.
%
\section{Dual Principal Component Pursuit and the Projected Subgradient Method}
We re-write the DPCP formulation given in (\ref{eq:dpcp}) as 

\begin{equation}
\min_{\mB\in\Rb^{D\times c'}} ~ \|\mXtilde^\top\vb\|_1 =  \|\mX^\top\vb\|_1 + \|\mO^\top\vb\|_1 ~\st~~\|\vb\|_2 = 1
\label{eq:prop_DPCP_b}
\end{equation}

 In \citet{Zhu:NIPS18},  the authors proposed a projected subgradient descent algorithm for addressing (\ref{eq:prop_DPCP_b}) that consists of 
a  subgradient step  followed by a projection onto the sphere i.e.,
\begin{equation}
\begin{split}
\vb^{k+1} = \hat{\vb}^{k} -\mu^{k}\left( \mX\sgn(\mX^{\top}\vb^{k})+ \mO\sgn(\mO^{\top}\vb^{k})\right) \ \ \text{and}\ \  \hat{\vb}^{k+1} = \Proj(\vb^{k+1}),
\end{split}
\end{equation}
where $\mu^k$ is the -adaptively updated per iteration- step size  and $\vbh^{k}$ is the unit $\ell_{2}$ norm vector corresponding to the $k$th iteration.
%

 The convergence properties of the projected subgradient algorithm described above depend on specific quantities denoted as $c_{\mX,\min}$ and $c_{\mX,\max}$ that reflect the geometry of the problem and are defined as $c_{\mX,\min} = \frac{1}{N}  \min_{\vb \in \mathbb{S}^{d-1} \cap\S}\|\mX^{\top}\vb\|_{1}$ and $c_{\mX,\max} = \frac{1}{N}  \max_{\vb\in \Sb\cap\S}\|\mX^{\top}\vb\|_{1}$.
 %
 %
  Note that the more well distributed the inliers are in the subspace $\S$ the higher the value of the quantity  $c_{\mX,\min}$  (called  as permeance statistic which first appeared in \citet{Lerman:FCM2015}) as it becomes harder to find a vector $\vb$ in the subspace $\S$ that is orthogonal to the inliers. Moreover, $c_{\mX,\min}$ and $c_{\mX,\max}$ converge to the same value as $N\rightarrow \infty$ provided the inliers are uniformly distributed in the subspace i.e., $c_{\mX,\min}\rightarrow c_{d}, c_{\mX,\max}\rightarrow c_{d}$, where $c_{d}$ is given as the average height of the unit hemisphere on $\Rb^{d}$,
\begin{equation}
c_{d} := \frac{(d-2)!!}{(d-1)!!}\Bigg\{ \begin{matrix} \frac{2}{\pi}, ~\text{if}~~d~~ \text{is ~even},\\[0.2cm]1, ~\text{if}~ d~~ \text{is odd} ~ \end{matrix}~~~\text{where } k!!=\Bigg\{ \begin{matrix} k(k-2)(k-4)\cdots4\cdot 2, ~k ~~ \text{is ~even},\\[0.2cm] k(k-2)(k-4)\cdots 3\cdot 1, ~k~~ \text{is odd} ~ \end{matrix}
\label{eq:hemisphere_vol}
\end{equation}
Similarly to $c_{\mX,\min},c_{\mX,\max}$, we will also be interested in quantities  $c_{\mO,\min},c_{\mO,\max}$ which indicate how well-distributed the outliers are in the ambient space.  These quantities are defined as $c_{\mO,\min} = \min_{\vb\in \Sb}\frac{1}{M}\|\mO^{\top}\vb\|_{1} $ and $c_{\mO,\max} = \max_{\vb\in \Sb}\frac{1}{M}\|\mO^{\top}\vb\|_{1}$.
%
 %
 $c_{\mO,\max}$ can be viewed as the {\it dual} permeance statistic and is bounded away from small values while its difference from $c_{\mO,\min}$ tends to zero as $M\rightarrow \infty$. Further, if the outliers are uniformly distributed on the sphere, then $c_{\mO,\max}\rightarrow c_{D}$ and $c_{\mO,\min}\rightarrow c_{D}$ where $c_{D}$ is defined as in (\ref{eq:hemisphere_vol}), \citep{Zhu:NIPS18}.  
   
 Finally, we also define the  quantities $ \eta_{\mO} = \frac{1}{M}  \max_{\vb\in \Sb}\|(\mI - \vb\vb^\top)\mO\sgn(\mO^{\top}\vb)\|_{2}$ and $ \eta_{\mX} = \frac{1}{M}  \max_{\vb\in \Sb}\|(\ProjS- \vb\vb^\top)\mX\sgn(\mX^{\top}\vb)\|_{2}$.
%
 As $M\rightarrow \infty$ and assuming outliers in $\mO$ are well-distributed we get $\mO\sgn(\mO^{\top}\vb)\rightarrow c_{D}\vb$  thus $\eta_{\mO}\rightarrow 0$ \citep{Tsakiris:DPCP-JMLR18}. 
  Likewise, $\eta_{\mX}\rightarrow 0$ as $N\rightarrow \infty$ provided that inliers are uniformly distributed in the  $d$-dimensional subspace. 
  The following theorem  (see full version in Appendix) provides convergence guarantees of the projected subgradient method that was proposed in \citet{Zhu:NIPS18} for addressing problem (\ref{eq:dpcp}).
 \begin{theorem}(Informal Theorem 3 of \citet{Zhu:NIPS18})
 Let $\{\vbh_{k}\}$ the sequence generated by the projected subgradient algorithm in \citet{Zhu:NIPS18}, with initialization $\vbh_{0}$ such that 
 \begin{equation}
 \theta_{0} < \arctan\left(\frac{Nc_{\mX,\min}}{N\eta_{\mX} + M\eta_{\mO}} \right)  \ \ \textrm{and} \ \ Nc_{\mX,\min}\geq N\eta_{\mX} + M\eta_{\mO}
 \label{eq:cond_conv}
 \end{equation}
 where $\theta_{0}$ denotes the principal angle of $\vb^{0}$ from $\Sperp$.
 If the step size $\mu^{k}$ is updated according to a piecewise geometrically diminishing rule given as 
  \begin{equation}
 \mu^{k} =  \left\{\begin{array}{l l}\mu^{0},& k < K_{0} \\
            \mu^{0} \beta^{\lfloor (k-K_{0})/K_{\ast}\rfloor + 1},& k\geq K_0
 \end{array}\right.
 \label{eq:geom_dimin_rule}
 \end{equation}
 where $\beta<1$, $\lfloor \cdot \rfloor$ is the floor function, then the iterates $\vb^{k}$ converge to a normal vector of $\S$.%
 \label{the:conv_psgm_informal}
 \end{theorem}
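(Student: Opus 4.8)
The plan is to monitor the principal angle $\theta_k$ between the unit iterate $\vbh^k$ and the null space $\Sperp$, show that it strictly decreases at every step while remaining below the threshold in~(\ref{eq:cond_conv}), and then compound these decreases along the schedule~(\ref{eq:geom_dimin_rule}) to conclude $\theta_k\to 0$, i.e.\ that $\vbh^k$ converges to a unit normal vector of $\S$. Since normalizing onto $\Sb$ does not change principal angles, I would first introduce the coordinates $\vbh^k=\cos\theta_k\,\s^k+\sin\theta_k\,\u^k$ with $\s^k\in\Sb\cap\Sperp$ and $\u^k\in\Sb\cap\S$ unit vectors, and note that $\tan\theta_{k+1}=\|\ProjS\vb^{k+1}\|_2/\|\ProjSperp\vb^{k+1}\|_2$, where $\vb^{k+1}=\vbh^k-\mu^k\bigl(\mX\sgn(\mX^\top\vbh^k)+\mO\sgn(\mO^\top\vbh^k)\bigr)$ is the subgradient step before normalization. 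Everything then reduces to two-sided control of the $\S$- and $\Sperp$-components of $\vb^{k+1}$.

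Next I would decompose the subgradient along the geometry captured by $c_{\mX,\min}$, $\eta_{\mX}$ and $\eta_{\mO}$. Because the inliers lie in $\S$, one has $\mX^\top\vbh^k=\sin\theta_k\,\mX^\top\u^k$, hence $\mX\sgn(\mX^\top\vbh^k)=\mX\sgn(\mX^\top\u^k)\in\S$; its component along $\u^k$ equals $\|\mX^\top\u^k\|_1\ge Nc_{\mX,\min}$, which is the restoring force that shrinks the $\S$-component of the iterate, while its transverse part within $\S$ is controlled by $\eta_{\mX}$. For the outlier term I would write $\mO\sgn(\mO^\top\vbh^k)=\|\mO^\top\vbh^k\|_1\,\vbh^k+\ve^k$ with $\|\ve^k\|_2\le M\eta_{\mO}$ and $\|\mO^\top\vbh^k\|_1\in[Mc_{\mO,\min},Mc_{\mO,\max}]$; the first term merely rescales $\vbh^k$ and therefore cancels in the ratio defining $\tan\theta_{k+1}$, so that $\ve^k$ together with the transverse inlier part are the only angle-distorting contributions. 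Plugging these in, upper-bounding the numerator and lower-bounding the denominator, leads to a one-step inequality that, to leading order in $\mu^k$, is roughly of the form
\begin{equation}
\tan\theta_{k+1}\ \le\ \frac{\sin\theta_k-\mu^k\bigl(Nc_{\mX,\min}-N\eta_{\mX}-M\eta_{\mO}\bigr)}{\cos\theta_k-\mu^k M\eta_{\mO}}\ +\ \mathcal{O}\bigl((\mu^k)^2\bigr),
\end{equation}
valid whenever $\mu^k$ is small enough that the denominator stays positive. The first hypothesis $Nc_{\mX,\min}\ge N\eta_{\mX}+M\eta_{\mO}$ makes the effective restoring term nonnegative, and the angular hypothesis $\theta_0<\arctan\!\bigl(Nc_{\mX,\min}/(N\eta_{\mX}+M\eta_{\mO})\bigr)$ is exactly what is needed to run an induction proving that (i) $\theta_k$ never exceeds the threshold and (ii) $\tan\theta_k$ strictly decreases by an amount of order $\mu^k$ as long as $\theta_k$ is not already $\mathcal{O}(\mu^k)$.

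It then remains to fold in the piecewise geometrically diminishing rule~(\ref{eq:geom_dimin_rule}). During the constant phase $k<K_0$, the per-step decrease of order $\mu^0$ drives $\theta_k$ to $\mathcal{O}(\mu^0)$ within $K_0=\mathcal{O}(1/\mu^0)$ iterations; afterwards, a routine induction over the geometric epochs of length $K_\ast$ shows that $\sin\theta_k=\mathcal{O}(\mu^k)$ is preserved within an epoch and is multiplied by $\beta<1$ across it, so that $\sin\theta_k=\mathcal{O}(\mu^k)\to 0$. Hence $\theta_k\to 0$, i.e.\ $\vbh^k\to\s$ for some $\s\in\Sb\cap\Sperp$, which is a unit normal vector of $\S$, as claimed.

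I expect the main obstacle to be making the one-step recursion rigorous under precisely the stated hypotheses rather than slightly stronger ones: this needs a careful (not merely Cauchy--Schwarz) accounting of the outlier residual $\ve^k$, exploiting that its $\S$- and $\Sperp$-components cannot both be large, tracking how the auxiliary unit vectors $\s^k,\u^k$ rotate between iterations, and absorbing the $\mathcal{O}((\mu^k)^2)$ remainder into the induction hypothesis (including separately handling the overshoot regime $\mu^k\|\mX^\top\u^k\|_1\gtrsim\sin\theta_k$). By contrast, the step-size bookkeeping in the last step is technical but follows the now-standard template for geometrically diminishing subgradient methods.
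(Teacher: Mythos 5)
This statement is not proved in the paper at all: it is an imported result, quoted informally in the main text and restated in full (with the additional hypotheses $\mu^{0}\le\mu'$ and the constants $K_{0},K_{\ast},K^{\diamondsuit}$) in the appendix as Theorem 3 of Zhu et al.\ (2018), so there is no internal proof to compare your attempt against. Your sketch reconstructs the strategy of the original proof in that reference — writing $\vbh^{k}=\cos\theta_{k}\,\s^{k}+\sin\theta_{k}\,\u^{k}$, isolating the restoring term $\|\mX^{\top}\u^{k}\|_{1}\ge Nc_{\mX,\min}$ against the perturbations controlled by $\eta_{\mX}$ and $\eta_{\mO}$, deriving a one-step contraction for $\tan\theta_{k}$, and compounding it over the constant and geometrically diminishing phases of the schedule — and the difficulties you flag (the overshoot regime, keeping the $\Sperp$-component positive, which is what the omitted hypothesis $\mu^{0}\le\mu'=\tfrac{1}{4\max\{Nc_{\mX,\min},Mc_{\mO,\max}\}}$ is for) are precisely the ones the full statement's extra conditions exist to handle.
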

%
%
\section{Dual Principal Component Pursuit in subspaces of unknown codimension}
Current theoretical results provide guarantees for recovering the true inlier subspace, when the proposed algorithms know \textit{a priori} of the subspace codimension $c$, which is a rather strong requirement and is far from being true in real word applications. Here we describe our proposed approach, which consists of removing the orthogonality constraint on $\mB$, along with a theoretical analysis that gives guarantees of recovering the true underlying subspace even when the true codimension $c$ is unknown. 
First we analyze a continuous version of DPCP, which arises when the number of inliers and outliers are distributed according to continuous measures and their number tends to $\infty$. The continuous DPCP incurs an optimization problem with a benign landscape that allows us to better illustrate the favorable properties of DPCP-PSGM when it comes to the convergence of its iterates. Then we extend the results to the discrete case that deals with a finite number of inliers and outliers yielding a more challenging optimization landscape.
\subsection{PSGM's iterates convergence in the continuous version of DPCP}
\label{sec:cont}
%
 The following lemma provides the continuous version of the discrete objective function given in (\ref{eq:prop_DPCP}).
%
\begin{lemma}
In the continuous case, the  {\it discrete} DPCP problem given in (\ref{eq:prop_DPCP}) is reformulated as,
\begin{equation}
\begin{split}
\min_{\mB\in \Rb^{D\times c'}}\sum^{c'}_{i=1} \left(p \Eb_{\muzs}[f_{\b_i}]  +  (1-p)\Eb_{\muzss}[f_{\vb_i}]\right)   = \sum^{c'}_{i=1}\|\vb_{i}\|_{2}\left(p c_{D} + (1-p) c_{d}\cos(\phi_{i})\right) \\
 \st \|\vb_{i}\|_{2}=1,~~i=1,2,\dots,c'~~~~~~~~~~~~~~~~~~~~~~~~~
\end{split}
 \label{eq:cont_obj}
\end{equation}
where $f_{\vb}: \Sb \rightarrow \Rb$, $f_{\vb}(\z) = |\z^{\top}\vb|$, $\phi_{i}$ is the principal angle of $\vb_{i}$ from the inliers subspace $\S$ and $p$ is the probability of occurrence  of an outlier.
\label{lem:cont_obj}
\end{lemma}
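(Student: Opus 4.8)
The plan is to compute the expectations appearing in the objective of (\ref{eq:prop_DPCP}) under the continuous measure, exploiting the fact that the objective decouples over the columns $\vb_i$. Fix one column $\vb \in \Sb$. The data point is drawn from a mixture: with probability $p$ it is an outlier, distributed according to $\muzs$ (uniform on the full sphere $\Sb$), and with probability $1-p$ it is an inlier, distributed according to $\muzss$ (uniform on $\Sb \cap \S$, i.e. uniform on the unit sphere of the $d$-dimensional inlier subspace). Taking expectations of $f_{\vb}(\z) = |\z^\top \vb|$ under each component and summing over $i$ is straightforward linearity; the content is in evaluating the two expectations.

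First I would handle the outlier term $\Eb_{\muzs}[f_{\vb}]$. Since $\z$ is uniform on $\Sb$ and $\|\vb\|_2 = 1$, by rotational symmetry $\Eb_{\muzs}[|\z^\top\vb|]$ does not depend on $\vb$ and equals the average of $|\cos\theta|$ over the sphere, which is precisely the ``average height of the unit hemisphere'' constant $c_D$ defined in (\ref{eq:hemisphere_vol}); this is exactly the computation recalled in Section 3 (where it is noted that $\mO\sgn(\mO^\top\vb)\to c_D\vb$ as $M\to\infty$). More generally, for $\vb$ not necessarily unit norm, $\Eb_{\muzs}[|\z^\top\vb|] = \|\vb\|_2\, c_D$ by homogeneity of $f$. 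Next I would handle the inlier term $\Eb_{\muzss}[f_{\vb}]$. Decompose $\vb = \ProjS(\vb) + \Pperp(\vb)$; since $\z \in \S$, we have $\z^\top\vb = \z^\top \ProjS(\vb)$, so $\Eb_{\muzss}[|\z^\top\vb|] = \Eb_{\muzss}[|\z^\top \ProjS(\vb)|]$. Now $\z$ is uniform on the unit sphere of the $d$-dimensional space $\S$, so by the same spherical-average argument applied in dimension $d$, this equals $\|\ProjS(\vb)\|_2\, c_d$. Writing $\phi$ for the principal angle of $\vb$ from $\S$ gives $\|\ProjS(\vb)\|_2 = \|\vb\|_2\cos\phi$, hence $\Eb_{\muzss}[f_{\vb}] = \|\vb\|_2\, c_d \cos\phi$. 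Substituting both expressions and summing over $i=1,\dots,c'$ yields the claimed reformulation; under the constraint $\|\vb_i\|_2 = 1$ the prefactor $\|\vb_i\|_2$ is $1$, which is why it is kept inside the sum (so the identity reads correctly both with and without the normalization).

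The only genuine obstacle is justifying cleanly that the spherical average of $|\cos\theta|$ in $\sR^d$ equals the constant $c_d$ in (\ref{eq:hemisphere_vol}), i.e. $\frac{1}{\mathrm{vol}(\Sb^{d-1})}\int_{\Sb^{d-1}}|u_1|\,du = c_d$. This is a standard Beta-function integral: in polar form the integral reduces to $\frac{\int_0^{\pi}|\cos\theta|\sin^{d-2}\theta\,d\theta}{\int_0^{\pi}\sin^{d-2}\theta\,d\theta}$, and evaluating numerator and denominator via the double-factorial identities for $\int_0^{\pi/2}\sin^k\theta\,d\theta$ gives exactly the even/odd case split in (\ref{eq:hemisphere_vol}). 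I would relegate this computation to a one-line citation of \citet{Tsakiris:DPCP-JMLR18} / \citet{Zhu:NIPS18}, where the same constant is used, rather than reproducing it. A secondary point worth a sentence is measurability/integrability of $f_{\vb}$ (trivial, since $f_{\vb}$ is continuous and bounded on the compact sphere) and the legitimacy of writing the mixture expectation as a convex combination with weights $p$ and $1-p$, which is just the definition of the contamination model.
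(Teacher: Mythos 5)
Your proposal is correct, but its emphasis is different from the paper's. The paper's proof spends almost all of its effort on the discrete-to-continuous passage: it writes the empirical measures $\muxd,\muod$ as sums of Dirac masses, shows that the discrete objective $\frac{1}{M+N}\sum_i\|\mXtilde^\top\vb_i\|_1$ is exactly $\sum_i\big(p\,\Eb_{\muxd}[f_{\vb_i}]+(1-p)\Eb_{\muod}[f_{\vb_i}]\big)$, and then obtains the continuous problem by replacing the empirical measures with $\muzs$ and $\muzss$; the closed-form evaluation $\|\vb\|_2\left(p\,c_D+(1-p)\,c_d\cos\phi\right)$ is entirely outsourced to Proposition~4 of \citet{Tsakiris:DPCP-JMLR18}. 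You do the opposite: you take the mixture form of the continuous objective as the definition of the contamination model (dispatching it in one sentence, which is legitimate) and instead prove the evaluation of the two expectations yourself --- rotational invariance giving $\Eb_{\muzs}[|\z^\top\vb|]=\|\vb\|_2 c_D$, and the decomposition $\z^\top\vb=\z^\top\ProjS(\vb)$ for $\z\in\S$ reducing the inlier term to the same computation in dimension $d$, giving $\|\ProjS(\vb)\|_2 c_d=\|\vb\|_2 c_d\cos\phi$. Both routes are sound and both ultimately cite the prior DPCP literature for one ingredient (you for the double-factorial spherical average, the paper for the whole closed form). Your version is more self-contained on the computational side and makes the geometric content of the constants $c_D,c_d$ explicit; the paper's version is more explicit about why the expectation form is the right continuous analogue of the finite-sample objective, which is the part your write-up compresses the most.
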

Note that $\muzs,\muzss$ are  the continuous measures associated with the outliers and inliers, respectively. Evidently, (\ref{eq:cont_obj}) attains its global minimum for vectors $\vb_{i}$s that are orthogonal to the inliers' subspace.
Based on (\ref{eq:prop_DPCP}) and due to  Lemma  \ref{lem:cont_obj}, we can now minimize the objective function of the ``continuous version'' of DPCP by employing a  projected subgradient  methods (PSGM)  that  performs the following steps per iteration \footnote{Note that $\partial \|\vb\|_{2}= \frac{\vb}{\|\vb\|_{2}}$ for $\vb\neq \mathbf{0}$ and $\|\vb_i\|_{2}\cos(\phi_{i}) = \vb^{\top}_{i}\s_{i}$  where $\s_{i} = \frac{\ProjS(\vb_{i})}{\|\ProjS(\vb_{i})\|_{2}}$. }
 \begin{equation}
 \begin{split}
 \vb^{k+1}_{i} = \vbh^{k}_{i} -\mu^{k}_{i} ( pc_{D} \vbh^{k}_{i}+ (1-p)c_{d}\s^{k}_{i}) \ \ \text{and} \ \ 
 \vbh^{k+1}_{i} = \ProjSperp(\vb^{k+1}_{i})~,i=1,2,\dots,c' 
 \end{split}
   \label{eq:cont_psgm_iters}
 \end{equation}
%
%
%
\vspace{-0.4cm}
\begin{lemma}
A projected subgradient algorithm consisting of the steps described in  (\ref{eq:cont_psgm_iters}) using a piecewise geometrically diminishing step size rule (see (\ref{eq:geom_dimin_rule}) in Theorem 1) will almost surely asymptotically converge to a matrix $\hat{{\mB}}^{\ast}\in\Rb^{D\times c'}$ whose columns $\vbh^{\ast}_{i},~i=1,2,\dots,c'$ will be normal vectors of the inliers' subspace when randomly initialized with vectors $\vb^{0}_{i}\in\Sb,~i=1,2,\dots,c'$ uniformly distributed over the sphere $\Sb$.
 \label{lem:cont_psgm}
 \end{lemma}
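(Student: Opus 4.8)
The plan is to exploit that both the objective in (\ref{eq:cont_obj}) and the updates in (\ref{eq:cont_psgm_iters}) decouple completely across the columns $\vb_{i}$, so it suffices to treat a single column: I would fix $i$, drop the subscript, and write every iterate in its orthogonal decomposition $\vbh^{k} = \cos\phi^{k}\,\s^{k} + \sin\phi^{k}\,\n^{k}$, where $\s^{k}\in\S$ and $\n^{k}\in\Sperp$ are unit vectors and $\phi^{k}\in[0,\pi/2]$ is the principal angle of $\vbh^{k}$ from $\S$, so that $\phi^{k}=\pi/2$ means precisely $\vbh^{k}\in\Sperp$. The crucial first step is a structural observation particular to the continuous problem: the subgradient direction $p c_{D}\vbh^{k} + (1-p)c_{d}\s^{k}$ in (\ref{eq:cont_psgm_iters}) has $\S$-component proportional to $\s^{k}$ and $\Sperp$-component proportional to $\n^{k}$. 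Hence, as long as the step size stays below $1/(p c_{D})$ --- which is ensured by taking $\mu^{0}$ small in the rule (\ref{eq:geom_dimin_rule}), and is vacuous when $p=0$ --- one update keeps a strictly positive coefficient along the fixed direction $\n^{0}:=\ProjSperp(\vb^{0})/\|\ProjSperp(\vb^{0})\|_{2}$ and a coefficient of either sign along $\s^{0}$, so after the normalization step $\n^{k+1}=\n^{0}$ and $\s^{k+1}=\pm\s^{0}$ for every $k$. By induction the entire trajectory stays on the great circle $\spann(\s^{0},\n^{0})\cap\Sb$, and tracking $\vbh^{k}$ reduces to a scalar recursion in the single angle $\phi^{k}$. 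This is exactly where the continuous case is easier than the discrete one: here the subgradient is exactly aligned with $\ProjS(\vbh^{k})$, whereas in the discrete problem this alignment holds only up to the error terms $\eta_{\mX},\eta_{\mO}$.

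I would then analyze this scalar recursion. It is precisely the specialization, to the population regime $\eta_{\mX}=\eta_{\mO}=0$ with $c_{\mX,\min}=c_{\mX,\max}=c_{d}$ and $c_{\mO,\min}=c_{\mO,\max}=c_{D}$, of the iteration studied in \citet{Zhu:NIPS18}, so the convergence argument behind Theorem \ref{the:conv_psgm_informal} can be reused after this substitution. Under it, the second hypothesis of (\ref{eq:cond_conv}), $Nc_{\mX,\min}\geq N\eta_{\mX}+M\eta_{\mO}$, reduces to $c_{d}\geq 0$, which always holds, and the first, $\theta_{0}<\arctan\!\big(Nc_{\mX,\min}/(N\eta_{\mX}+M\eta_{\mO})\big)$, reduces in the degenerate limit to $\theta_{0}<\arctan(+\infty)=\pi/2$, i.e., to $\vb^{0}\notin\S$ (equivalently $\|\ProjSperp(\vb^{0})\|_{2}>0$). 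With the piecewise geometrically diminishing step size, the same reasoning as in Theorem \ref{the:conv_psgm_informal} then drives the angle $\theta^{k}=\pi/2-\phi^{k}$ of $\vbh^{k}$ from $\Sperp$ down to $0$, so $\vbh^{k}\to\n^{0}\in\Sperp$, a unit normal vector of the inliers' subspace.

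The almost-sure statement then follows at once. The only requirement placed on the initialization is $\vb^{0}_{i}\notin\S$ for each $i$; since $\S$ is a proper subspace of $\Rb^{D}$, the set $\Sb\cap\S$ has measure zero on $\Sb$, so a uniformly random $\vb^{0}_{i}$ avoids it with probability one, and a union bound over the finitely many indices $i=1,\dots,c'$ shows that, almost surely, every column meets the hypothesis. Consequently, almost surely every column $\vbh^{k}_{i}$ converges to $\ProjSperp(\vb^{0}_{i})/\|\ProjSperp(\vb^{0}_{i})\|_{2}$, a unit normal vector of $\S$, and the limit matrix $\hat{\mB}^{\ast}$ has all of its columns in $\Sperp$.

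I expect the genuine work to lie in two places. The first, elementary but essential, is the invariance of the two-dimensional plane $\spann(\s^{0},\n^{0})$ established above, including the bookkeeping that allows the $\S$-component to change sign without breaking the induction and the verification that the step sizes stay below $1/(pc_{D})$; it is this observation that makes everything else collapse. The second, genuinely harder, is the convergence of the scalar recursion under the piecewise geometric step size --- in particular, ruling out that $\phi^{k}$ stalls near $\pi/2$ because of overshoot while $\mu^{k}$ is still large. This is, however, exactly the behavior that the step-size schedule in Theorem \ref{the:conv_psgm_informal} was engineered to control, so after reduction to a single angle it should amount to transcribing that argument with the idealized constants $c_{d}$ and $c_{D}$ in place of the permeance statistics and $\eta_{\mX}=\eta_{\mO}=0$.
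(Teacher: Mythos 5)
Your proof is correct and follows essentially the same route as the paper: specialize the convergence guarantee of Theorem~\ref{the:conv_psgm_informal} to the continuous case, where $\eta_{\mX},\eta_{\mO}\to 0$ and $c_{\mX,\min}\to c_{d}>0$ make condition~(\ref{eq:cond_conv}) collapse to $\theta^{i}_{0}<\pi/2$, and then note that the failure set $\{\vb^{0}_{i}\in\S\}$ has measure zero on $\Sb$. Your additional two-dimensional-plane invariance argument and the identification of the limit as $\ProjSperp(\vb^{0}_{i})/\|\ProjSperp(\vb^{0}_{i})\|_{2}$ go beyond what this lemma needs and are in fact the content of the paper's Lemma~\ref{lem:projections_b0}, which you have correctly anticipated.
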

Lemma \ref{lem:cont_psgm} allows us to claim that we can always recover $c'\geq c$ normal vectors to the inliers' subspace  using a PSGM algorithm consisting of steps given in (\ref{eq:cont_psgm_iters}). However, this does not tell the whole story yet, since our ultimate objective is to recover a matrix $\hat{\mB}$ that spans $\Sperp$. Thus, it remains to show that the rank of $\hat{\mB}$ is equal to the true  and {\it unknown} codimension of the inliers' subspace $c$. Next we prove that by initializing with a $\hat{\mB}_{0}$ such that $\rank(\hat{\mB}_{0}) = c'$ (i.e., $\hat{\mB}_0$ is initialized to be full-rank), we can guarantee that we can solve the continuous version of DPCP using PSGM and converge to a $\hat{\mB}$ such that   $\rank(\hat{\mB}) = c$ thus getting
$\spann(\hat{\mB}) \equiv \Sperp$ (along with recovering the true subspace dimension).
By projecting the PSGM iterates given in (\ref{eq:cont_psgm_iters}) onto $\Sperp$ we have,
 \begin{equation}
 \begin{split}
 \ProjSperp(\vb^{k+1} _{i}) = (1 - \mu^{k}_{i} p c_{D})\ProjSperp(\vbh^{k}_{i}) \ \ \text{and} \ \ 
 \ProjSperp(\vbh^{k+1} _{i}) = \ProjSperp(\Proj(\vb^{k+1} _{i}))
 \end{split}
 \end{equation}
 We  hence observe that the projections of successive iterates of PSGM 
 are scaled versions of the corresponding projections of the previous iterates. 
 We can now state Lemma \ref{lem:projections_b0}. 
 \begin{lemma}
 The PSGM iterates $\vbh^{k}_{i},~i=1,2,\dots,c',~k=1,2,\dots$ given in (\ref{eq:cont_psgm_iters}), when randomly initialized with $\vbh^{0}_{i}$s, $i=1,2,\dots,c'$ that are independently drawn from  a spherical distribution with unit $\ell_{2}$ norm converge almost surely to  $c'$ normal vectors of the inliers subspace $\S$ denoted as $\vbh^{\ast}_{i},~i=1,2,\dots,c'$ that are given by $\vbh^{\ast}_{i}  = \frac{\ProjSperp(\vbh^{0}_{i})}{\|\ProjSperp(\vbh^{0}_{i})\|_{2}}, ~~~~~i=1,2,\dots,c'$.%
\label{lem:projections_b0}
\end{lemma}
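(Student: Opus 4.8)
The plan is to track, across the PSGM iterations in~(\ref{eq:cont_psgm_iters}), how the component of each iterate $\vbh^k_i$ along $\Sperp$ evolves, to show that its \emph{direction} is left unchanged by the recursion, and then to combine this invariance with the convergence already established in Lemma~\ref{lem:cont_psgm}.

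First I would decompose the subgradient step as $\vb^{k+1}_i = (1 - \mu^k_i p c_D)\vbh^k_i - \mu^k_i (1-p) c_d\,\s^k_i$ and apply $\ProjSperp$ to both sides. Since $\s^k_i$ is a unit vector lying in $\S$, and $\ProjSperp$ is linear with $\ProjSperp(\s^k_i) = \vzero$, this collapses to $\ProjSperp(\vb^{k+1}_i) = (1 - \mu^k_i p c_D)\,\ProjSperp(\vbh^k_i)$; that is, the $\Sperp$-projection is merely rescaled at each subgradient step. The ensuing normalization onto the sphere rescales again by the positive factor $\|\vb^{k+1}_i\|_2^{-1}$, so $\ProjSperp(\vbh^{k+1}_i) = (1 - \mu^k_i p c_D)\|\vb^{k+1}_i\|_2^{-1}\,\ProjSperp(\vbh^k_i)$. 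For the step sizes used in the convergence analysis (in particular with $\mu^0 p c_D < 1$) every such factor is strictly positive, so unrolling the recursion gives $\ProjSperp(\vbh^k_i) = \gamma^k_i\,\ProjSperp(\vbh^0_i)$ for scalars $\gamma^k_i > 0$; in other words $\vbh^k_i$ keeps the same $\Sperp$-direction as its initialization for all $k$.

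Next I would pass to the limit. Because $\vbh^0_i$ is drawn from a continuous spherical distribution and $\S$ is a proper subspace (hence a measure-zero subset of $\Sb$), $\ProjSperp(\vbh^0_i) \neq \vzero$ almost surely, so the unit vector $\ProjSperp(\vbh^0_i)/\|\ProjSperp(\vbh^0_i)\|_2$ is well defined a.s. By Lemma~\ref{lem:cont_psgm} the iterates converge almost surely to a unit-norm normal vector $\vbh^{\ast}_i \in \Sperp$; applying $\ProjSperp$ (which is continuous and the identity on $\Sperp$) gives $\gamma^k_i\,\ProjSperp(\vbh^0_i) \to \vbh^{\ast}_i$, whence taking norms yields $\gamma^k_i \to 1/\|\ProjSperp(\vbh^0_i)\|_2 > 0$ and therefore $\vbh^{\ast}_i = \ProjSperp(\vbh^0_i)/\|\ProjSperp(\vbh^0_i)\|_2$. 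Carrying this out simultaneously for $i = 1,\dots,c'$ — a finite intersection of almost-sure events — gives the claim.

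The step needing the most care is the non-degeneracy and sign control of the scaling factors: one must know that $\vb^{k+1}_i \neq \vzero$ at every iteration, so that the normalization and hence $\gamma^k_i$ is well defined, and that $1 - \mu^k_i p c_D$ keeps a definite (positive) sign, both of which are inherited from the geometrically diminishing step-size rule and the running assumptions of Lemma~\ref{lem:cont_psgm}. The remaining ingredients — linearity and continuity of $\ProjSperp$, the membership $\s^k_i \in \S$, and the fact that the initialization is the only source of randomness — are routine.
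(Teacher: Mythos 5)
Your proposal is correct and follows essentially the same route as the paper's own proof: project the subgradient recursion onto $\Sperp$, observe that the inlier term vanishes so the $\Sperp$-component is only rescaled at each step, unroll the product of scaling factors, and combine with the almost-sure convergence from Lemma~\ref{lem:cont_psgm} to identify the limit direction. If anything, you are slightly more careful than the paper in insisting that the accumulated scaling factor be \emph{positive} (not merely nonzero) so that the limit is $+\ProjSperp(\vbh^{0}_{i})/\|\ProjSperp(\vbh^{0}_{i})\|_{2}$ rather than its negation, which is a point the paper glosses over.
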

%
%
%
Lemma ~\ref{lem:projections_b0} shows that the initialization of PSGM plays a pivotal role since it determines the direction of the recovered normal vectors $\{ \vbh^{\ast}_{i}\}_{i=1}^{c'}$. Lemmas  \ref{lem:cont_psgm} and \ref{lem:projections_b0} pave the way for  Theorem \ref{the:dpcp_cont}.
\begin{theorem}
Let $\hat{\mB^{0}}\in \Rb^{D\times c'}$ where $c'\geq c$ with $c$ denoting the true codimension of the inliers subspace $\S$, consisting of unit $\ell_{2}$ norm column vectors $\vbh^{0}_{i}\in \Sb, i=1,2,\dots,c'$ that are independently drawn from uniform distribution over the sphere $\Sb$.  A PSGM algorithm initialized with $\hat{\mB^{0}}$ will almost surely converge to a matrix $\hat{\mB}^{\ast}$ such that $\spann(\hat{\mB}^{\ast}) \equiv \Sperp$.
\label{the:dpcp_cont}
\end{theorem}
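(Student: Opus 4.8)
The strategy is to combine Lemmas \ref{lem:cont_psgm} and \ref{lem:projections_b0} with a probability-zero argument on the event that the projected initial vectors fail to span $\Sperp$. By Lemma \ref{lem:cont_psgm} the PSGM iterates almost surely converge to a matrix $\hat{\mB}^{\ast}$ whose columns $\vbh^{\ast}_{i}$ are all normal vectors of $\S$, i.e. $\spann(\hat{\mB}^{\ast}) \subseteq \Sperp$; this already gives one inclusion. By Lemma \ref{lem:projections_b0} these limit columns are explicitly $\vbh^{\ast}_{i} = \ProjSperp(\vbh^{0}_{i}) / \|\ProjSperp(\vbh^{0}_{i})\|_{2}$ for $i=1,\dots,c'$, so $\spann(\hat{\mB}^{\ast}) = \spann\{\ProjSperp(\vbh^{0}_{1}),\dots,\ProjSperp(\vbh^{0}_{c'})\}$. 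It therefore suffices to show that, with probability one, these $c'$ projected vectors span the $c$-dimensional space $\Sperp$. Equivalently, since $c'\geq c$, it suffices to show that the $D\times c'$ matrix $\ProjSperp \hat{\mB}^{0}$ has rank exactly $c$ almost surely.

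For the reverse inclusion I would argue as follows. First note that $\ProjSperp$ has rank $c$, so $\rank(\ProjSperp\hat{\mB}^{0})\le c$ deterministically. For the matching lower bound, fix an orthonormal basis $\vu_{1},\dots,\vu_{c}$ of $\Sperp$ and write $\ProjSperp \vbh^{0}_{i} = \sum_{j=1}^{c}\langle \vbh^{0}_{i},\vu_{j}\rangle \vu_{j}$; the span of the $c'$ projected vectors equals $\Sperp$ iff the $c'\times c$ coefficient matrix $G$ with entries $G_{ij}=\langle \vbh^{0}_{i},\vu_{j}\rangle$ has rank $c$. Since the $\vbh^{0}_{i}$ are i.i.d. uniform on $\Sb$, the rows of $G$ are i.i.d. and each row has a distribution that is absolutely continuous on $\R^{c}$ (it is the marginal of the uniform distribution on $\Sb$ onto a $c$-dimensional coordinate subspace, which has a density). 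The set of $c'\times c$ matrices of rank $<c$ is a proper algebraic subvariety of $\R^{c'\times c}$ (cut out by the vanishing of all $c\times c$ minors), hence has Lebesgue measure zero; because the joint law of the rows is absolutely continuous with respect to Lebesgue measure on $\R^{c'\times c}$, this bad set has probability zero. Taking the intersection with the probability-one convergence event from Lemma \ref{lem:cont_psgm} (a finite/countable intersection of full-measure events is full measure), we conclude that almost surely $\rank(\hat{\mB}^{\ast}) = c$ and $\spann(\hat{\mB}^{\ast}) = \Sperp$.

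One mild technical point worth being careful about: Lemma \ref{lem:projections_b0} is stated for the limiting columns, so I must also verify that the event ``some $\ProjSperp(\vbh^{0}_{i}) = \vzero$'' has probability zero (otherwise the normalization in the formula for $\vbh^{\ast}_{i}$ is undefined and that column could collapse). This is immediate: $\ProjSperp(\vbh^{0}_{i}) = \vzero$ exactly when $\vbh^{0}_{i}\in\S$, and $\Sb\cap\S$ is a lower-dimensional submanifold of $\Sb$, hence null for the uniform distribution. I expect the main (though still routine) obstacle to be exactly this kind of measure-theoretic bookkeeping — making precise that ``random initialization avoids a measure-zero degenerate set'' and that all the almost-sure statements can be intersected — rather than any deep geometric fact; the substantive work has already been done in Lemmas \ref{lem:cont_psgm} and \ref{lem:projections_b0}.
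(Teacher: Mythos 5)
Your proof is correct and takes essentially the same route as the paper: invoke Lemma \ref{lem:projections_b0} for the explicit limits $\vbh^{\ast}_{i} = \ProjSperp(\vbh^{0}_{i})/\|\ProjSperp(\vbh^{0}_{i})\|_{2}$ and then show that the projected initial vectors almost surely span $\Sperp$ via a measure-zero argument. If anything, your coefficient-matrix argument that $G$ has rank $c$ almost surely is more careful than the paper's, which writes $\mB^{\ast}=\mB_{\Sperp}\mB_{\Sperp}^{\top}\mB^{0}\boldsymbol{\Gamma}$ and only asserts that $\mB^{0}$ itself is almost surely full rank, without explicitly justifying that $\mB_{\Sperp}^{\top}\mB^{0}$ has rank $c$ (which is the statement actually needed, and the one you prove).
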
 
From Theorem \ref{the:dpcp_cont} we observe that in the benign scenario where inliers and outliers are distributed under continuous measures, we can recover the correct orthogonal complement of the inlier's subspace even when we are oblivious to its true codimension. Remarkably, this is achieved  by exploiting the {\it implicit bias} induced by multiple random initializations of the PSGM algorithm 
for solving the DPCP formulation  given in (\ref{eq:prop_DPCP}), which is free of orthogonality constraints.
\subsection{PSGM's iterates convergence in the discrete version of DPCP}
\label{sec:disc}
From this analysis of the continuous version of DPCP we now extend to the the discrete version, which is of more practical relevance for finite data, yet also presents more challenges. 
To begin, we reformulate the DPCP objective as follows
\begin{equation}
\sum^{c'}_{i=1}\|\mXtilde^{\top}\vb_{i} \|_{1}=\sum^{c'}_{i=1} \|\mX^{\top}\vb_{i}\|_{1} + \|\mO^{\top}\vb_{i}\|_{1}  = M \sum^{c'}_{i=1}\b^{\top}_{i}\o_{\vb_{i}}  + N \sum^{c'}_{i=1}\vb^{\top}_{i}\x_{\vb_{i}} \label{eq:prop_DPCP_disc}
\end{equation}
where $\x_{\vb_{i}} $ and $o_{\vb_{i}}$ are called as {\it average inliers} and {\it average outliers} terms, defined as
$\x_{\vb_{i}} = \frac{1}{N}\sum^{N}_{j=1}Sgn(\vb^{\top}_{i}\x_{j}) \x_{j}$ and
$\o_{\vb_{i}} = \frac{1}{M}\sum^{M}_{j=1}Sgn(\vb^{\top}_{i}\o_{j}) \o_{j}$.
%
%

In Algorithm \ref{alg:psgm_disc}, we give the projected subgradient method (DPCP-PSGM) applied on the DPCP problem given in (\ref{eq:prop_DPCP}).

\begin{algorithm}[h]
\SetAlgoLined
\KwResult{$\hat{\mB}= [\vbh^{k}_{1},\vbh^{k}_{2},\dots,\vbh^{k}_{c'}]$} 
 \text{\bf Initialize:} Randomly sample $\vbh^{1}_{0},\vbh^{2}_{0},\dots,\vbh^{c'}_{0}$ \text{from a uniform distribution on} $\Sb$ \;
 \For{$k=1,2,\dots$}{
 \For{$i=1,2,\dots,c'$}{
 \text{Update the step-size according to a specific rule}\;
  $\vb^{k+1}_{i} = \vbh^{k}_{i} -\mu^{k}_{i} ( M\o^{k}_{\vbh_{i}}+ N\x^{k}_{\vbh_{i}})  $\;
  $ \vbh^{k+1}_{i} = \Proj(\vb^{k+1}_{i}) $\;
 } 
 }
 \caption{DPCP-PSGM algorithm for solving (\ref{eq:prop_DPCP})}
 \label{alg:psgm_disc}
 \end{algorithm}
Note that the average outliers and inliers terms 
are discrete versions of the corresponding continuous average terms $c_{D}\vb_{i}$ and $c_{d}\s_{i}$ where $\s_{i} = \ProjS(\vb_{i})$, respectively, \citet{Tsakiris:DPCP-JMLR18}. We now express the sub-gradient step of Algorithm \ref{alg:psgm_disc} as
\begin{equation}
 \vb^{k+1}_{i} = \vbh^{k}_{i} -\mu^{k}_{i} \left( M (c_{D}\vbh^{k}_{i}  + \eo^{i,k} )  + N (c_{D}\s^{k}_{i} + \ex^{i,k} )\right)
 \label{eq:iter-disc}
\end{equation}
 where the quantities $\eo^{i,k} =  \o_{\vbh^{k}_{i}} - c_{D}\vbh^{k}_{i}  -$ and $\ex^{i,k} =  \x_{\vbh^{k}_{i}} - c_{D}\s^{k}_{i}  $ account for the error between the continuous and discrete versions of the average outliers and the average inliers terms, respectively. Following a similar path as in the continuous case 
we next project the iterates of (\ref{eq:iter-disc}) onto $\Sperp$,
\begin{equation}
\begin{split}
\ProjSperp(\vb^{k+1}_{i}) & = \ProjSperp( \vbh^{k}_{i}) - \mu^{k}_{i} \left(M\ProjSperp(c_{D}\bh^{k}_{i}  + \eo^{i,k}) + N \cancelto{0}{\ProjSperp(c_{D}\s^{k}_{i} + \ex^{i,k})}\right) \\
 & = (1 - \mu^{k}_{i} Mc_{D})\ProjSperp( \vbh^{k}_{i}) - \mu^{k}_{i}M\ProjSperp( \eo^{i,k}) 
 \end{split}
 \label{eq:disc_psgm_upds}
\end{equation}

{\textbf{Remark.} \it Eq. (\ref{eq:disc_psgm_upds}) reveals that DPCP-PSGM, applied on the discrete problem, gives rise to updates whose projections to $\Sperp$ are {\bf scaled} and {\bf perturbed} versions of the previous estimates. The magnitude of perturbation depends on the discrepancy between the continuous and the discrete problem.}

%
%
Recall that $\ProjSperp(\vbh^{k}_{i}) =\frac{\ProjSperp(\vb^{k}_{i})}{ \|\vb^{k}_{i}\|_{2}}$, and we can rewrite the update of the 2nd iteration of DPCP-PSGM,
\begin{equation}
\begin{split}
&\ProjSperp(\vb^{2}_{i})  =  \frac{(1 - \mu^{1}_{i} Mc_{D})}{\|\vb^{1}_{i}\|_{2}}\left((1 - \mu^{0}_{i} Mc_{D})\ProjSperp( \vbh^{0}_{i}) - \mu^{0}_{i}M\ProjSperp( \eo^{i,0}) \right) - \mu^{1}_{i}M\ProjSperp( \eo^{i,1}) \\
& =  \frac{(1 - \mu^{1}_{i} Mc_{D})(1 - \mu^{0}_{i} Mc_{D})}{\|\vb^{1}_{i}\|_{2}\|\vb^{0}_{i}\|_{2}}\ProjSperp( \bh^{0}_{i})  - \frac{(1 - \mu^{1}_{i} Mc_{D})}{\|\vb^{1}_{i}\|_{2}}\mu^{0}_{i}M\ProjSperp( \eo^{i,0}) -  \mu^{1}_{i}M\ProjSperp( \eo^{i,1})
\end{split}
\end{equation}
where we have assumed that $\|\vb^{0}_{i}\|_{2} = 1$. By repeatedly applying the same steps, we can reach to the following recursive expression for $\ProjSperp(\vb^{K}_{i})$,
\begin{equation}
\ProjSperp(\vb^{K}_{i}) = \left(\prod^{K-1}_{k=0}\frac{(1- \mu^{k}_{i}Mc_{D})}{\|\vb^{k}_{i}\|_{2}}\right)\ProjSperp(\vb^{0}_{i}) - \sum^{K-1}_{k=0} \left(\prod^{K-1}_{j=k+1} \frac{(1 - \mu^{j}_{i}Mc_{D})}{\|\vb^{j}_{i}\|_{2}}\right) \mu^{k}_{i}M\ProjSperp(\eo^{i,k})
\label{eq:psgm_updates_b}
\end{equation} 
where for $j>K-1$ we set $\prod^{K-1}_{j=k+1} \frac{(1 - \mu^{j}_{i}Mc_{D})}{\|\vb^{j}_{i}\|_{2}}=1$.
%

%
 By dividing  (\ref{eq:psgm_updates_b})  with $\prod^{K-1}_{k=0}\frac{(1- \mu^{k}_{i}Mc_{D})}{\|\vb^{k}_{i}\|_{2}}$ and by projecting onto the sphere  $\Sb$ we get
\begin{equation} 
\Proj(\ProjSperp(\vb^{K}_{i})) = \Proj(\ProjSperp( \vb^{0}_{i})  - \ProjSperp(\bdelta^{K}_{i}))
\label{eq:b_ast_disc_b}
\end{equation}
where $\bdelta_{i}$  is defined as $\bdelta^{K}_{i} = \sum^{K-1}_{k=0} \left(\prod^{k}_{j=0} \frac{\|\vb^{j}_{i}\|_{2}}{(1 - \mu^{j}_{i}Mc_{D})}\right) \mu^{k}_{i}M\eo^{i,k}$.
%

{\textbf{Assumption 1.} \it We assume that the principal angles $\theta^{i}_{0}$ for all $\vb^{0}_{i}$s satisfy the inequality $\theta^{i}_{0}< \arctan\left(\frac{Nc_{\mX,\min}}{N\eta_{\mX} + M\eta_{\mO}} \right) ~~\forall i, i=1,2,\dots,c'$.}

Assumption 1 essentially assumes that the sufficient condition given in eq. (\ref{eq:cond_conv}) required by PSGM algorithm for converging to a normal vector is satisfied which  is the same condition for success in \citet{Zhu:NIPS18}. 
Under Assumption 1 we can invoke the convergence properties of PSGM given in Theorem \ref{the:conv_psgm_informal} and get as $K\rightarrow \infty, \Proj(\ProjSperp(\vb^{K}))  \rightarrow \vbh^{\ast} \in \Sperp\cap \Sb$. That being said, 
we denote $\vbh^{\ast}_{i} = \Proj\left(\ProjSperp( \vb^{0}_{i}) -  \ProjSperp(\hat{\bdelta_{i}})\right)$,
%
where $\hat{\bdelta_{i}} = \lim_{K\rightarrow \infty} \bdelta^{K}_{i}$\footnote{For the sake of brevity we assume that the step size has been selected such that existence of the limit is guaranteed. We refer the reader to the proof Lemma 8 for further details.}.
Following the same steps as in Section \ref{sec:cont} and by defining matrices $\mB^{\ast} = [ \vb^{\ast}_{1},\vb^{\ast}_{2},\dots,\vb^{\ast}_{c'} ], \mB^{0}= [ \vb^{0}_{1},\vb^{0}_{2},\dots,\vb^{0}_{c'} ], \hat{\bDelta} = [ \hat{\bdelta}_{1},\hat{\bdelta}_{2},\dots,\hat{\bdelta}_{c'} ]$ we can express the matrix $\mB^{\ast}$ as $\hat{\mB}^{\ast} = \Proj\left(\ProjSperp \left(\mB^{0}  -\hat{ \bDelta}\right)\right)$
 where  $\mB^{\ast}$ will  now consist of normal vectors of the inliers' subspace. In order to guarantee that $\spann({\mB^{\ast}})\equiv \Sperp$ it thus suffices to ensure that $\rank \left(\mB^{\ast}\right) = c$. Here we show that a sufficient condition for this to hold is that the matrix $\mA = \mB^{0} - \hat{\bDelta}$ is full-rank.
\begin{lemma}
If   $\sigma_{c'}(\mB^{0}) > \|\hat{\bDelta} \|_{2}$ then  matrix $\mA = \mB^{0} - \hat{\bDelta}$ is full-rank.
\label{lem:suf_full_rank}
\end{lemma}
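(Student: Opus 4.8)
The plan is to prove the contrapositive-flavored statement via a one-line null-space argument, using the variational characterization of the smallest singular value: for any $\mM \in \Rb^{D \times c'}$ with $c' \le D$,
\begin{equation}
\sigma_{c'}(\mM) \;=\; \min_{\vx \in \Rb^{c'},\; \|\vx\|_2 = 1} \|\mM \vx\|_2 ,
\end{equation}
and recalling that for such a matrix ``full-rank'' means $\rank(\mM) = c'$, i.e. $\mM$ has trivial null space.

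Concretely, I would argue by contradiction. Suppose $\mA = \mB^{0} - \hat{\bDelta}$ is rank-deficient, so there exists $\vx \neq \mathbf{0}$ with $\mA \vx = \mathbf{0}$, which we may normalize to $\|\vx\|_2 = 1$. Then $\mB^{0} \vx = \hat{\bDelta} \vx$, so on one hand $\|\mB^{0} \vx\|_2 = \|\hat{\bDelta} \vx\|_2 \le \|\hat{\bDelta}\|_2 \|\vx\|_2 = \|\hat{\bDelta}\|_2$, while on the other hand the variational characterization gives $\|\mB^{0} \vx\|_2 \ge \sigma_{c'}(\mB^{0})$. Chaining these yields $\sigma_{c'}(\mB^{0}) \le \|\hat{\bDelta}\|_2$, contradicting the hypothesis $\sigma_{c'}(\mB^{0}) > \|\hat{\bDelta}\|_2$. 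Hence no such $\vx$ exists and $\mA$ is full-rank. (Equivalently, one can invoke Weyl's perturbation bound for singular values, $|\sigma_i(\mA) - \sigma_i(\mB^{0})| \le \|\mA - \mB^{0}\|_2 = \|\hat{\bDelta}\|_2$ for all $i$, applied with $i = c'$, to obtain directly $\sigma_{c'}(\mA) \ge \sigma_{c'}(\mB^{0}) - \|\hat{\bDelta}\|_2 > 0$.)

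There is essentially no obstacle in this argument itself; it is a standard perturbation fact. The only point needing a word of care is that $\hat{\bDelta}$ is genuinely well-defined, i.e. that the limit $\hat{\bdelta}_i = \lim_{K \to \infty} \bdelta^{K}_i$ exists and the resulting $\|\hat{\bDelta}\|_2$ is finite, so that the inequality in the hypothesis is meaningful; this is ensured by the step-size selection discussed just before the lemma statement (and spelled out in the proof of Lemma 8). One should also note the implicit standing assumption $c' \le D$, under which $\rank(\mA) = c'$ is the notion of full-rank in play; since we are overestimating the codimension of a proper subspace this always holds.
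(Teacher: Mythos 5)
Your proof is correct and matches the paper's: the parenthetical Weyl-perturbation argument, $\sigma_{c'}(\mA) \ge \sigma_{c'}(\mB^{0}) - \|\hat{\bDelta}\|_{2} > 0$, is precisely the proof given in the appendix, and your null-space contradiction is just an unpacking of the same standard fact. Your added remarks on the existence of $\hat{\bDelta}$ and the implicit assumption $c' \le D$ are sensible but not part of the paper's argument.
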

From Lemma \ref{lem:suf_full_rank} we can see that the success of DPCP-PSGM hinges on how well-conditioned the  matrix $\mB^{0}$ is. Specifically, it says that if a lower-bound on the smallest singular is satisfied then DPCP-PSGM is guaranteed to converge to the correct complement of the inlier without knowledge of the correct codimension $c$.
From this, we can prove the following Theorem.%
 \begin{theorem}
 Let $\B^{0}\in \Rb^{D\times c'}$ with columns randomly sampled from a unit $\ell_{2}$ norm spherical distribution where $c'\geq c$ with $c$ denoting the true codimension of the inliers subspace $\S$ that satisfies Assumption 1. If 
 \begin{equation}
 1 - C_{1}\sqrt{\frac{c'}{D}} - \frac{\epsilon}{\sqrt{D}} >\sqrt{c'}\kappa (\etao + \comax - c_{d})
 \label{the:recov_cond}
 \end{equation}
 where $\kappa = \max_{i} \frac{M\mu^{0}_{i}}{\beta^{K_{0}/K_{\ast}}(1-r_{i})}$ and $r_{i} = \frac{\left(1 + \mu^{0}_{i}\left(N (\etax + c_{\X,\max}) + M(\etao + \comax) \right)\right)}{1 - \mu^{0}_{i}Mc_{D}} \beta^{ 1/K_{\ast}}$   then  with probability at least $1-2\exp(-\epsilon^{2}C_{2})$ (where $C_{1},C_{2}$ are absolute constants), Algorithm 1 with a piecewise geometrically diminishing step size rule will converge to a matrix $\hat{\mB}^{\ast}$ such that $\spann(\hat{\mB}^{\ast}) \equiv \Sperp$. 
 \label{the:disc_recovery}
 \end{theorem}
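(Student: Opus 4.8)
The plan is to decompose the claim into three pieces and then combine them through the single scalar inequality $\sigma_{c'}(\mB^0) > \|\hat\bDelta\|_2$ that Lemma~\ref{lem:suf_full_rank} isolates: (i) the deterministic structural identity $\hat\mB^{\ast} = \Proj(\ProjSperp(\mB^0 - \hat\bDelta))$ together with the reduction of ``$\spann(\hat\mB^{\ast})\equiv\Sperp$'' to ``$\mA = \mB^0 - \hat\bDelta$ is full column rank''; (ii) a purely deterministic bound $\|\hat\bDelta\|_2 \le \sqrt{c'}\,\kappa\,(\etao + \comax - c_d)$; and (iii) a high-probability lower bound $\sigma_{c'}(\mB^0) \ge 1 - C_1\sqrt{c'/D} - \epsilon/\sqrt D$, with failure probability $\le 2\exp(-\epsilon^2 C_2)$. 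Hypothesis (\ref{the:recov_cond}) is exactly the statement that the lower bound in (iii) exceeds the upper bound in (ii); so on the event where (iii) holds we get $\sigma_{c'}(\mB^0) > \|\hat\bDelta\|_2$, hence by Lemma~\ref{lem:suf_full_rank} $\mA$ is full rank, hence $\rank(\hat\mB^{\ast}) = c$ and $\spann(\hat\mB^{\ast})\equiv\Sperp$, inheriting the probability $1 - 2\exp(-\epsilon^2 C_2)$.

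For piece (i), first use Assumption~1 so that for every $i$ the angle hypothesis of Theorem~\ref{the:conv_psgm_informal} is met; this yields that each DPCP-PSGM column $\vbh_i^k$ converges to a unit normal vector of $\S$ and, by unrolling the recursion that led to (\ref{eq:psgm_updates_b})--(\ref{eq:b_ast_disc_b}), that its limit equals $\Proj(\ProjSperp(\vb_i^0 - \hat\bdelta_i))$ with $\hat\bdelta_i = \lim_K \bdelta_i^K$ (the step size being chosen, as deferred in the footnote to the proof of Lemma~8, so that $\mu_i^j M c_D < 1$ for all $j$ and the limit exists). Stacking columns gives $\hat\mB^{\ast} = \Proj(\ProjSperp(\mB^0 - \hat\bDelta))$, whose columns already lie in the $c$-dimensional space $\Sperp$; thus $\spann(\hat\mB^{\ast})\equiv\Sperp$ is equivalent to $\rank(\hat\mB^{\ast}) = c$, which, as noted in the text before Lemma~\ref{lem:suf_full_rank}, is implied by $\mA$ having full column rank. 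Lemma~\ref{lem:suf_full_rank} is then just Weyl's inequality, $\sigma_{c'}(\mA) \ge \sigma_{c'}(\mB^0) - \|\hat\bDelta\|_2$, so the whole question is reduced to $\sigma_{c'}(\mB^0) > \|\hat\bDelta\|_2$.

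For piece (ii) I would bound each column. From $\bdelta_i^K = \sum_{k=0}^{K-1}\big(\prod_{j=0}^{k}\tfrac{\|\vb_i^j\|_2}{1-\mu_i^j M c_D}\big)\mu_i^k M\,\eo^{i,k}$ one gets $\|\hat\bdelta_i\|_2 \le M\big(\max_k\|\eo^{i,k}\|_2\big)\sum_{k\ge 0}\mu_i^k\prod_{j=0}^{k}\tfrac{\|\vb_i^j\|_2}{1-\mu_i^j M c_D}$, where $\|\eo^{i,k}\|_2 \le \etao + \comax - c_d$ (the bound appearing in (\ref{the:recov_cond}), obtained by splitting $\o_{\vbh_i^k} - c_d\vbh_i^k$ into parts parallel and orthogonal to $\vbh_i^k$ and invoking the definitions of $\comax$ and $\etao$), and $\|\vb_i^j\|_2 \le 1 + \mu_i^j\big(N(\etax + c_{\mX,\max}) + M(\etao + \comax)\big)$ from the subgradient step (\ref{eq:iter-disc}). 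Substituting, in the constant-step phase ($k < K_0$) each product factor is at most $\rho_i := \tfrac{1+\mu_i^0(N(\etax+c_{\mX,\max})+M(\etao+\comax))}{1-\mu_i^0 M c_D}$, and in block $b$ of the geometrically diminishing phase $\mu_i^k = \mu_i^0\beta^{b}$ with $b \approx (k-K_0)/K_\ast$, so the $k$th summand is $\lesssim \mu_i^0\rho_i^{K_0}(\rho_i\beta^{1/K_\ast})^{k-K_0} = \mu_i^0\rho_i^{K_0} r_i^{k-K_0}$; summing the geometric series (valid since the step-size choice makes $r_i < 1$) and using $\rho_i^{K_0} = r_i^{K_0}\beta^{-K_0/K_\ast} \le \beta^{-K_0/K_\ast}$ gives $\|\hat\bdelta_i\|_2 \le (\etao + \comax - c_d)\tfrac{M\mu_i^0}{\beta^{K_0/K_\ast}(1-r_i)}$. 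Taking the maximum over $i$ gives $\max_i\|\hat\bdelta_i\|_2 \le \kappa(\etao + \comax - c_d)$, and then $\|\hat\bDelta\|_2 \le \|\hat\bDelta\|_F \le \sqrt{c'}\max_i\|\hat\bdelta_i\|_2 \le \sqrt{c'}\,\kappa\,(\etao + \comax - c_d)$.

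For piece (iii), write $\vb_i^0 = \vg_i/\|\vg_i\|_2$ with $\vg_i$ i.i.d. $\mathcal N(0,\I_D)$, so $\mB^0 = \mG\,\diag(\|\vg_i\|_2)^{-1}$ and $\sigma_{c'}(\mB^0) \ge \sigma_{\min}(\mG)/\max_i\|\vg_i\|_2$; combining the Davidson--Szarek/Vershynin bound $\sigma_{\min}(\mG) \ge \sqrt D - \sqrt{c'} - t$ with a union bound $\max_i\|\vg_i\|_2 \le \sqrt D + O(\sqrt{\log c'}) + s$, each failing with probability exponentially small in its deviation parameter, and absorbing constants, yields $\sigma_{c'}(\mB^0) \ge 1 - C_1\sqrt{c'/D} - \epsilon/\sqrt D$ with failure probability $\le 2\exp(-\epsilon^2 C_2)$ (equivalently, one may apply directly the non-asymptotic smallest-singular-value bound for matrices with independent columns of fixed norm $\sqrt D$). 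I expect piece (ii) to be the main obstacle: correctly tracking the infinite product across the constant-step and geometrically-diminishing phases, and in particular pinning down that $r_i < 1$ is the right convergence condition and that the $\beta^{K_0/K_\ast}$ factor is exactly what the constant phase contributes, is the delicate bookkeeping; pieces (i) and (iii) are routine given Theorem~\ref{the:conv_psgm_informal}, Lemma~\ref{lem:suf_full_rank}, and standard random-matrix estimates. A minor side point is to confirm that the step-size parameters are taken so that $\mu_i^0 M c_D < 1$ and $r_i < 1$, which is what guarantees the limits $\hat\bdelta_i$ exist.
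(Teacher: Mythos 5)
Your proposal is correct and follows essentially the same route as the paper: reduce $\spann(\hat{\mB}^{\ast})\equiv\Sperp$ to $\sigma_{c'}(\mB^{0})>\|\hat{\bDelta}\|_{2}$ via Lemma~\ref{lem:suf_full_rank}, bound $\|\hat{\bDelta}\|_{2}$ deterministically by $\sqrt{c'}\kappa(\etao+\comax-c_{d})$ through the per-column geometric-series argument (the paper's Lemmas 7--8), and lower-bound $\sigma_{c'}(\mB^{0})$ with high probability via the sub-gaussian singular-value estimate (the paper applies Vershynin's Theorem 5.58 directly to the rescaled columns, which is the ``equivalently'' alternative you mention, rather than your Gaussian-normalization-plus-union-bound variant). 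The only cosmetic differences are that the paper bounds $\|\hat{\bDelta}\|_{2}$ via the operator-norm definition and $\|\x\|_{1}\le\sqrt{c'}\|\x\|_{2}$ where you use the Frobenius norm, yielding the same constant.
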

 Note that quantities $\beta, K_{\ast}, K_{0}$ are used in the  step-size update rule  that is used as defined in (\ref{eq:geom_dimin_rule}) (See also full version of Theorem 1 in Appendix)).
 Theorem \ref{the:disc_recovery} shows that we can randomly initialize DPCP-PSGM, with a matrix $\hat{\mB}^{0}$ whose number of columns $c'$ is an overestimate of the true codimension $c$ of the inliers' subspace  and with columns sampled {\it independently} by a uniform distribution over the unit sphere and recover a matrix that will span the orthogonal complement of $\S$. The probability of success depends on the geometry of the problem since  condition  (\ref{the:recov_cond}) is trivially satisfied (RHS of  (\ref{the:recov_cond}) tends to 0 since $\etao \rightarrow 0$ and $\comax\rightarrow c_{d}$) in the continuous case which incurs a benign geometry. Moreover, the a less benign geometry would  increase the value of $(\etao + \comax - c_{d})$ thus requiring a smaller initial codimension $c'$ that would lead to larger values the LHS of (\ref{the:recov_cond}).
%
\section{Numerical Simulations} 
\label{sec:num_sims}
In this section we demonstrate the effectiveness of the proposed DPCP formulation and the derived DPCP-PSGM algorithm in recovering orthogonal complements of subspace of unknown codimension. We compare the proposed algorithm  with previously developed methods i.e., DPCP-IRLS \citet{Tsakiris:DPCP-JMLR18} and the Riemannian Subgradient Method (RSGM) \citet{Zhu:NIPS19}. Recall that both DPCP-IRLS and RSGM address DPCP problem by enforcing orthogonality constraints, and thus both algorithms are quite sensitive if the true codimension of $\S$ is not known \textit{a priori}.  Further, they are both initialized using of spectral initialization i.e., $\hat{\mB}^0\in \Rb^{D\times c'}$ which contains the  first $c'$ eigenvectors of matrix $\Xtilde\Xtilde^\top$ as its columns, as proposed in \cite{Tsakiris:DPCP-JMLR18,Zhu:NIPS19}.

\noindent\textbf{Robustness to outliers in the unknown codimension regime.}
In this experiment we set the dimension of the ambient space to $D=200$. We randomly generate $N$ inliers uniformly distributed with unit $\ell_{2}$ norm in a $d=195$ dimensional subspace (hence for its codimension we have $c=D-d=5$). Following a similar process we generate $M$ outliers that live in the ambient space and are sampled from a uniform distribution over the unit sphere. Fig. \ref{fig:phase_trans}, illustrates the distances (see Appendix) of the recovered matrix $\hat\mB$ as obtained by the proposed DPCP-PSGM algorithm initialized with an overestimate $c'=10$ of the true codimension $c$ codimension and two versions of RSGM i.e.,    RSGM when it is given as input true $c=5$ and RSGM when being incognizant of $c$ and hence it initialized with a $c'=10$ of $c$
As is shown in Fig. \ref{fig:phase_trans} (right), RSGM fails to recover the correct orthogonal complement of $\S$ when it is provided with an overestimate of the true $c$ which is attributed to spectral initialization and the imposed orthogonality constraints. On the contrary, DPCP-PSGM 
displays a remarkably robust behavior (Fig. \ref{fig:phase_trans}(middle)) even without knowing the true value of $c$, performing similarly to RSGM when the latter knows beforehand the correct codimension (Fig. \ref{fig:phase_trans}(left)).
\begin{figure}
\centering
\begin{tabular}{ccc}
\includegraphics[width=0.32\textwidth,height=0.22\textwidth]{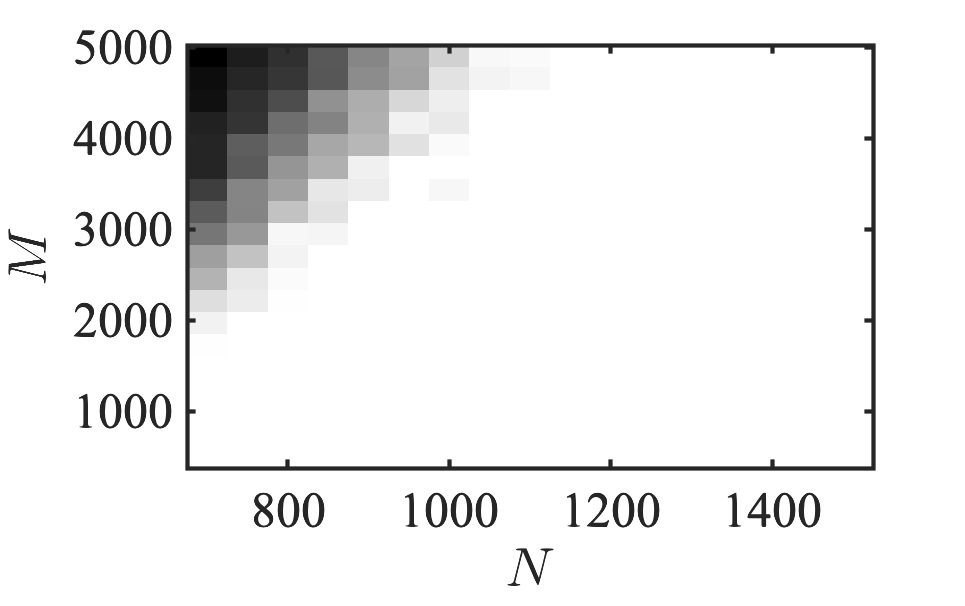} & \includegraphics[width=0.32\textwidth,height=0.22\textwidth]{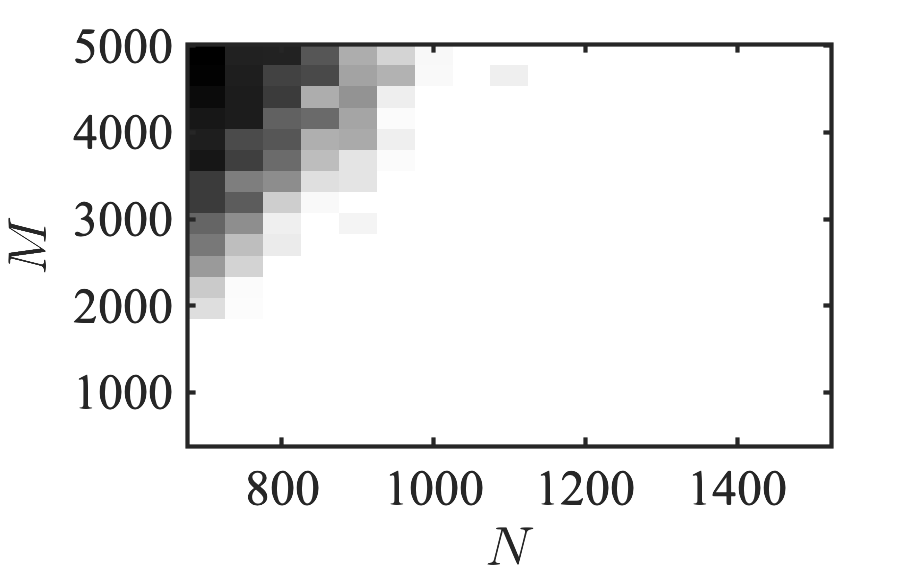} & \includegraphics[width=0.32\textwidth,height=0.22\textwidth]{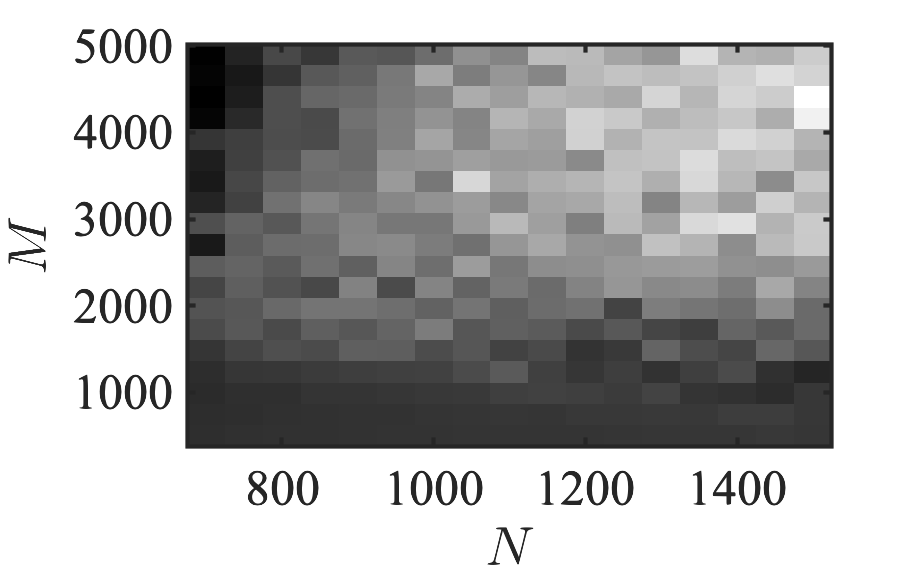}
\end{tabular}
\vspace{-0.6cm}
\caption{Distances of the recovered $\hat\mB$ from the true orthogonal complements $\Sperp$ as recovered by the proposed DPCP-PSGM algorithm  provided an overestimated of the true $c$ i.e., $c'=10$ (left), RSGM provided the true $c$ (middle) and  RSGM provided $c'=10$ (right). Darker colors reflect higher values of distances while lighter colors indicate successful recoveries of $\mB$. }
\label{fig:phase_trans}
\vspace{-0.0cm}
\end{figure}

\noindent\textbf{Recovery of the true codimension.}
Here we test  DPCP-PSGM on the recovery of the true codimension $c$ of the inliers' subspace $\S$. Again, we set $D=200$ and  follow the same process described above for generating $N=1500$ inliers. We vary the true codimension of  $\S$ from $c=10$ to $20$ and consider two different outlier's ratios $r$, defined as $r = \frac{M}{M+N}$, namely $r=0.6$ and $r=0.7$. 
In both cases, DPCP-PSGM is initialized with the same overestimate of  $c$ i.e., $c'=30$. In Fig. \ref{fig:var_codim} we report the estimated codimensions obtained by DPCP-PSGM for 10 independent trials of the experiments. It can be observed that DPCP-PSGM achieves 100\% for all different codimensions for $r=0.6$. Moreover, it shows a remarkable performance in estimating  the correct $c$'s even in the more challenging case corresponding to outliers' ratios equal to 0.7. with the estimated codimensions being close to the true values even in the cases that it fails to exactly compute $c$. The results corroborate the theory showing that the DPCP-PSGM with random initialization biases the solutions of $\hat{\B}$ towards matrices with rank  $c$.
\begin{wrapfigure}{r}{0.65\textwidth}
\centering
\begin{tabular}{cc}
\includegraphics[width=0.3\textwidth]{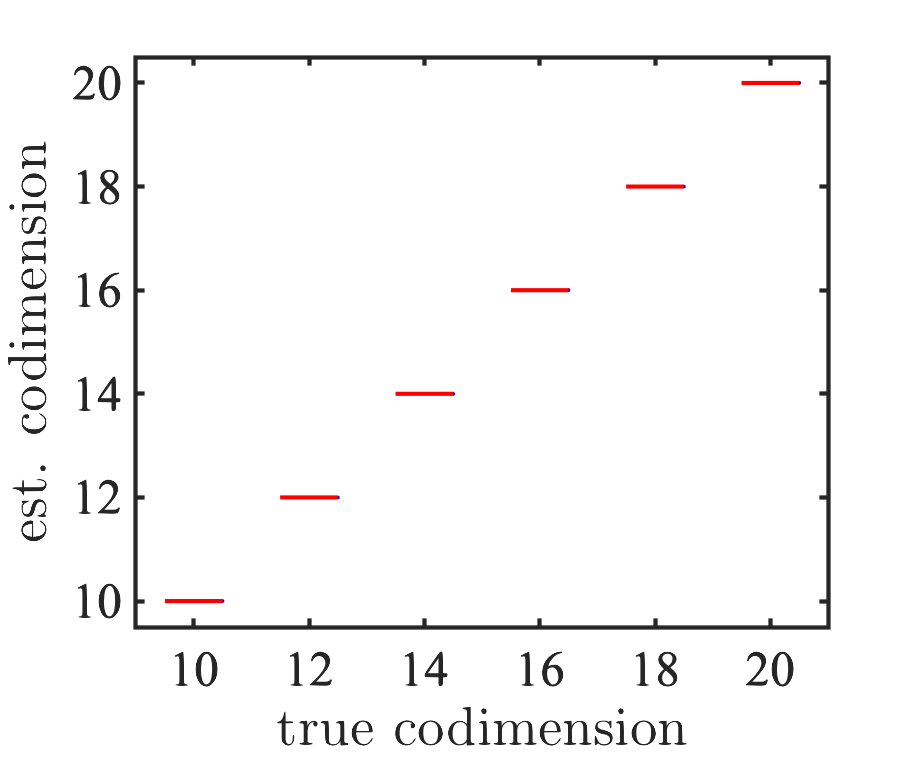} & \includegraphics[width=0.3\textwidth]{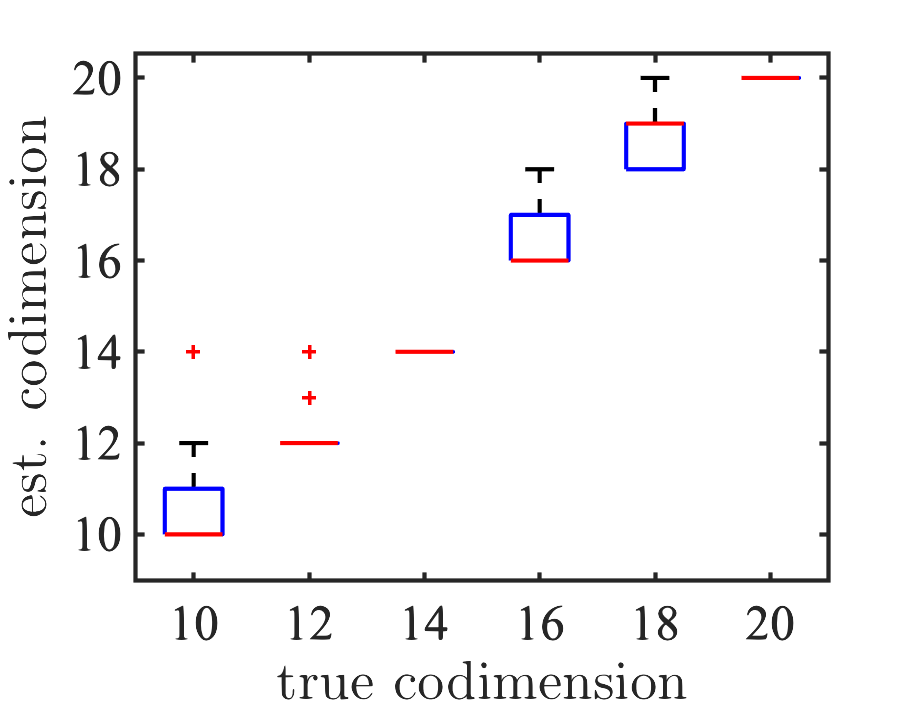}
\end{tabular}
\caption{Estimated by DPCP-PSGM codimensions for two different outliers' ratios $r=\frac{M}{M+N}$ (a) $r=0.6$ (left) and (b) $r=0.7$ (right)}
\vspace{-1.5cm}
\label{fig:var_codim}
\end{wrapfigure}

\noindent{\textbf{A hyperspectral imaging experiment}} See Appendix B.1. 

\section{Conclusions}
We proposed a simple framework which allows us to perform robust subspace recovery without requiring a priori knowledge of the subspace codimension. This is based on Dual Principal Component Pursuit (DPCP) and thus is amenable to handling subspaces of high relative dimensions. We observed that a projected subgradient methods (PSGM) induces implicit bias and  converges to a matrix that spans a basis of the orthogonal complement of the inliers subspace even as long as a) we overestimate it codimension, b) lift orthogonality constraints enforced in previous DPCP formulations and c) use random initialization. We provide empirical results that corroborate the developed theory and showcase the merits of our approach.

\paragraph{Ethics Statement} This work focuses on theoretical aspects of robust subspace recovery problem which is a well-established topic in machine learning research. The research conducted in the framework of this work raises no ethical issues or any violations vis-a-vis the ICLR Code of Ethics.



\bibliography{iclr2022_DPCP}

\begin{thebibliography}{23}
\providecommand{\natexlab}[1]{#1}
\providecommand{\url}[1]{\texttt{#1}}
\expandafter\ifx\csname urlstyle\endcsname\relax
  \providecommand{\doi}[1]{doi: #1}\else
  \providecommand{\doi}{doi: \begingroup \urlstyle{rm}\Url}\fi

\bibitem[Bai et~al.(2019)Bai, Jiang, and Sun]{bai2018subgradient}
Yu~Bai, Qijia Jiang, and Ju~Sun.
\newblock Subgradient descent learns orthogonal dictionaries.
\newblock In \emph{International Conference on Learning Representations}, 2019.
\newblock URL \url{https://openreview.net/forum?id=HklSf3CqKm}.

\bibitem[Cand{\`e}s et~al.(2011)Cand{\`e}s, Li, Ma, and
  Wright]{candes2011robust}
Emmanuel~J Cand{\`e}s, Xiaodong Li, Yi~Ma, and John Wright.
\newblock Robust principal component analysis?
\newblock \emph{Journal of the ACM (JACM)}, 58\penalty0 (3):\penalty0 1--37,
  2011.

\bibitem[Ding et~al.(2021)Ding, Zhu, Vidal, and Robinson]{Ding:DPCP-ICML21}
Tianyu Ding, Zhihui Zhu, Rene Vidal, and Daniel~P Robinson.
\newblock Dual principal component pursuit for robust subspace learning: Theory
  and algorithms for a holistic approach.
\newblock In Marina Meila and Tong Zhang (eds.), \emph{Proceedings of the 38th
  International Conference on Machine Learning}, volume 139 of
  \emph{Proceedings of Machine Learning Research}, pp.\  2739--2748. PMLR,
  18--24 Jul 2021.
\newblock URL \url{https://proceedings.mlr.press/v139/ding21b.html}.

\bibitem[Fischler \& Bolles(1981)Fischler and Bolles]{fischler1981random}
Martin~A Fischler and Robert~C Bolles.
\newblock Random sample consensus: a paradigm for model fitting with
  applications to image analysis and automated cartography.
\newblock \emph{Communications of the ACM}, 24\penalty0 (6):\penalty0 381--395,
  1981.

\bibitem[Giampouras et~al.(2019)Giampouras, Rontogiannis, and
  Koutroumbas]{paris:tsp2019}
Paris~V. Giampouras, Athanasios~A. Rontogiannis, and Konstantinos~D.
  Koutroumbas.
\newblock Alternating iteratively reweighted least squares minimization for
  low-rank matrix factorization.
\newblock \emph{IEEE Transactions on Signal Processing}, 67\penalty0
  (2):\penalty0 490--503, 2019.
\newblock \doi{10.1109/TSP.2018.2883921}.

\bibitem[Gunasekar et~al.(2018)Gunasekar, Lee, Soudry, and Srebro]{gunasekar18}
Suriya Gunasekar, Jason Lee, Daniel Soudry, and Nathan Srebro.
\newblock Characterizing implicit bias in terms of optimization geometry.
\newblock In Jennifer Dy and Andreas Krause (eds.), \emph{Proceedings of the
  35th International Conference on Machine Learning}, volume~80 of
  \emph{Proceedings of Machine Learning Research}, pp.\  1832--1841. PMLR,
  10--15 Jul 2018.
\newblock URL \url{https://proceedings.mlr.press/v80/gunasekar18a.html}.

\bibitem[Jolliffe \& Cadima(2016)Jolliffe and Cadima]{jolliffe2016principal}
Ian~T Jolliffe and Jorge Cadima.
\newblock Principal component analysis: a review and recent developments.
\newblock \emph{Philosophical Transactions of the Royal Society A:
  Mathematical, Physical and Engineering Sciences}, 374\penalty0
  (2065):\penalty0 20150202, 2016.

\bibitem[Lerman \& Maunu(2018)Lerman and Maunu]{lerman2018overview}
Gilad Lerman and Tyler Maunu.
\newblock An overview of robust subspace recovery.
\newblock \emph{Proceedings of the IEEE}, 106\penalty0 (8):\penalty0
  1380--1410, 2018.

\bibitem[Lerman et~al.(2015)Lerman, McCoy, Tropp, and Zhang]{Lerman:FCM2015}
Gilad Lerman, Michael~B McCoy, Joel~A Tropp, and Teng Zhang.
\newblock Robust computation of linear models by convex relaxation.
\newblock \emph{Foundations of Computational Mathematics}, 15\penalty0
  (2):\penalty0 363--410, 2015.

\bibitem[Maunu et~al.(2019)Maunu, Zhang, and Lerman]{maunu2019well}
Tyler Maunu, Teng Zhang, and Gilad Lerman.
\newblock A well-tempered landscape for non-convex robust subspace recovery.
\newblock \emph{Journal of Machine Learning Research}, 20\penalty0 (37), 2019.

\bibitem[McCoy \& Tropp(2011)McCoy and Tropp]{trop:ejs2011}
Michael McCoy and Joel~A Tropp.
\newblock Two proposals for robust {PCA} using semidefinite programming.
\newblock \emph{Electronic Journal of Statistics}, 5:\penalty0 1123--1160,
  2011.

\bibitem[Qu et~al.(2020)Qu, Zhu, Li, Tsakiris, Wright, and
  Vidal]{qu2020finding}
Qing Qu, Zhihui Zhu, Xiao Li, Manolis~C Tsakiris, John Wright, and Ren{\'e}
  Vidal.
\newblock Finding the sparsest vectors in a subspace: Theory, algorithms, and
  applications.
\newblock \emph{arXiv preprint arXiv:2001.06970}, 2020.

\bibitem[Rahmani \& Atia(2017)Rahmani and Atia]{pmlr-v70-rahmani17a}
Mostafa Rahmani and George Atia.
\newblock Coherence pursuit: Fast, simple, and robust subspace recovery.
\newblock In Doina Precup and Yee~Whye Teh (eds.), \emph{Proceedings of the
  34th International Conference on Machine Learning}, volume~70 of
  \emph{Proceedings of Machine Learning Research}, pp.\  2864--2873. PMLR,
  06--11 Aug 2017.
\newblock URL \url{https://proceedings.mlr.press/v70/rahmani17a.html}.

\bibitem[Tsakiris \& Vidal(2018)Tsakiris and Vidal]{Tsakiris:DPCP-JMLR18}
Manolis~C. Tsakiris and Ren{\'e} Vidal.
\newblock Dual principal component pursuit.
\newblock \emph{Journal of Machine Learning Research}, 19\penalty0
  (18):\penalty0 1--50, 2018.

\bibitem[Vershynin(2010)]{Vershynin:arxiv2010}
Roman Vershynin.
\newblock Introduction to the non-asymptotic analysis of random matrices.
\newblock \emph{arXiv preprint arXiv:1011.3027}, 2010.

\bibitem[Vidal et~al.(2016)Vidal, Ma, and
  Sastry]{Rene:generalizedPCA-book-2015}
Ren{\'e} Vidal, Yi~Ma, and S~Shankar Sastry.
\newblock \emph{Generalized principal component analysis}, volume~5.
\newblock Springer, 2016.

\bibitem[Xu et~al.(2012)Xu, Caramanis, and Sanghavi]{xu2012robust}
Huan Xu, Constantine Caramanis, and Sujay Sanghavi.
\newblock Robust {PCA} via outlier pursuit.
\newblock \emph{IEEE Transactions on Information Theory}, 58\penalty0
  (5):\penalty0 3047--3064, 2012.

\bibitem[You et~al.(2017{\natexlab{a}})You, Robinson, and Vidal]{You:CVPR17}
C.~You, D.~Robinson, and R.~Vidal.
\newblock Provable self-representation based outlier detection in a union of
  subspaces.
\newblock In \emph{{IEEE} Conference on Computer Vision and Pattern
  Recognition}, pp.\  4323--4332, 2017{\natexlab{a}}.

\bibitem[You et~al.(2017{\natexlab{b}})You, Robinson, and
  Vidal]{you2017provable}
Chong You, Daniel~P Robinson, and Ren{\'e} Vidal.
\newblock Provable self-representation based outlier detection in a union of
  subspaces.
\newblock In \emph{Proceedings of the IEEE Conference on Computer Vision and
  Pattern Recognition}, pp.\  3395--3404, 2017{\natexlab{b}}.

\bibitem[You et~al.(2020)You, Zhu, Qu, and Ma]{chong20}
Chong You, Zhihui Zhu, Qing Qu, and Yi~Ma.
\newblock Robust recovery via implicit bias of discrepant learning rates for
  double over-parameterization.
\newblock In H.~Larochelle, M.~Ranzato, R.~Hadsell, M.~F. Balcan, and H.~Lin
  (eds.), \emph{Advances in Neural Information Processing Systems}, volume~33,
  pp.\  17733--17744. Curran Associates, Inc., 2020.
\newblock URL
  \url{https://proceedings.neurips.cc/paper/2020/file/cd42c963390a9cd025d007dacfa99351-Paper.pdf}.

\bibitem[Zhang \& Lerman(2014)Zhang and Lerman]{zhang2014novel}
Teng Zhang and Gilad Lerman.
\newblock A novel m-estimator for robust {PCA}.
\newblock \emph{The Journal of Machine Learning Research}, 15\penalty0
  (1):\penalty0 749--808, 2014.

\bibitem[Zhu et~al.(2019)Zhu, Ding, Tsakiris, Robinson, and Vidal]{Zhu:NIPS19}
Z.~Zhu, T.~Ding, M.~C. Tsakiris, D.~P. Robinson, and R.~Vidal.
\newblock A linearly convergent method for non-smooth non-convex optimization
  on the grassmannian with applications to robust subspace and dictionary
  learning.
\newblock In \emph{Neural Information Processing Systems {(NIPS)}}, 2019.

\bibitem[Zhu et~al.(2018)Zhu, Wang, Robinson, Naiman, Vidal, and
  Tsakiris]{Zhu:NIPS18}
Zhihui Zhu, Yifan Wang, Daniel Robinson, Daniel Naiman, Ren\'{e} Vidal, and
  Manolis Tsakiris.
\newblock Dual principal component pursuit: Improved analysis and efficient
  algorithms.
\newblock In \emph{Advances in Neural Information Processing Systems 2018},
  volume~31. Curran Associates, Inc., 2018.
\newblock URL
  \url{https://proceedings.neurips.cc/paper/2018/file/af21d0c97db2e27e13572cbf59eb343d-Paper.pdf}.

\end{thebibliography}
\bibliographystyle{iclr2022_conference}

\appendix
\section{Appendix}
\setcounter{theorem}{0}
 \begin{theorem}(Theorem 3 of \citet{Zhu:NIPS18})
 Let $\{\vbh_{k}\}$ the sequence generated by the projected subgradient method in \citet{Zhu:NIPS18}, with initialization $\vbh_{0}$ such that 
 \begin{equation}
 \theta_{0} < \arctan\left(\frac{Nc_{\mX,\min}}{N\eta_{\mX} + M\eta_{\mO}} \right) 
 \end{equation}
 where $\theta_{0}$ denotes the principal angle of $\vb^{0}$ from $\Sperp$,
 and 
 \begin{equation}
 Nc_{\mX,\min}\geq N\eta_{\mX} + M\eta_{\mO}
 \end{equation}
 Let $\mu' \coloneqq \frac{1}{4\max\{ Nc_{\mX,\min},Mc_{\mO,\max} \}}$. If  $\mu^{0}\leq \mu'$ and the step size $\mu^{k}$ is updated according to a piece-wise geometrically diminishing rule given as 
  \begin{equation}
 \mu^{k} =  \left\{\begin{array}{l l}\mu^{0},& k < K_{0} \\
            \mu^{0} \beta^{\lfloor (k-K_{0})/K_{\ast}\rfloor + 1},& k\geq K_0
 \end{array}\right.
 \end{equation}
 where $\beta<1$, $\lfloor \cdot \rfloor$ is the floor function, and $K_{0},K_{\ast} \in \mathbb{N}$  are chosen such that
 \begin{equation}
 \begin{split}
& K_{0}\geq K^{\diamondsuit}(\mu^{0}),\\ \nonumber
& K_{\ast} \geq \left(\sqrt{2}\beta\mu'\left(N c_{\mX} - \left(N\eta_{\mX}  + M\eta_{\mO}\right)\right) \right)^{-1}
 \end{split}
 \end{equation}
 where,
 \begin{equation}
 K^{\diamondsuit}(\mu) \coloneqq \frac{\tan(\theta_{0})}{\mu\left(Nc_{\mX,\min} - \max\{1,\tan(\theta_{0}\}\left(N\eta_{\mX} + M\eta_{\mO} \right)\right)}
 \end{equation}
 then for the angle $\theta_{k}$ between $\vbh^{k}$ and $\Sperp$ it holds
 \begin{equation}
 \tan(\theta_{0}) \leq \left\{ 
 \begin{array}{l l}
  \max\{\tan(\theta_{0}),\frac{\mu_{0}}{\sqrt{2}\mu'}\}, &  k<K_{0}\\
 \frac{\mu^{0}}{\sqrt{2}\mu'}\beta^{\lfloor (k-K_{0})/K_{\ast}\rfloor},&k\geq K_{0}
  \end{array} \right.
 \end{equation}
 \label{the:conv_psgm}
 \end{theorem}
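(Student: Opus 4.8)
\noindent Since the claim is the full statement of Theorem~3 of \citet{Zhu:NIPS18}, the shortest route is simply to invoke that reference; the argument underlying it proceeds by tracking the principal angle $\theta_k$ between the iterate $\vbh^k$ and $\Sperp$ and showing that one projected subgradient step contracts $\tan\theta_k$ down to an additive ``floor'' proportional to the current step size and to the error quantities $\eta_{\mX},\eta_{\mO}$. First I would use the orthogonal decomposition $\vbh^k = \cos\theta_k\,\n^k + \sin\theta_k\,\s^k$ with $\n^k\in\Sperp\cap\Sb$ and $\s^k\in\S\cap\Sb$, and study the subgradient $\vg^k = \mX\sgn(\mX^\top\vbh^k) + \mO\sgn(\mO^\top\vbh^k)$ under this split. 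Because the columns of $\mX$ lie in $\S$, we get $\mX^\top\vbh^k = \sin\theta_k\,\mX^\top\s^k$, hence $\sgn(\mX^\top\vbh^k) = \sgn(\mX^\top\s^k)$ and $\langle\s^k,\mX\sgn(\mX^\top\vbh^k)\rangle = \|\mX^\top\s^k\|_1 \ge N c_{\mX,\min}$ (using the definition of $c_{\mX,\min}$ and $\s^k\in\mathbb{S}^{d-1}\cap\S$), while the part of $\mX\sgn(\mX^\top\vbh^k)$ lying in $\S$ but orthogonal to $\vbh^k$ has norm $\le N\eta_{\mX}$; likewise, splitting $\mO\sgn(\mO^\top\vbh^k)$ into its component along $\vbh^k$ (with coefficient $\|\mO^\top\vbh^k\|_1\in[Mc_{\mO,\min},Mc_{\mO,\max}]$) plus its component orthogonal to $\vbh^k$ (norm $\le M\eta_{\mO}$) isolates the relevant terms.

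Projecting the subgradient step $\vb^{k+1} = \vbh^k - \mu^k\vg^k$ onto $\S$ and onto $\Sperp$ and using these estimates gives, schematically,
\begin{align*}
\|\ProjS(\vb^{k+1})\|_2 &\lesssim \sin\theta_k - \mu^k\big(N c_{\mX,\min} - N\eta_{\mX} - M\eta_{\mO}\big), \\
\|\ProjSperp(\vb^{k+1})\|_2 &\gtrsim \cos\theta_k\big(1-\mu^k M c_{\mO,\max}\big) - \mu^k M\eta_{\mO},
\end{align*}
so that after renormalizing onto $\Sb$, $\tan\theta_{k+1}$ is bounded by the ratio of these two quantities; the cap $\mu^0\le\mu' = \tfrac{1}{4\max\{Nc_{\mX,\min},Mc_{\mO,\max}\}}$ keeps the denominator bounded away from zero. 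The initialization hypothesis $\tan\theta_0 < Nc_{\mX,\min}/(N\eta_{\mX}+M\eta_{\mO})$ together with $Nc_{\mX,\min}\ge N\eta_{\mX}+M\eta_{\mO}$ is precisely what forces the numerator below $\sin\theta_k$ whenever $\tan\theta_k$ exceeds the floor $\approx \mu^k/(\sqrt2\mu')$, so $\tan\theta_k$ is non-increasing and in fact drops by a fixed multiple of $\mu^k$ per step until it reaches that floor.

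The proof then splits along the step-size rule. For $k<K_0$ the step size is the constant $\mu^0$, and iterating the ``drop by a fixed multiple of $\mu^0$ per step'' estimate shows $\tan\theta_k$ has fallen to $\mu^0/(\sqrt2\mu')$ within $K^{\diamondsuit}(\mu^0)$ iterations; the hypothesis $K_0\ge K^{\diamondsuit}(\mu^0)$ ensures this level is reached by iteration $K_0$. For $k\ge K_0$ one argues by induction over the blocks of length $K_\ast$ on which $\mu^k$ is constant: inside such a block $\tan\theta_k$ drops by roughly $K_\ast\mu^k\big(Nc_{\mX,\min} - N\eta_{\mX} - M\eta_{\mO}\big)$, and the stated lower bound on $K_\ast$ is exactly what makes this decrease suffice to push $\tan\theta_k$ down to the floor of the next, $\beta$-smaller step size; since the floor scales linearly in the step size, the bound on $\tan\theta_k$ gains a factor $\beta$ per block, giving the advertised $\tfrac{\mu^0}{\sqrt2\mu'}\beta^{\lfloor(k-K_0)/K_\ast\rfloor}$ decay, i.e.\ $\theta_k\to 0$ at a linear rate.

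The main obstacle I expect is the one-step angle recursion itself. Within the $\S$-component one has to balance carefully the genuine descent term $-\mu^k\|\mX^\top\s^k\|_1$ (lower-bounded through the permeance statistic $c_{\mX,\min}$), the error inflation $\mu^k(N\eta_{\mX}+M\eta_{\mO})$, and the extra shrinkage contributed by the $\sin\theta_k\|\mO^\top\vbh^k\|_1$ piece of the along-$\vbh^k$ outlier term; at the same time one must lower-bound the $\Sperp$-component, which is scaled by $(1-\mu^k Mc_{\mO,\max})$ and perturbed by at most $\mu^k M\eta_{\mO}$ in norm. Getting all these inequalities tight enough that the two-phase bookkeeping closes with the constants $\mu'$, $K_0$, $K_\ast$ emerging exactly as stated is the delicate part; the remainder is essentially deterministic accounting over the step-size schedule.
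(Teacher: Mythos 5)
The paper gives no proof of this statement---it is quoted verbatim as Theorem~3 of \citet{Zhu:NIPS18} and used as an imported result---so your opening move of simply invoking that reference is exactly what the paper does. Your accompanying sketch of the underlying argument (the $\tan\theta_k$ recursion obtained by decomposing the iterate and subgradient along $\S$ and $\Sperp$, the floor proportional to $\mu^k/(\sqrt{2}\mu')$, and the two-phase bookkeeping over the constant and geometrically diminishing portions of the step-size schedule) faithfully matches the proof given in that reference, so there is nothing to correct.
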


\subsection{Proof of Lemma 2}

\begin{lemma}
In the continuous case, the  {\it discrete} DPCP problem given in (\ref{eq:prop_DPCP}) is reformulated as,
\begin{equation}
\begin{split}
\min_{\mB\in \Rb^{D\times c'}}\sum^{c'}_{i=1} \left(p \Eb_{\muzs}[f_{\b_i}]  +  (1-p)\Eb_{\muzss}[f_{\vb_i}]\right)   = \sum^{c'}_{i=1}\|\vb_{i}\|_{2}\left(p c_{D} + (1-p) c_{d}\cos(\phi_{i})\right) \\
 \st \|\vb_{i}\|_{2}=1,~~i=1,2,\dots,c'~~~~~~~~~~~~~~~~~~~~~~~~~
\end{split}
\end{equation}
where $f_{\vb}: \Sb \rightarrow \Rb$, $f_{\vb}(\z) = |\z^{\top}\vb|$, $\phi_{i}$ is the principal angle of $\vb_{i}$ from the inliers subspace $\S$ and $p$ is the probability of occurrence  of an outlier.
\label{lem:cont_obj}
\end{lemma}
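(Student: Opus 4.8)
The plan is to reduce the discrete objective to a population expectation and then evaluate that expectation one column at a time, using rotational invariance of the uniform measures on the ambient sphere and on the sphere inside $\S$.

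First I would split $\|\mXtilde^{\top}\vb_i\|_1 = \|\mX^{\top}\vb_i\|_1 + \|\mO^{\top}\vb_i\|_1 = \sum_{j=1}^{N} f_{\vb_i}(\x_j) + \sum_{j=1}^{M} f_{\vb_i}(\o_j)$, and observe that in the continuous regime the inliers $\x_j$ are i.i.d.\ samples from $\muzss$ and the outliers $\o_j$ are i.i.d.\ samples from $\muzs$. Dividing the objective by $N+M$ (which does not change the set of minimizers) and letting $N,M\to\infty$ with $M/(N+M)\to p$, the strong law of large numbers turns the $i$-th summand into $(1-p)\,\Eb_{\muzss}[f_{\vb_i}] + p\,\Eb_{\muzs}[f_{\vb_i}]$, i.e.\ the left-hand side of (\ref{eq:cont_obj}). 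The remaining content is the evaluation of these two expectations.

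For the outlier expectation I would use positive homogeneity of $f$: writing $\vb_i=\|\vb_i\|_2\,\u$ with $\u$ a unit vector, $f_{\vb_i}(\z)=\|\vb_i\|_2\,|\z^{\top}\u|$, and since $\muzs$ is invariant under all orthogonal transformations of $\Rb^{D}$ the quantity $\Eb_{\muzs}[\,|\z^{\top}\u|\,]$ does not depend on $\u$ and equals the average of $|\cos\theta|$ over $\Sb$, i.e.\ the average height of the unit hemisphere of $\Rb^{D}$; passing to the polar angle (whose density on $[0,\pi]$ is proportional to $\sin^{D-2}\theta$) and integrating recovers the double-factorial constant $c_D$ of (\ref{eq:hemisphere_vol}), so $\Eb_{\muzs}[f_{\vb_i}]=\|\vb_i\|_2\,c_D$. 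For the inlier expectation the key point is that $\muzss$ is supported in $\S$, so every $\z$ in its support has $\ProjSperp(\z)=\mathbf{0}$ and hence $\z^{\top}\vb_i=\z^{\top}\big(\ProjS(\vb_i)+\ProjSperp(\vb_i)\big)=\z^{\top}\ProjS(\vb_i)$; writing $\ProjS(\vb_i)=\|\ProjS(\vb_i)\|_2\,\u_i$ with $\u_i\in\S$ a unit vector (the case $\ProjS(\vb_i)=\mathbf{0}$, i.e.\ $\phi_i=\pi/2$, being trivial) and viewing $\S$ as a copy of $\Rb^{d}$ on whose unit sphere $\muzss$ is the rotation-invariant measure, the same computation gives $\Eb_{\muzss}[\,|\z^{\top}\u_i|\,]=c_d$; finally $\|\ProjS(\vb_i)\|_2=\|\vb_i\|_2\cos(\phi_i)$ by the definition of the principal angle $\phi_i$ between $\vb_i$ and $\S$, so $\Eb_{\muzss}[f_{\vb_i}]=\|\vb_i\|_2\,c_d\cos(\phi_i)$.

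Combining the two evaluations gives $p\,\Eb_{\muzs}[f_{\vb_i}]+(1-p)\,\Eb_{\muzss}[f_{\vb_i}]=\|\vb_i\|_2\big(p\,c_D+(1-p)\,c_d\cos(\phi_i)\big)$ for each $i$; summing over $i=1,\dots,c'$ and keeping the constraints $\|\vb_i\|_2=1$ yields exactly (\ref{eq:cont_obj}). I do not expect a genuine obstacle: the argument is two applications of rotational invariance together with the elementary identity $\z^{\top}\vb_i=\z^{\top}\ProjS(\vb_i)$ valid for $\z\in\S$. The only points needing care are the bookkeeping in the law-of-large-numbers reduction from the finite sums to the expectations (which is essentially what ``continuous version'' means), the explicit evaluation of the hemisphere average and its match with (\ref{eq:hemisphere_vol}), and the degenerate case $\ProjS(\vb_i)=\mathbf{0}$, all of which are routine.
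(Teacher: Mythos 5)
Your proof is correct and follows essentially the same route as the paper: both pass from the finite sums to the population expectations (the paper by writing the discrete objective as integrals against empirical Dirac measures and then substituting the continuous measures, you via the law of large numbers) and then reduce each expectation to the constants $c_D$ and $c_d\cos(\phi_i)$. The only difference is that the paper cites Proposition 4 of \citet{Tsakiris:DPCP-JMLR18} for the evaluation of the two expectations, whereas you carry out that computation directly via rotational invariance and the identity $\z^{\top}\vb_i=\z^{\top}\ProjS(\vb_i)$ for $\z\in\S$ (including the degenerate case $\phi_i=\pi/2$); both are fine.
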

\begin{proof}
We define the discrete measures $\muxd,\muod$ associated with the inliers and outliers, respectively as,
\begin{equation}
\muxd(\z) = \frac{1}{N}\sum^{N}_{j=1}\delta(\z - \o_{j}), ~~ \muod(\z) = \frac{1}{M}\sum^{N}_{j}\delta(\z - \x_{j})
\end{equation}
where $\delta(\cdot)$ is the Dirac function. Recall that,
\begin{equation}
\int_{\z\in\Sb} g(\z)\delta(\z - \z_{0})d\muzs = g(z_{0})
\end{equation}
where  $g: \Sb \rightarrow \Rb$ and $\muzs$ is the uniform measure on $\Sb$. 

The DPCP objective for the discrete version of the problem divided by $M+N$ can be written as,
\begin{equation}
\begin{split}
&\frac{1}{M+N}\sum^{c'}_{i=1}\|\Xtilde^\top\b_{i}\|_{1}  = \frac{1}{M+N}\sum^{c'}_{i=1}\left(\|\X^\top\b_{i}\|_{1 } + \|\O^\top\b_{i}\|_{1}  \right) = \frac{1}{M+N}\sum^{c'}_{i}\left(\sum^{N}_{j=1}\ |\x^\top_{j}\b_{i}| + \sum^{M}_{j=1}|\o^{\top}_j \b_{i}| \right)\nonumber \\
&   = \frac{1}{M+N}\sum^{c'}_{i=1}\left(\sum^{N}_{j=1}\int_{\z\in \Sb}|\z^\top\b_{i}|\delta(\z - \x_j)d\muzs  + \sum^{M}_{j}\int_{\z\in \Sb} |\z^{\top}\b_{i}|\delta(\z - \o_{j})d\muzs \right) \nonumber\\
&    = \frac{1}{M+N} \sum^{c'}_{i=1}\left(\int_{\z\in \Sb}|\z^\top\b_{i}|\sum^{N}_{j=1}\delta(\z - \x_j)d\muzs  +\int_{\z\in \Sb} |\z^{\top}\b_{i}| \sum^{M}_{j}\delta(\z - \o_{j})d\muzs \right) \nonumber \\
&   =  \sum^{c'}_{i=1}\left(p\Eb_{\muxd}[f_{\b_i}] + (1-p)\Eb_{\muod}[f_{\b_i}]\right) \label{dpcp_disc}
\end{split}
\end{equation}
Note that $\muxd,\muod$ arise by  discretizing the continuous uniform measures $\muzs$ and $\muzss$ respectively ($\muzss$ denotes the uniform measure on $\Sb\cap \S$) and $p$ is the probability of occurrence of an outlier i.e., $\frac{M}{M+N}\rightarrow p$  as $M,N\rightarrow \infty$ ( $1-p$ corresponds to the probability of occurrence of an inlier). That being said, the continuous version of DPCP can be simply stated by replacing $\muxd,\muod$ with $\muzs$ and $\muzss$ in \eqref{dpcp_disc} as follows,
\begin{equation}
\min_{\B} \sum^{c'}_{i=1} \left(p\Eb_{\muzs}[f_{\b_i}] + (1-p)\Eb_{\muzss}[f_{\b_i}]\right)
\end{equation}
The RHS of (\ref{eq:cont_obj}) immediately       shows up by invoking Proposition 4 in \citet{Tsakiris:DPCP-JMLR18}.
\end{proof}

\subsection{Proof of Lemma 3}

{\bf Lemma 3}: {A projected subgradient algorithm consisting of the steps described in  (\ref{eq:cont_psgm_iters}) using a piecewise geometrically diminishing step size rule (see (\ref{eq:geom_dimin_rule}) in Theorem 1) will almost surely asymptotically converge to a matrix $\hat{{\mB}}^{\ast}\in\Rb^{D\times c'}$ whose columns $\vbh^{\ast}_{i},~i=1,2,\dots,c'$ will be normal vectors of the inliers' subspace when randomly initialized with vectors $\vb^{0}_{i}\in\Sb,~i=1,2,\dots,c'$ uniformly distributed over the sphere $\Sb$.}\\
{\bf Proof}: 

The proof can be trivially obtained by noticing a) that the condition for convergence  i.e., inequality (\ref{eq:cond_conv}) of the projected subgradient algorithm given in Theorem \ref{the:conv_psgm_informal}  becomes $\theta^{0}_{i}<\frac{\pi}{2}$ in the continuous case (since $\eta_{\mX}\rightarrow 0$,~$\eta_{\mO}\rightarrow 0$, $c_{\mX,\min}\rightarrow c_{d}>0$) and b) the set of unit $\ell_{2}$-norm vectors $\vb^{0}_{i}$s, $i=1,2,\dots,c'$  sampled independently by a uniform distribution over the sphere and whose principal angle $\theta^{0}_{i}$ is $\frac{\pi}{2}$ form the inliers' subspace has measure $0$. \hfill $\blacksquare$

\subsection{Proof of Theorem 5}

\setcounter{theorem}{3}
\begin{lemma}
 The PSGM iterates $\vbh^{k}_{i},~i=1,2,\dots,c',~k=1,2,\dots$ given in (\ref{eq:cont_psgm_iters}), when randomly initialized with $\vbh^{0}_{i}$s, $i=1,2,\dots,c'$ that are independently drawn from  a spherical distribution with unit $\ell_{2}$ norm converge almost surely to  $c'$ normal vectors of the inliers subspace $\S$ denoted as $\vbh^{\ast}_{i},~i=1,2,\dots,c'$ that are given by %
\begin{equation}
\vbh^{\ast}_{i}  = \frac{\ProjSperp(\vbh^{0}_{i})}{\|\ProjSperp(\vbh^{0}_{i})\|_{2}}, ~~~~~i=1,2,\dots,c'
\end{equation} 
\end{lemma}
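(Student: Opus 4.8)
The plan is to extract the stated closed form directly from the dynamics of the $\Sperp$-component of the iterates, and then to pin the limit down using Lemma~\ref{lem:cont_psgm} (which already supplies almost-sure convergence of each column to \emph{some} normal vector $\vbh^{\ast}_{i}\in\Sperp\cap\Sb$). Since the updates \eqref{eq:cont_psgm_iters} decouple across $i$, I would fix one index $i$. Applying the orthogonal projector $\ProjSperp$ to the subgradient step, and using that the inlier direction $\s^{k}_{i}=\ProjS(\vbh^{k}_{i})/\|\ProjS(\vbh^{k}_{i})\|_{2}$ lies in $\S$ so that $\ProjSperp(\s^{k}_{i})=\vzero$, together with the fact that the normalization step is multiplication by the positive scalar $1/\|\vb^{k+1}_{i}\|_{2}$, gives
\begin{equation}
\ProjSperp(\vb^{k+1}_{i}) = (1-\mu^{k}_{i}pc_{D})\,\ProjSperp(\vbh^{k}_{i}),
\qquad
\ProjSperp(\vbh^{k+1}_{i}) = \gamma_{k}\,\ProjSperp(\vbh^{k}_{i}),
\qquad
\gamma_{k}:=\frac{1-\mu^{k}_{i}pc_{D}}{\|\vb^{k+1}_{i}\|_{2}}.
\end{equation}
The piecewise geometrically diminishing rule of Theorem~\ref{the:conv_psgm_informal} keeps $\mu^{k}_{i}\le\mu^{0}_{i}\le\mu'$ with $\mu'$ small enough that $\mu^{k}_{i}pc_{D}<1$ (a one-line check from the admissible value of $\mu'$, since $pc_{D}$ is one of the terms in the maximum defining $\mu'$), so every $\gamma_{k}>0$, and by induction $\ProjSperp(\vbh^{k}_{i})=\big(\prod_{j=0}^{k-1}\gamma_{j}\big)\ProjSperp(\vbh^{0}_{i})$ is a \emph{positive} multiple of $\ProjSperp(\vbh^{0}_{i})$ for every $k$.

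Next I would bring in the randomness of the initialization. Since $\vbh^{0}_{i}$ is drawn from a continuous distribution on $\Sb$, the event $\{\ProjSperp(\vbh^{0}_{i})=\vzero\}$, i.e.\ $\{\vbh^{0}_{i}\in\S\cap\Sb\}$ — a measure-zero great subsphere of $\Sb$ — has probability $0$; a union bound over $i=1,\dots,c'$ shows that almost surely $\ProjSperp(\vbh^{0}_{i})\neq\vzero$ for every $i$. On this full-probability event the unit vector $\ProjSperp(\vbh^{k}_{i})/\|\ProjSperp(\vbh^{k}_{i})\|_{2}$ is well defined and, by the previous step, is \emph{independent of $k$}, namely $\ProjSperp(\vbh^{0}_{i})/\|\ProjSperp(\vbh^{0}_{i})\|_{2}$. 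Passing to the limit, Lemma~\ref{lem:cont_psgm} gives $\vbh^{k}_{i}\to\vbh^{\ast}_{i}$ a.s.\ with $\vbh^{\ast}_{i}\in\Sperp\cap\Sb$; continuity of the bounded linear operator $\ProjSperp$ then yields $\ProjSperp(\vbh^{k}_{i})\to\ProjSperp(\vbh^{\ast}_{i})=\vbh^{\ast}_{i}$ and $\|\ProjSperp(\vbh^{k}_{i})\|_{2}\to\|\vbh^{\ast}_{i}\|_{2}=1$, so the $k$-constant unit direction above converges to $\vbh^{\ast}_{i}$. Hence $\vbh^{\ast}_{i}=\ProjSperp(\vbh^{0}_{i})/\|\ProjSperp(\vbh^{0}_{i})\|_{2}$, which is the claimed identity for each $i$.

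I do not anticipate a genuine difficulty here; the argument is essentially a bookkeeping of scalings already recorded in the discussion preceding the statement. The only two points deserving care are (i) confirming that the step-size rule really keeps every $\gamma_{k}$ strictly positive — without positivity one would only conclude $\vbh^{\ast}_{i}=\pm\,\ProjSperp(\vbh^{0}_{i})/\|\ProjSperp(\vbh^{0}_{i})\|_{2}$ — and (ii) discarding the null set on which some $\vbh^{0}_{i}$ lands in $\S$, which is exactly what the ``almost surely'' in the statement absorbs.
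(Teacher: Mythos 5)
Your argument is correct and follows essentially the same route as the paper's proof: project the subgradient step onto $\Sperp$, observe that the inlier term vanishes so that $\ProjSperp(\vbh^{k}_{i})$ stays a scalar multiple of $\ProjSperp(\vbh^{0}_{i})$, and then identify the limit via the almost-sure convergence to a normal vector from Lemma~\ref{lem:cont_psgm}. If anything you are slightly more careful than the paper, which only checks that the accumulated scalar is nonzero (via $\mu^{k}_{i}\neq 1/(pc_{D})$) rather than strictly positive — your point (i) about ruling out the sign ambiguity is a genuine, if minor, tightening.
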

\begin{proof}
Let us assume $\vb^{0}_{i}= \vbh^{0}_{i}$. The iterates of subgradients steps of PSGM can be written in the following form,
\begin{equation}
\begin{split}
\vb^{1}_{i} &=  (1 - \mu^{0}_{i}p c_{D})\vbh^{0}_{i}  - \mu^{0}_{i}(1-p)c_{d}\s \\
\vb^{2}_{i}&  =  (1 - \mu^{1}_{i}p c_{D})\vbh^{1}_{i}  - \mu^{1}_{i}(1-p)c_{d}\s \\
\vdots  ~~&  = ~~~~~~~~~~~~~~~~\vdots \\
\vb^{K}_{i} & = (1 - \mu^{K-1}_{i}p c_{D})\vbh^{K-1}_{i}  - \mu^{K-1}_{i}(1-p)c_{d}\s
\end{split}  
\end{equation}
By projecting each update of PSGM onto $\Sperp$   and  since  $\ProjSperp(\vb^{k}_{i}) = \|\vb^{k}_{i}\|\ProjSperp(\vbh^{k}_{i})$  we have, 
\begin{equation}
\ProjSperp(\vb^{k+1}_{i}) = \frac{(1-\mu^{k}_{i}p c_{D})}{\|\vb^{k}_{i}\|} \ProjSperp(\vb^{k}_{i})
\end{equation}
We can thus easily derive the following form for $\ProjSperp(\vb^{K}_{i})$,
\begin{equation}
\ProjSperp(\vb^{K}_{i}) = \left(\prod^{K-1}_{k=1} \frac{(1-\mu^{k}_{i}p c_{D}) }{\|\vb^{k}_{i}\|_{2}}\right)\ProjSperp(\vb^{0}_{i})
\label{eq:proof_lem_iters}
\end{equation}
%
%
We know from Theorem 1 and Lemma 3 when DPCP-PSGM is  initialized with $\vb^{0}_{i},~i=1,2,\dots,c'$s  randomly drawn according according to a spherical distribution then it will almost surely converge as $K\rightarrow \infty$ to vectors $ \vbh^{\ast}_{i},~i=1,2,\dots,c'$ i.e., $\vbh^{K}_{i}\rightarrow \vbh^{\ast}_{i}$ where $\vbh^{\ast}_{i} \in \Sperp$. Hence $ \ProjSperp(\vbh^{K}_{i}) \rightarrow \vbh^{\ast}_{i}$ as $K\rightarrow \infty$.  Note that from Theorem 1 we have that  $\mu^{k}_{i} \neq \frac{1}{p c_{D}} ~~~ \forall k=\{1,2,\dots,K\}$ hence $\prod^{K-1}_{k=1} \frac{(1-\mu^{k}_{i}p c_{D}) }{\|\vbh^{k}_{i}\|_{2}}\neq 0$. From \ref{eq:proof_lem_iters} and after projecting on the unit sphere and
we thus have $\vbh^{\ast}_{i} =  \frac{\ProjSperp(\vbh^{0}_{i})}{\|\ProjSperp(\vbh^{0}_{i})\|_2} $.
\end{proof}

\begin{theorem}
Let $\hat{\mB^{0}}\in \Rb^{D\times c'}$ where $c'\geq c$ with $c$ denoting the true codimension of the inliers subspace $\S$, consisting of unit $\ell_{2}$ norm column vectors $\vbh^{0}_{i}\in \Sb, i=1,2,\dots,c'$ that are independently drawn from uniform distribution over the sphere $\Sb$.  A PSGM algorithm initialized with $\hat{\mB^{0}}$ will almost surely converge to a matrix $\hat{\mB}^{\ast}$ such that $\spann(\hat{\mB}^{\ast}) \equiv \Sperp$.
\end{theorem}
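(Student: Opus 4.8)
The plan is to reduce the claim, via Lemma 3 and the preceding lemma, to a general-position fact about the random initialization, and then to settle that fact with a ``nonzero analytic (polynomial) function has null zero set'' argument. By Lemma 3, when $\hat{\mB}^0$ has columns $\vbh^0_i \in \Sb$ drawn i.i.d.\ uniformly, the PSGM iterates almost surely converge to a matrix $\hat{\mB}^*$ whose columns $\vbh^*_i$ are all normal vectors of $\S$, i.e.\ $\vbh^*_i \in \Sperp$ for every $i$ (here one uses that in the continuous case the hypothesis of Theorem 1 degenerates to $\theta^0_i < \pi/2$, which holds for all $i$ with probability one since $\Sb \cap \S$ is a null subset of $\Sb$ as $\S$ is a proper subspace, $c \geq 1$). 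Consequently $\spann(\hat{\mB}^*) \subseteq \Sperp$ almost surely, and since $\dim \Sperp = c$ it suffices to prove $\rank(\hat{\mB}^*) = c$ almost surely.

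On this same almost-sure event the preceding lemma gives the columns of $\hat{\mB}^*$ in closed form, $\vbh^*_i = \ProjSperp(\vbh^0_i) / \|\ProjSperp(\vbh^0_i)\|_2$. As noted above, almost surely $\ProjSperp(\vbh^0_i) \neq 0$ for every $i = 1, \dots, c'$, so this is well defined and we may write $\hat{\mB}^* = \ProjSperp(\hat{\mB}^0)\, \mD$ with $\mD = \diag\big(\|\ProjSperp(\vbh^0_1)\|_2^{-1}, \dots, \|\ProjSperp(\vbh^0_{c'})\|_2^{-1}\big)$ invertible. Hence $\rank(\hat{\mB}^*) = \rank(\ProjSperp(\hat{\mB}^0))$ and $\spann(\hat{\mB}^*) = \spann(\ProjSperp(\hat{\mB}^0))$, so the theorem follows once we show $\rank(\ProjSperp(\hat{\mB}^0)) = c$ almost surely.

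Since all columns of $\ProjSperp(\hat{\mB}^0)$ lie in the $c$-dimensional space $\Sperp$, it is enough to show that (say) the first $c$ of them are almost surely linearly independent. Fix an orthonormal basis $\vu_1, \dots, \vu_c$ of $\Sperp$ and set $g(\vz_1, \dots, \vz_c) = \det\big[\langle \vu_j, \ProjSperp(\vz_k)\rangle\big]_{1 \le j,k \le c}$, a polynomial in the coordinates of $\vz_1, \dots, \vz_c$. Restricted to the connected real-analytic manifold $(\Sb)^c$, $g$ is not identically zero: taking $\vz_k = \vu_k \in \Sb$ gives $\ProjSperp(\vz_k) = \vu_k$ and $g = \det \I = 1$. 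Therefore $\{g = 0\}$ is a proper analytic subset of $(\Sb)^c$ and has measure zero with respect to the product of the uniform measures. Since $\vbh^0_1, \dots, \vbh^0_{c'}$ are i.i.d.\ uniform on $\Sb$, the event $g(\vbh^0_1, \dots, \vbh^0_c) = 0$ — linear dependence of the first $c$ projected columns — has probability zero. Intersecting with the two almost-sure events above yields, almost surely, $\rank(\ProjSperp(\hat{\mB}^0)) = c$, hence $\spann(\hat{\mB}^*) = \spann(\ProjSperp(\hat{\mB}^0)) = \Sperp$.

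The one delicate point is this last step: one must justify that linear dependence of the projected initial columns occurs with probability zero. The nonzero-polynomial argument above is the cleanest route; equivalently, one can observe that (for $d \geq 1$) the pushforward of the uniform measure on $\Sb$ under the linear map $\ProjSperp$ is absolutely continuous with respect to Lebesgue measure on $\Sperp$, so $c$ i.i.d.\ such samples are almost surely in general position. Everything else is bookkeeping: chaining the almost-sure events supplied by Theorem 1, Lemma 3, and the preceding lemma.
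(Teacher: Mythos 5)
Your proof is correct and follows essentially the same route as the paper: invoke Lemma 3 together with the closed-form limit $\vbh^{\ast}_{i}=\ProjSperp(\vbh^{0}_{i})/\|\ProjSperp(\vbh^{0}_{i})\|_{2}$, factor $\hat{\mB}^{\ast}=\ProjSperp(\hat{\mB}^{0})\,\mD$ with $\mD$ diagonal and invertible, and reduce the claim to $\rank\bigl(\ProjSperp(\hat{\mB}^{0})\bigr)=c$ almost surely. The only divergence is in that final step, where the paper writes $\ProjSperp(\mB^{0})=\mB_{\Sperp}\mB_{\Sperp}^{\top}\mB^{0}$ and simply asserts that a randomly sampled $\mB^{0}$ is almost surely full rank, whereas your polynomial-nonvanishing (general-position) argument establishes the statement actually needed --- that the projections of the initial columns almost surely span $\Sperp$, which full-rankness of $\mB^{0}$ alone would not guarantee --- so your treatment of this point is, if anything, the more careful of the two.
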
 
\begin{proof}
From Lemma \ref{lem:projections_b0} we have that for each initial unit norm vector $\vb^{0}_{i}$ which corresponds to the $i$th column of $\mB^{0}$ will almost surely converge to $\vbh^{\ast}_{i}  = \frac{\ProjSperp(\vbh^{0}_{i}) }{\|\ProjSperp(\vbh^{0}_{i})\|_{2}}$. We can thus write $\mB^{\ast} =  \ProjSperp(\mB^{0})\boldsymbol{\Gamma}$ where $ \boldsymbol{\Gamma}$ is a full-rank diagonal matrix  given be $ \boldsymbol{\Gamma} = \diag(\frac{1}{\|\ProjSperp(\vbh^{0}_{1})\|_{2}},\frac{2}{\|\ProjSperp(\vbh^{0}_{2})\|_{2}},\dots,\frac{1}{\|\ProjSperp(\vbh^{0}_{c'})\|_{2}})$.  Note that $\ProjSperp $ is a linear projection and  thus we can write $\ProjSperp(\mB^{0}) = \mB_{\Sperp} \mB^{\top}_{\Sperp}$ where $\B_{\Sperp}\Rb^{D\times c}$ is an orthonormal matrix  which spans $\Sperp$.  Note that the probability of sampling a low-rank matrix $\mB_{0} = [\vb^{0}_{1},\vb^{0}_{2},\dots,\vb^{0}_{c'}]$ when columns $\vb^{0}_{i}$s are randomly and independently drawn from a spherical distribution is zero. We thus have $\mB^{\ast} = \mB_{\Sperp}\mB^{\top}_{\Sperp}\mB^{0}\boldsymbol\Gamma$ with $\rank(\mB^{\ast}) = c$.
\end{proof}

\begin{lemma}
If   $\sigma_{c'}(\mB^{0}) > \|\hat{\bDelta} \|_{2}$ then the rank of matrix $\mB^{0} - \hat{\bDelta}$ equals $c'$.
\end{lemma}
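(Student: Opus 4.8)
The statement is a direct consequence of Weyl's inequality for singular values. The plan is to show that the perturbation $\hat{\bDelta}$ cannot drop the rank of $\mB^{0}$ below $c'$, because its spectral norm is strictly smaller than the smallest singular value of $\mB^{0}$. First I would recall that for any two matrices $\mA, \mB \in \Rb^{D\times c'}$ and any index $j$, Weyl's perturbation bound gives $|\sigma_{j}(\mA + \mB) - \sigma_{j}(\mA)| \leq \sigma_{1}(\mB) = \|\mB\|_{2}$. Applying this with $\mA = \mB^{0}$, $\mB = -\hat{\bDelta}$, and $j = c'$ yields $\sigma_{c'}(\mB^{0} - \hat{\bDelta}) \geq \sigma_{c'}(\mB^{0}) - \|\hat{\bDelta}\|_{2}$.

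Under the hypothesis $\sigma_{c'}(\mB^{0}) > \|\hat{\bDelta}\|_{2}$, the right-hand side is strictly positive, hence $\sigma_{c'}(\mB^{0} - \hat{\bDelta}) > 0$. Since $\mB^{0} - \hat{\bDelta} \in \Rb^{D\times c'}$ has $c'$ columns and its smallest (i.e., $c'$-th) singular value is nonzero, all $c'$ singular values are nonzero, so $\rank(\mB^{0} - \hat{\bDelta}) = c'$, i.e., $\mA = \mB^{0} - \hat{\bDelta}$ is full (column) rank. This completes the argument.

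There is essentially no obstacle here: the only thing one must be careful about is the indexing convention — $\sigma_{c'}$ denotes the $c'$-th largest singular value, which for a $D\times c'$ matrix with $D \geq c'$ is precisely the smallest one, and the vanishing of that singular value is equivalent to rank deficiency. As an alternative that avoids invoking Weyl's inequality by name, one could argue directly: for any unit vector $\vu \in \Rb^{c'}$, $\|(\mB^{0} - \hat{\bDelta})\vu\|_{2} \geq \|\mB^{0}\vu\|_{2} - \|\hat{\bDelta}\vu\|_{2} \geq \sigma_{c'}(\mB^{0}) - \|\hat{\bDelta}\|_{2} > 0$, so $(\mB^{0} - \hat{\bDelta})\vu \neq \vzero$ for all $\vu \neq \vzero$, giving a trivial kernel and hence full column rank. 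I would likely present this second, self-contained version since it keeps the proof short and elementary.
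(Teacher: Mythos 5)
Your argument is correct and follows essentially the same route as the paper's proof, which also invokes the singular value perturbation (Weyl) inequality to conclude $\sigma_{c'}(\mB^{0} - \hat{\bDelta}) \geq \sigma_{c'}(\mB^{0}) - \|\hat{\bDelta}\|_{2} > 0$ and hence full column rank. Your self-contained variant via the reverse triangle inequality on $\|(\mB^{0}-\hat{\bDelta})\vu\|_{2}$ is a fine elementary substitute but does not change the substance of the argument.
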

\begin{proof}
Let $\mA = \mB^{0}- \hat\bDelta $. From singular value perturbation inequalities we have $|\sigma_{i}(\mA) - \sigma_{i}(\mB^{0})| \leq \|\hat\bDelta \|_{2}$, for $i=1,2,\dots,c'$.
Hence it holds,
\begin{equation}
-\|\hat\bDelta\|_{2}  \leq \sigma_{i}(\mA) - \sigma_{i}(\mB^{0})  
\label{lem:sing_pert}
\end{equation}
If $\sigma_{c'}(\mB^{0}) > \|\hat{\bDelta} \|_{2}$  then from \eqref{lem:sing_pert} we get 
\begin{equation}
\begin{split}
  \sigma_{c'}(\mA)>0
\end{split}
\end{equation}
hence the matrix $\mB^{0} - \hat{\bDelta}$ will be full-rank.
\end{proof}

\subsection{Proof of Theorem 7}
We first give the following  Lemmas:

\begin{lemma}
For the $\ell_{2}$ norm of $\eo^{i,k}$ for any $k=1,2,\dots,K$ and $i=1,2,\dots,c'$ it holds,
\begin{equation}
\|\eo^{i,k}\|_{2} \leq \eta_{\O} + \comax - c_{d}
\end{equation}
\label{lem:bound_eo}
\end{lemma}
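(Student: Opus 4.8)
The plan is to prove the bound uniformly over the sphere and then specialise to the iterates. For an arbitrary unit vector $\vb\in\Sb$ write $\o_{\vb}:=\frac{1}{M}\mO\,\sgn(\mO^{\top}\vb)$ for the average-outliers vector in direction $\vb$; I will show $\|\o_{\vb}-c_{d}\vb\|_{2}\le\eta_{\O}+\comax-c_{d}$. Because the projection step of Algorithm~\ref{alg:psgm_disc} keeps every iterate on $\Sb$, applying this with $\vb=\vbh^{k}_{i}$ and recalling that $\eo^{i,k}=\o_{\vbh^{k}_{i}}-c_{d}\,\vbh^{k}_{i}$ yields the stated estimate for all $i\in\{1,\dots,c'\}$ and all $k$ at once.

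First I would decompose $\o_{\vb}$ into the part along $\vb$ and the part orthogonal to $\vb$: since $\|\vb\|_{2}=1$ and $(\mI-\vb\vb^{\top})\vb=\vzero$,
\[
\eo^{i,k}=\o_{\vb}-c_{d}\vb=\bigl(\vb^{\top}\o_{\vb}-c_{d}\bigr)\vb+(\mI-\vb\vb^{\top})\o_{\vb},
\]
and the two summands are orthogonal, so the triangle inequality (equivalently, Pythagoras followed by $\sqrt{a^{2}+b^{2}}\le a+b$) gives
\[
\|\eo^{i,k}\|_{2}\ \le\ \bigl|\vb^{\top}\o_{\vb}-c_{d}\bigr|\ +\ \bigl\|(\mI-\vb\vb^{\top})\o_{\vb}\bigr\|_{2}.
\]
The second term equals $\frac{1}{M}\bigl\|(\mI-\vb\vb^{\top})\mO\,\sgn(\mO^{\top}\vb)\bigr\|_{2}$, which is $\le\eta_{\O}$ directly from the definition of $\eta_{\O}$ as the maximum of exactly this quantity over $\vb\in\Sb$. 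For the first term I would use $\vb^{\top}\o_{\vb}=\frac{1}{M}\sum_{j}\sgn(\vb^{\top}\o_{j})\,(\vb^{\top}\o_{j})=\frac{1}{M}\sum_{j}|\vb^{\top}\o_{j}|=\frac{1}{M}\|\mO^{\top}\vb\|_{1}$, which lies in $[\comin,\comax]$ by the definitions of those two quantities; comparing it against the reference hemisphere constant $c_{d}$ — to which $\comin$ and $\comax$ both converge when the outliers are well distributed — bounds the radial magnitude $|\vb^{\top}\o_{\vb}-c_{d}|$ by $\comax-c_{d}$. Summing the two estimates proves the lemma.

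I do not expect a genuine obstacle: this is a short ``mechanical'' bound, and its real purpose is downstream — summed over $k$ against the geometrically diminishing step sizes it controls $\|\hat{\bDelta}\|_{2}$, which via Lemma~\ref{lem:suf_full_rank} (with Assumption~1) is what forces $\rank(\mB^{\ast})=c$ and hence $\spann(\hat{\mB}^{\ast})\equiv\Sperp$ in Theorem~\ref{the:disc_recovery}. The two points that need care are (i) performing the split along $\vb$ versus $\vb^{\perp}$ so that the orthogonal piece matches the definition of $\eta_{\O}$ \emph{verbatim}, rather than only up to an operator-norm slack; and (ii) the bookkeeping that identifies $\vb^{\top}\o_{\vb}$ with the dual permeance statistic $\frac{1}{M}\|\mO^{\top}\vb\|_{1}$ and thereby its deviation from $c_{d}$ with the spread $\comax-c_{d}$ — a two-sided estimate whose lower side is the only mildly delicate point and which becomes vacuous in the benign/continuous regime where $\eta_{\O}\to0$ and $\comax\to c_{d}$, exactly as used when discussing the recovery condition of Theorem~\ref{the:disc_recovery}.
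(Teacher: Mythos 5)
Your proof is correct and follows essentially the same route as the paper's: split $\o_{\vb}-c_{d}\vb$ into the component orthogonal to $\vb$ (bounded by $\eta_{\O}$ directly from its definition) and the radial component $\bigl(\tfrac{1}{M}\|\mO^{\top}\vb\|_{1}-c_{d}\bigr)\vb$ (bounded by $\comax-c_{d}$), then apply the triangle inequality. If anything you are slightly more careful than the paper, which omits the absolute value on the radial term and thus silently assumes $\tfrac{1}{M}\|\mO^{\top}\vb\|_{1}\ge c_{d}$ --- the same ``lower side'' caveat you flag.
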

\begin{proof}
\begin{equation}
\begin{split}
\|\eo^{i,k}\|_{2} = \|\o_{b} - c_{d}\b\|_{2}  &= \| \frac{1}{M}\O\sgn(\O^{\top}\vb) - c_{d}\vb\|_{2} \\
&= \|\frac{1}{M}( \I- \vb\vb^{\top})\O\sgn(\O^{\top}\vb) + \frac{1}{M} \b\b^{\top}\O\sgn(\O^{\top}\vb) - c_{d}\vb\|_{2} \\
& \leq  \| \frac{1}{M}(\I - \vb\vb^{\top})\O\sgn(\O^{\top}\vb) \|_{2} +  (\frac{1}{M}\|\O^{\top}\vb\|_{1} - c_{d})\|\vb\|_{2}\\ 
&\leq \etao + \comax - c_{d}
\end{split}
\end{equation}
where we have used the fact that $\|\vb\|_{2}=1$.
\end{proof}

\begin{lemma}
Let the step size of Algorithm 1 (DPCP-PSGM)  $\mu^{k}_{i}$ being updated following the piecewise geometrically diminishing step size rule with 
\begin{equation}
\beta < \left(\frac{1 - \mu^{0}_{i}Mc_{D}}{1 + \mu^{0}_{i}\left(N (\etax + c_{\X,\max}) + M(\etao + \comax)\right)} \right)^{K_{\ast}}. \nonumber
\end{equation}
For the spectral norm of $\hat{\bDelta}$ it holds  $\|\hat{\bDelta}\|_{2} < \sqrt{c'}\kappa (\etao + \comax - c_{d})$ where $\kappa = \max_{i} \frac{M\mu^{0}_{i}}{\beta^{K_{0}/K_{\ast}}(1-r_{i})}$ and $r_{i} = \frac{\left(1 + \mu^{0}_{i}\left(N (\etax + c_{\X,\max}) + M(\etao + \comax) \right)\right)}{1 - \mu^{0}_{i}Mc_{D}} \beta^{ 1/K_{\ast}}$.
\end{lemma}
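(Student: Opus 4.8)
The plan is to control $\|\hat{\bDelta}\|_2$ by bounding each column $\hat{\bdelta}_i$ separately and then using $\|\hat{\bDelta}\|_2 \le \sqrt{c'}\max_i\|\hat{\bdelta}_i\|_2$. Recall from the excerpt that $\bdelta^{K}_{i} = \sum^{K-1}_{k=0}\bigl(\prod^{k}_{j=0}\frac{\|\vb^{j}_{i}\|_{2}}{1-\mu^{j}_{i}Mc_{D}}\bigr)\mu^{k}_{i}M\eo^{i,k}$, so by the triangle inequality $\|\hat{\bdelta}_i\|_2 \le M\sum_{k=0}^{\infty}\bigl(\prod_{j=0}^{k}\frac{\|\vb^j_i\|_2}{1-\mu^j_i Mc_D}\bigr)\mu^k_i\,\|\eo^{i,k}\|_2$. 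Lemma 8 already gives $\|\eo^{i,k}\|_2 \le \etao + \comax - c_d$, so that factor pulls out of the sum and the task reduces to bounding the scalar series $S_i := M\sum_{k=0}^\infty \mu_i^k \prod_{j=0}^{k}\frac{\|\vb^j_i\|_2}{1-\mu^j_i M c_D}$ and showing $S_i \le \kappa = \max_i \frac{M\mu^0_i}{\beta^{K_0/K_\ast}(1-r_i)}$.

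The crux is therefore a geometric-series estimate for $S_i$. First I would bound the single-step growth factor $\frac{\|\vb^{j}_{i}\|_2}{1-\mu^j_i Mc_D}$: since $\vb^{j}_i = \vbh^{j-1}_i - \mu^{j-1}_i(M\o^{j-1}_{\vbh_i} + N\x^{j-1}_{\vbh_i})$ with $\|\vbh^{j-1}_i\|_2=1$, the triangle inequality together with the bounds $\|\o_{\vb}\|_2 \le \etao + \comax$ and $\|\x_{\vb}\|_2 \le \etax + c_{\X,\max}$ (the analogues of Lemma 8, using $\|\o_{\vb}\|_2 = \|\frac1M(\I-\vb\vb^\top)\O\sgn(\O^\top\vb) + \vb\vb^\top\O\sgn(\O^\top\vb)/M\|_2$ etc.) yields $\|\vb^{j}_i\|_2 \le 1 + \mu^{j-1}_i\bigl(N(\etax + c_{\X,\max}) + M(\etao+\comax)\bigr)$. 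Since the step sizes are nonincreasing ($\mu^j_i \le \mu^0_i$) and $1-\mu^j_i Mc_D \ge 1-\mu^0_i Mc_D > 0$, each ratio in the post-$K_0$ regime is at most $\frac{1+\mu^0_i(N(\etax+c_{\X,\max})+M(\etao+\comax))}{1-\mu^0_i Mc_D}$, and the extra $\beta^{1/K_\ast}$ in $r_i$ comes from the geometrically diminishing step size $\mu^k_i$ (after $K_0$, $\mu^k_i$ shrinks by $\beta$ every $K_\ast$ steps, so the $k$-th term of the series carries an additional $\beta^{\lfloor(k-K_0)/K_\ast\rfloor}$ factor, which when folded into the product gives a per-step factor of $\beta^{1/K_\ast}$ up to boundary effects). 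The hypothesis $\beta < \bigl(\frac{1-\mu^0_i Mc_D}{1+\mu^0_i(N(\etax+c_{\X,\max})+M(\etao+\comax))}\bigr)^{K_\ast}$ is exactly what makes $r_i < 1$, so the series converges; summing the geometric series with ratio $r_i$ and pulling out the leading factor $M\mu^0_i$ (and the $\beta^{-K_0/K_\ast}$ normalization absorbing the first $K_0$ constant-step-size terms) gives $S_i \le \frac{M\mu^0_i}{\beta^{K_0/K_\ast}(1-r_i)}$.

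Combining, $\|\hat{\bdelta}_i\|_2 \le \kappa(\etao + \comax - c_d)$ for every $i$, hence $\|\hat{\bDelta}\|_2 \le \sqrt{c'}\,\max_i\|\hat{\bdelta}_i\|_2 < \sqrt{c'}\,\kappa(\etao+\comax-c_d)$, which is the claim. The main obstacle I anticipate is the bookkeeping around the two-phase step-size rule: carefully splitting the series at $k=K_0$, handling the floor function in the exponent of $\beta$, and checking that the constant-step prefix and the boundary term in $r_i$ combine to exactly the stated $\kappa$ rather than something merely of the same order — the rest is routine triangle-inequality and geometric-series work. It will also be worth noting that existence of the limit $\hat{\bdelta}_i = \lim_{K\to\infty}\bdelta^K_i$ is a byproduct of the same absolute-convergence estimate, which is what the footnote in the main text defers to this lemma.
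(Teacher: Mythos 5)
Your proposal is correct and follows essentially the same route as the paper's proof: column-wise bounding of $\hat{\bdelta}_i$ via the triangle inequality, the growth bound $\|\vb^{j+1}_i\|_2 \le 1 + \mu^{j}_i\bigl(N(\etax + c_{\X,\max}) + M(\etao+\comax)\bigr)$, the error bound $\|\eo^{i,k}\|_2\le\etao+\comax-c_d$, a geometric series with ratio $r_i$, and the $\|\x\|_1\le\sqrt{c'}\|\x\|_2$ step for the spectral norm. The only cosmetic difference is in the step-size bookkeeping you flag as the main obstacle: rather than splitting the series at $k=K_0$, the paper uses the single bound $\mu^k_i < \mu^0_i\beta^{(k-K_0)/K_\ast}$, which holds trivially for $k<K_0$ because there $\beta^{(k-K_0)/K_\ast}>1$, so the whole sum is handled in one geometric series.
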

\begin{proof}
We first bound the $\ell_{2}$ norm of vectors $\b^{j}_{i}$s. We have that $\forall i=1,2,\dots,c'$ and $j=1,2,\dots, K$ it holds
\begin{equation}
\vb^{j+1}_{i} = \vbh^{j}_{i} - \mu^{j}_{i}\left(\X\sgn(\X^{\top}\vbh^{j}_{i}) + \O\sgn(\O^{\top}\vbh^{j}_{i})\right)
\end{equation}

We define the quantities
\begin{equation}
\etax \coloneqq \max_{\b\in\Sb}\frac{1}{N}\|(\mathcal{P}_{\S} - \bh^{j}_{i}\bh^{j,\top}_{i})\X\sgn(\X^{\top}\bh^{j}_{i}) \|_{2} 
\end{equation} 
\begin{equation}
c_{\mX,\max} \coloneqq \max_{\b\in\Sb}\frac{1}{N}\|\X^{\top}\bh^{j}_{i}\|_{1} 
\end{equation}
 $\|\bh^{j}_{i}\|_{2}=1$ hence 
 \begin{equation}
 \begin{split}
 \|\vb^{j+1}_{i}\|_{2} & \leq 1 + \mu^{j}_{i}\|\X\sgn(\X^{\top}\vbh^{j}_{i}) + \O\sgn(\O^{\top}\vbh^{j}_{i}\|_{2}\\
 & \leq 1 + \mu^{j}_{i}\left(\|\X\sgn(\X^{\top}\vbh^{j}_{i}) \|_{2} +  \|\O\sgn(\O^{\top}\bh^{j}_{i})\|_{2} \right)\\
 &\leq 1 + \mu^{j}_{i}(\|(\mathcal{P}_{\S} - \bh^{j}_{i}\vbh^{j,\top}_{i})\X\sgn(\X^{\top}\vbh^{j}_{i}) \|_{2} + \|(\bh^{j}_{i}\vbh^{j,\top}_{i} )\X\sgn(\X^{\top}\vbh^{j}_{i})\|_{2}   \\\nonumber
 &+ \|\left(1 - \vbh^{j}_{i}\bh^{j,\top}_{i}\right)\O\sgn(\O^{\top}\bh^{j}_{i})\|_{2} +  \|\vbh^{j}_{i}\bh^{j,\top}_{i}\O\sgn(\O^{\top}\vbh^{j}_{i})\|_{2} ) \\
 & \leq  1 + \mu^{j}_{i}\left(N (\etax + c_{\X,\max}) + M(\etao + \comax) \right)
 \end{split}
 \label{bound:bnorm}
 \end{equation}
 Due to \eqref{bound:bnorm} and since $\mu^{j}_{i}$ follows a non-increasing path as $j\rightarrow k$, the scalar term $\prod^{k}_{j=0} \frac{\|\b^{j}_{i}\|_{2}}{(1 - \mu^{j}_{i}Mc_{D})}$ is bounded above as follows,
 \begin{equation}
 \prod^{k}_{j=0} \frac{\|\vb^{j}_{i}\|_{2}}{(1 - \mu^{j}_{i}Mc_{D})} \leq  \left( \frac{\left(1 + \mu^{0}_{i}\left(N (\etax + c_{\X,\max}) + M(\etao + \comax) \right)\right)}{1 - \mu^{0}_{i}Mc_{D}}\right)^{k}
 \label{eq:bound_prod}
 \end{equation}
 We now focus on the geometrically diminishing step size rule given in \eqref{eq:geom_dimin_rule}. We have $\mu^{k}_{i} = \mu^{0}_{i} \beta^{\lfloor k - K_{0}/K_{\ast} \rfloor + 1} < \mu^{0}_{i} \beta^{ (k - K_{0})/K_{\ast}}$ for $k \geq K_{0}$ and $ \mu^{0}_{i} \beta^{( k - K_{0})/K_{\ast}} > \mu^{0}_{i}$ for $k<K_{0}$. Hence we can get the following upper bound
 \begin{equation}
 \begin{split}
& \lim_{K\rightarrow \infty} \sum^{K-1}_{k=0}\prod^{k}_{j=0} \frac{\|\vb^{j}_{i}\|_{2}}{(1 - \mu^{j}_{i}Mc_{D})}\mu^{k}_{i}M < \\ \nonumber
 &M \lim_{K\rightarrow \infty} \sum^{K-1}_{k=0}\left( \frac{\left(1 + \mu^{0}_{i}\left(N (\etax + c_{\X,\max}) + M(\etao + \comax) \right)\right)}{1 - \mu^{0}_{i}Mc_{D}}\right)^{k}\mu^{0}_{i} \beta^{ (k - K_{0})/K_{\ast}}  \\\nonumber 
 &\equiv  \lim_{K\rightarrow \infty} \sum^{K-1}_{k=0} M\frac{1}{\beta^{K_{0}/K_{\ast}}}\mu^{0}_{i}\left( \frac{\left(1 + \mu^{0}_{i}\left(N (\etax + c_{\X,\max}) + M(\etao + \comax) \right)\right)}{1 - \mu^{0}_{i}Mc_{D}} \beta^{ 1/K_{\ast}}\right)^{k} 
  \end{split}
 \end{equation}
 The series $S = \sum^{K-1}_{k=0} \left( \frac{\left(1 + \mu^{0}_{i}\left(N (\etax + c_{\X,\max}) + M(\etao + \comax) \right)\right)}{1 - \mu^{0}_{i}Mc_{D}} \beta^{ 1/K_{\ast}}\right)^{k} $ is geometric and if
 \begin{equation}
 \beta < \left(\frac{1 - \mu^{0}_{i}Mc_{D}}{\left(1 + \mu^{0}_{i}\left(N (\etax + c_{\X,\max}) + M(\etao + \comax)\right)\right)} \right)^{K_{\ast}}
 \end{equation}
 it converges as $K\rightarrow \infty$ to $\frac{1}{1 -  r_{i}}$ where  $r_{i} =  \frac{\left(1 + \mu^{0}_{i}\left(N (\etax + c_{\X,\max}) + M(\etao + \comax) \right)\right)}{1 - \mu^{0}_{i}Mc_{D}} \beta^{ 1/K_{\ast}}$.

Let us now bound the $\ell_{2}$ norms of the columns of $\hat{\bDelta}$. From Lemma \ref{lem:bound_eo} we have $\|\eo^{i,k}\|_{2}\leq \etao + \comax - c_{d}$. We can easily thus derive that
$ \|\hat{\bdelta}_{i}\|_{2}\leq \kappa (\etao + \comax - c_{d})$. For the spectral norm of $\hat{\bDelta}$ we thus have
\begin{equation}
\begin{split}
\|\hat{\bDelta}\|_{2} &= \underset{\x\neq \mathbf{0}}{\sup} \frac{\|\hat{\bDelta} \x\|_{2}}{\|\x\|_{2}} =  \underset{\x\neq \mathbf{0}}{\sup}  \frac{\|\sum^{c'}_{i=1}\hat{\bdelta}_{i}x_{i}\|_{2}}{\|\x\|_{2}}  \leq  \underset{\x\neq \mathbf{0}}{\sup} \frac{\sum^{c'}_{i=1}\|\hat{\bdelta}_{i}\|_{2}|x_{i}|}{\|\x\|_{2}} \\
& \leq \max_{i} \|\hat{\bdelta}_i\|_{2}\underset{\x\neq \mathbf{0}}{\sup}  \frac{\|\x\|_{1}}{\|\x\|_{2}} \leq  \kappa (\etao + \comax - c_{d}) \sqrt{c'}
\end{split}
\end{equation}
Where the last inequality arises since $\|\x\|_{1}\leq \sqrt{c'}\|\x\|_{2}$.
\end{proof}

We then give the Theorem.

\begin{theorem}(Theorem 5.58 of \citet{Vershynin:arxiv2010})
Let $\mB$ be a $D\times d$ matrix ($D\geq d$) whose columns $\vb_{i}$ are independent sub-gaussian isotropic random vectors in $\Rb^{D}$ with $\|\vb_{i}\|_{2}=\sqrt{D}$ almost surely. Then for every $t\geq 0$ the inequality 
\begin{equation}
\sqrt{D} - C\sqrt{d} - t \leq \sigma_{\min}(\mB) \leq \sigma_{\max}(\mB) \leq \sqrt{D} + C\sqrt{d} + t
\end{equation}
with probability at least $1-2\exp(-c t^2)$, where $C = C'_{k}, c = c'_{K}>0$ depend only on the subgaussian norm $K= \max_{j}\|\vb_{i}\|_{\psi_{2}}$ of the columns.
\end{theorem}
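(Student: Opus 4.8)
The plan is to convert the two-sided singular-value statement into a single operator-norm estimate on the off-diagonal part of the Gram matrix, and then to control that operator norm with an $\epsilon$-net combined with a scalar concentration inequality. \textbf{First} I would write $\mB^{\top}\mB = D\,\I + \mG$, where $\mG_{jk}=\langle\vb_j,\vb_k\rangle$ for $j\neq k$ and $\mG_{jj}=\|\vb_j\|_2^2-D=0$; this last identity is exactly where the almost-sure normalization $\|\vb_j\|_2=\sqrt{D}$ enters, since it makes the diagonal fluctuation vanish. Every eigenvalue $\lambda$ of $\mB^{\top}\mB$ then satisfies $|\lambda-D|\le\|\mG\|_2$, so for each singular value $\sigma_i(\mB)=\lambda^{1/2}$ one has $|\sigma_i(\mB)-\sqrt{D}|=|\lambda-D|/(\lambda^{1/2}+D^{1/2})\le\|\mG\|_2/\sqrt{D}$. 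Hence the whole theorem reduces to proving $\|\mG\|_2\le\sqrt{D}\,(C\sqrt{d}+t)$ with probability at least $1-2\exp(-ct^2)$.

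\textbf{Next} I would bound $\|\mG\|_2$ by a net argument. Since $\mG$ is symmetric, $\|\mG\|_2=\max_{\vx\in\mathbb{S}^{d-1}}|\langle\mG\vx,\vx\rangle|$, and fixing a $\tfrac14$-net $\mathcal{N}$ of $\mathbb{S}^{d-1}$ with $|\mathcal{N}|\le 9^{d}$, the standard approximation estimate for quadratic forms gives $\|\mG\|_2\le 2\max_{\vx\in\mathcal{N}}|\langle\mG\vx,\vx\rangle|$. For a single fixed $\vx$ the key identity is $\langle\mG\vx,\vx\rangle=\vx^{\top}\mB^{\top}\mB\vx-D=\|\mB\vx\|_2^2-D$. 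Writing $\vy:=\mB\vx=\sum_j x_j\vb_j$, one checks that $\vy$ is a sum of independent, centered, sub-gaussian isotropic vectors, is itself isotropic ($\Eb[\vy\vy^{\top}]=\sum_j x_j^2\,\I=\I$) and sub-gaussian with $\|\vy\|_{\psi_2}\lesssim K$, and that $\Eb\|\vy\|_2^2=D$ with the cross terms vanishing by independence and centering. A Bernstein/sub-exponential bound for the squared norm then gives, for all $\tau\ge 0$,
\[
\mathbb{P}\!\left(\left|\,\|\mB\vx\|_2^2-D\,\right|>\tau\right)\le 2\exp\!\left(-c_0\min\!\left(\frac{\tau^2}{K^4 D},\,\frac{\tau}{K^2}\right)\right).
\]

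\textbf{Then} I would do the bookkeeping: choose $\tau=\tfrac12\sqrt{D}\,(C\sqrt{d}+t)$, at which scale the first term in the minimum is the active one, so each net point fails with probability at most $2\exp\!\big(-c_0(C\sqrt{d}+t)^2/(4K^4)\big)$. Union-bounding over the $9^{d}=\exp(d\ln 9)$ net points and using $(C\sqrt{d}+t)^2\ge C^2 d+t^2$, the cardinality factor is absorbed by taking $C$ a large enough absolute multiple of $K^2$, which leaves a residual bound $2\exp(-c\,t^2)$ with $c$ depending only on $K$. Feeding this back through the net approximation and the reduction of the first step yields exactly $\sqrt{D}-C\sqrt{d}-t\le\sigma_{\min}(\mB)\le\sigma_{\max}(\mB)\le\sqrt{D}+C\sqrt{d}+t$ on the stated event.

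The hard part will be the fixed-direction concentration: establishing the sub-exponential tail of $\|\mB\vx\|_2^2$ around $D$ is delicate because the coordinates of $\mB\vx$ are dependent (the rows of $\mB$ are not independent, only its columns are), so one cannot simply invoke a sum of independent squares. I would handle this either by applying the Hanson--Wright inequality directly to the zero-diagonal quadratic form $\langle\mG\vx,\vx\rangle=\sum_{j\neq k}x_jx_k\langle\vb_j,\vb_k\rangle$, or by a decoupling inequality that replaces this form by a bilinear form in two independent copies of the columns and then conditions on one copy to reduce to a genuinely sub-gaussian scalar. The two-regime tail must be tracked in principle, but only the sub-gaussian regime is active at the relevant scale $\tau\sim\sqrt{D}\sqrt{d}$, which is what makes the final exponent clean.
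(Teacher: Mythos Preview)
The paper does not prove this statement: it is quoted verbatim as Theorem 5.58 of \citet{Vershynin:arxiv2010} and used as a black box inside the proof of Theorem~7, so there is no ``paper's own proof'' to compare against. Your proposal is essentially the standard argument from Vershynin's notes themselves: reduce to $\|\mB^{\top}\mB-D\mI\|_2$, discretize the sphere with an $\epsilon$-net, and control the scalar deviation $\|\mB\vx\|_2^2-D$ at each net point. You correctly isolate the only nontrivial step---the coordinates of $\mB\vx$ are not independent, so Bernstein for sums of independent squares does not apply---and the two remedies you name (Hanson--Wright on the zero-diagonal chaos, or decoupling to $\langle \mB\vx,\mB'\vx\rangle$ followed by conditioning) are exactly the tools Vershynin uses. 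One small caution: you invoke ``centering'' of the $\vb_j$ to kill the cross terms in $\Eb\|\mB\vx\|_2^2$, but the hypotheses only say isotropic and sub-gaussian; in the application in this paper the columns are uniform on the sphere and hence centered, but in the general statement you should note that the zero-diagonal reduction already absorbs the diagonal contribution exactly via $\|\vb_j\|_2^2=D$, and the decoupling route handles the off-diagonal without needing $\Eb\vb_j=0$.
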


\setcounter{theorem}{6}

The proof of Theorem 7 follows next.
 \begin{theorem}
 Let $\B^{0}\in \Rb^{D\times c'}$ with columns randomly sampled from a unit $\ell_{2}$ norm spherical distribution where $c'\geq c$ with $c$ denoting the true codimension of the inliers subspace $\S$ that satisfies Assumption 1. If 
 \begin{equation}
 1 - C_{1}\sqrt{\frac{c'}{D}} - \frac{\epsilon}{\sqrt{D}} >\sqrt{c'}\kappa (\etao + \comax - c_{d})
 \end{equation}
 where $\kappa = \max_{i} \frac{M\mu^{0}_{i}}{\beta^{K_{0}/K_{\ast}}(1-r_{i})}$ and $r_{i} = \frac{\left(1 + \mu^{0}_{i}\left(N (\etax + c_{\X,\max}) + M(\etao + \comax) \right)\right)}{1 - \mu^{0}_{i}Mc_{D}} \beta^{ 1/K_{\ast}}$   then  with probability at least $1-2\exp(-\epsilon^{2}C_{2})$ (where $C_{1},C_{2}$ are absolute constants), Algorithm 1 with a geometrically diminishing step size rule will converge to a matrix $\hat{\mB}^{\ast}$ such that $\spann(\hat{\mB}^{\ast}) \equiv \Sperp$. 
 \end{theorem}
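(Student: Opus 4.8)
The plan is to combine four ingredients that are already in place: (i) the closed-form description of the limit of Algorithm~\ref{alg:psgm_disc} obtained by iterating its projected recursion; (ii) the deterministic full-rank criterion of Lemma~\ref{lem:suf_full_rank}; (iii) the deterministic bound on $\|\hat{\bDelta}\|_{2}$; and (iv) a random-matrix estimate for the smallest singular value $\sigma_{c'}(\mB^{0})$. The structure is: reduce the statement ``$\spann(\hat{\mB}^{\ast})\equiv\Sperp$'' to ``$\mA=\mB^{0}-\hat{\bDelta}$ has full column rank'', and then certify the latter with the advertised probability by chaining (ii)--(iv).

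For the reduction, I would first note that under Assumption~1 each of the $c'$ decoupled instances satisfies the hypothesis (\ref{eq:cond_conv}) of Theorem~\ref{the:conv_psgm_informal}, so its iterate converges to a unit-norm normal vector of $\S$; passing to the limit in the recursion (\ref{eq:psgm_updates_b})--(\ref{eq:b_ast_disc_b}) identifies this limit as $\vbh^{\ast}_{i}=\Proj\!\big(\ProjSperp(\vb^{0}_{i}-\hat{\bdelta}_{i})\big)$, so that $\hat{\mB}^{\ast}=\Proj\!\big(\ProjSperp(\mA)\big)$ with $\mA=\mB^{0}-\hat{\bDelta}$ (this requires the step-size schedule to be chosen so that the limits $\hat{\bdelta}_{i}=\lim_{K}\bdelta^{K}_{i}$ exist, which is exactly the $\beta$-condition folded into the ``geometrically diminishing step size rule''). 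Every column of $\hat{\mB}^{\ast}$ lies in $\Sperp$, so $\spann(\hat{\mB}^{\ast})\subseteq\Sperp$, and since the columns of $\hat{\mB}^{\ast}$ are nonzero positive multiples of those of $\ProjSperp(\mA)=\mB_{\Sperp}\mB_{\Sperp}^{\top}\mA$, we have $\spann(\hat{\mB}^{\ast})=\ProjSperp(\mathrm{col}(\mA))$. Arguing exactly as in the proof of Theorem~\ref{the:dpcp_cont} --- $\mB^{0}$ is drawn from a spherical distribution, hence is in general position, and $\hat{\bDelta}$ is a small perturbation --- full column rank of $\mA$ forces $\mathrm{col}(\mA)$ to meet $\S$ in only the generic dimension $c'-c$, so $\dim\ProjSperp(\mathrm{col}(\mA))=c$ and $\spann(\hat{\mB}^{\ast})\equiv\Sperp$; this is precisely the sufficiency already recorded before Lemma~\ref{lem:suf_full_rank}.

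It then remains to show that, under the stated hypothesis, $\mA$ is full rank with probability at least $1-2\exp(-C_{2}\epsilon^{2})$. By Lemma~\ref{lem:suf_full_rank} this holds whenever $\sigma_{c'}(\mB^{0})>\|\hat{\bDelta}\|_{2}$. The bound on $\|\hat{\bDelta}\|_{2}$ (proved from Lemma~\ref{lem:bound_eo}, which controls $\|\eo^{i,k}\|_{2}$, together with a geometric-series summation of the products of step-size ratios, valid under the $\beta$-condition) gives the deterministic estimate $\|\hat{\bDelta}\|_{2}<\sqrt{c'}\,\kappa\,(\etao+\comax-c_{d})$. For the smallest singular value I would rescale: the columns of $\mB^{0}$ are uniform on the unit sphere, so $\sqrt{D}\,\mB^{0}$ has independent isotropic sub-gaussian columns of norm $\sqrt{D}$, and Theorem~5.58 of \citet{Vershynin:arxiv2010} with $d=c'$ and $t=\epsilon$ yields, with probability at least $1-2\exp(-C_{2}\epsilon^{2})$,
\[ \sigma_{c'}(\mB^{0})=\tfrac{1}{\sqrt{D}}\,\sigma_{\min}\!\big(\sqrt{D}\,\mB^{0}\big)\;\ge\;1-C_{1}\sqrt{\tfrac{c'}{D}}-\tfrac{\epsilon}{\sqrt{D}}. \]
Since the hypothesis of the theorem is precisely $1-C_{1}\sqrt{c'/D}-\epsilon/\sqrt{D}>\sqrt{c'}\,\kappa\,(\etao+\comax-c_{d})$, on this event $\sigma_{c'}(\mB^{0})>\|\hat{\bDelta}\|_{2}$, hence $\mA$ is full rank, hence $\spann(\hat{\mB}^{\ast})\equiv\Sperp$, which is the claim.

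I expect the fussy points, rather than any deep obstacle, to be: (a) instantiating the ``piecewise geometrically diminishing step size rule'' so that it simultaneously meets the convergence requirements of Theorem~\ref{the:conv_psgm_informal} \emph{and} the $\beta$-bound needed for $\hat{\bDelta}=\lim_{K}\bDelta^{K}$ to exist and to obey the stated norm bound; (b) getting the $\sqrt{D}$ normalization right when transferring Vershynin's estimate from columns of norm $\sqrt{D}$ to the unit-norm initialization, which is what produces the exact left-hand side of (\ref{the:recov_cond}); and (c) being explicit that full column rank of $\mA$ --- via the factorization $\ProjSperp=\mB_{\Sperp}\mB_{\Sperp}^{\top}$ and the general position of $\mB^{0}$ --- pins $\rank(\hat{\mB}^{\ast})$ to $c$ exactly, and not merely to some value $\le c$. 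Step (c) is the single place where the argument leans on an ``almost surely / generic position'' claim rather than on the explicit deterministic inequalities.
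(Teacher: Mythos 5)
Your proposal follows essentially the same route as the paper's proof: reduce $\spann(\hat{\mB}^{\ast})\equiv\Sperp$ to full column rank of $\mA=\mB^{0}-\hat{\bDelta}$ via Lemma~\ref{lem:suf_full_rank}, bound $\|\hat{\bDelta}\|_{2}$ by $\sqrt{c'}\kappa(\etao+\comax-c_{d})$ using the geometric-series argument, and lower-bound $\sigma_{c'}(\mB^{0})$ by rescaling Vershynin's Theorem~5.58 with $d=c'$, $t=\epsilon$. You are in fact slightly more explicit than the paper on the one delicate point --- that passing from full column rank of $\mA$ to $\rank(\ProjSperp(\mA))=c$ requires a genericity argument on $\mathrm{col}(\mA)\cap\S$, not just the deterministic singular-value inequality --- which the paper leaves implicit.
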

 \begin{proof}
By Assumption 1 we have that all columns of $\mB_0$ will satisfy the sufficient condition for convergence of DPCP-PSGM (Algorithm 1) to a normal vector of $\S$. From Lemma 6 and we use the inequality $\sigma_{c'}(\mB_0) > \|\hat{\boldsymbol{\Delta}}\|_2$ which ensures full-rankness of $\hat{\mB}^{\ast}$, which is the key ingredient in order to prove that $\spann(\hat{\mB}^{\ast})=\Sperp$. We can then Use Theorem 9 for matrix $\mB_0$. Note that columns of $\mB_0$ are drawn independently and are uniformly distributed on the unit sphere. Hence, columns of $\mB_0$ are sampled  by subgaussian distribution and the LHS of the inequality of the theorem appears if we scale with $\frac{1}{\sqrt{D}}$ so that to create unit-norm columns and use LHS of the inequality of Theorem 7. The RHS of the inequality is due to the upper bound of $\|\hat{\boldsymbol{\Delta}}\|_2$ as stated in Lemma 8. The absolute constants $C_1,C_2$ depend only the subgaussian norm of the uniform distribution (they is no dependency on the dimensions of the problem).
 
 \end{proof}

\section{Experimental details and additional material}
All experiments were conducted on a MacBook Pro 2.6GhZ 6-Core Intel Core i7, memory 16GB 2667 Mhz DDR using Matlab2019B. For computational purposes and in order to avoid fine-tuning of the piecewise geometrically diminishing (PGD) step size,  the  modified backtracking line-search (MBLS) step-size rule was adopted for DPCP-PSGM as proposed in \citet{Zhu:NIPS18}. We define the distance between two subspaces spanned by matrices $\mB$ and $\mA$ as $dist(\mB,\mA) = \min_{\mQ\in \mathbb{O}(D,c)}\|\mB - \mA\mQ\|_{F}$ where $\mathbb{O}(D,c)$ denotes the Stiefel manifold of orthogonal matrices of rank $c$. Note that     $dist(\mB,\mA)=0 \iff \spann(\mB)\equiv\spann(\mA)$ (see \citet{Zhu:NIPS19}).

\subsection{Outliers Pursuit in Washington DC Mall AVIRIS HSI}
 Hyperspectral images (HSIs)  provide rich spectral information as compared to RGB images capturing a wide range of the electromagnetic spectrum. Washington DC Mall AVIRIS HSI contains contiguous spectral bands captured at 0.4 to 2.4$\mu$m region of visible and infrared spectrum, \cite{paris:tsp2019}. In this experiments we randomly choose 10 out of its 210 spectral bands. Due to high coherence in the both the spectral and the spatial domain, pixels of HSIs admit representations in low-dimensional subspaces. Here, we use a 100$\times$100 segment  of the hyperspectral image selecting randomly 10 out of its $D=210$ spectral bands. We form a matrix $\Xtilde$ of size $10\times 10000$ whose columns correspond to different points in the $10$-dimensional ambient space. Then we corrupt  columns of $\Xtilde$ by replacing them with outliers that are generated uniformly at random with unit $\ell_2$ norm for two different outliers' ratios i.e., $r=0.8$ and $r=0.9$. In the corrupted $\Xtilde$, the remaining clear pixels are considered as the inliers. Table \ref{table:hyper}  displays the F1 scores obtained by DPCP-PSGM, RSGM and DPCP-IRLS algorithm. The latter two algorithms are evaluated in two scenarios: a) codimension is initialized $c'=5$  and b) $c'=10$. Given the singular value distribution of the initial image, we infer that the dimension $d$ of the inliers' subspace is less or equal than 5. 
\begin{table}
\caption{Results on Washinghton DC AVIRIS hyperspectral image}
\label{table:hyper}
\centering
\begin{tabular}{lll}
\multicolumn{1}{c}{\bf Methods } &\multicolumn{2}{c}{\bf F1-scores}
\\ \cline{2-3}
 & r = 80\% & r = 90\%  \\ \hline \\
 DPCP-PSGM (unknown $c$)    & 0.994 & 0.993\\
RSGM  (unknown $c$)           &0 & 0 \\
DPCP-IRLS   (unknown $c$)        & 0 & 0 \\
RSGM ($c=5$) & 0.999 &0.993 \\ 
DPCP-IRLS $c=5$ &  1 & 0.995 \\
\end{tabular}
\end{table}

Hence, $c'=5$ (recall $c=D-d$) is close to the true codimension value while $c'=10$  is an overestimate thereof. From Table \ref{table:hyper}, we can see that the proposed DPCP-PSGM  succeeds in both outliers' ratios regardless its unawareness of the true codimension value. On the other hand, DPCP-IRLS and RSGM fail when initialized with $c=10$ and this is attributed to the restrictions induced due to the orthogonality constraints they both impose.
In Fig. \ref{fig:hsi_results} we provide annotated versions of the clean HSI, its corrupted by outliers version for outliers' ratio $r=90\%$, and the annotated outliers as recovered by the proposed DPCP-PSGM, RSGM, RSGM with $c'=5$ and DPCP-IRLS with $c'=5$. 

\begin{figure}
    \centering
    \begin{tabular}{c c c}
    \includegraphics[width=0.3\textwidth]{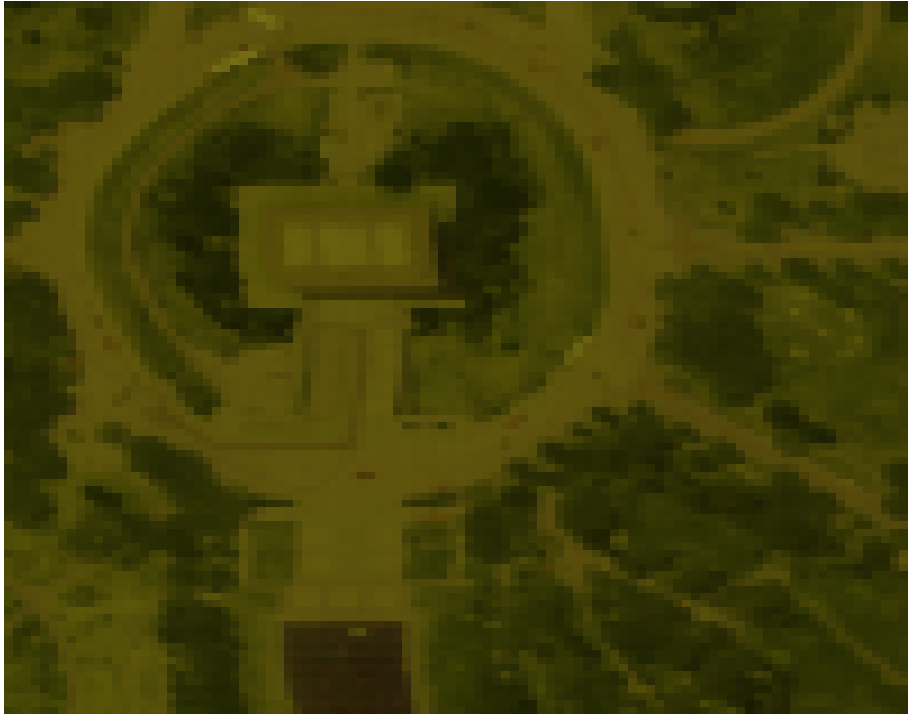} &  \includegraphics[width=0.3\textwidth]{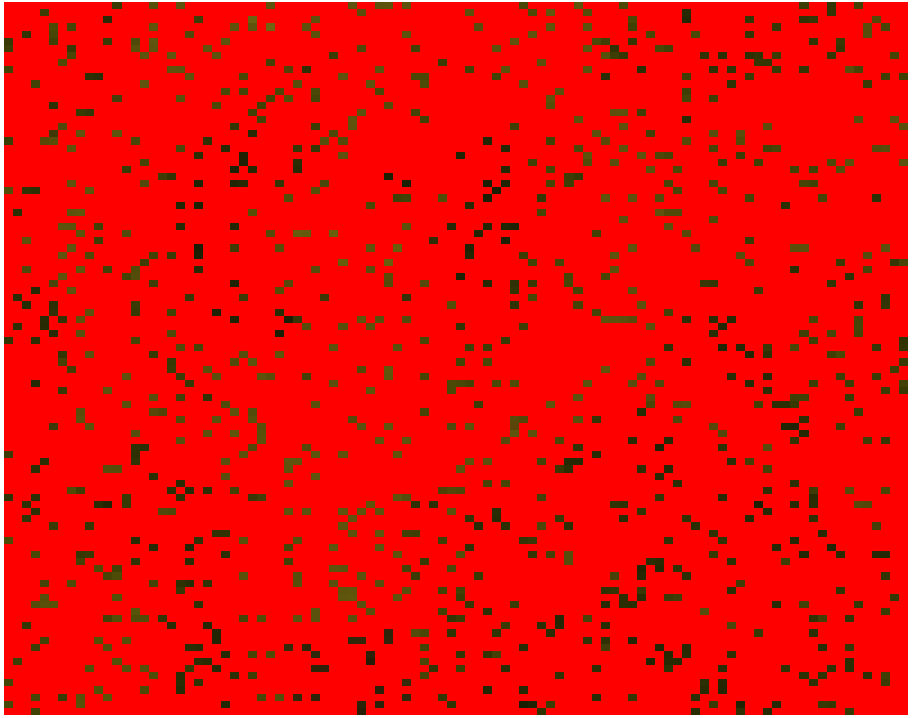} &  \includegraphics[width=0.3\textwidth]{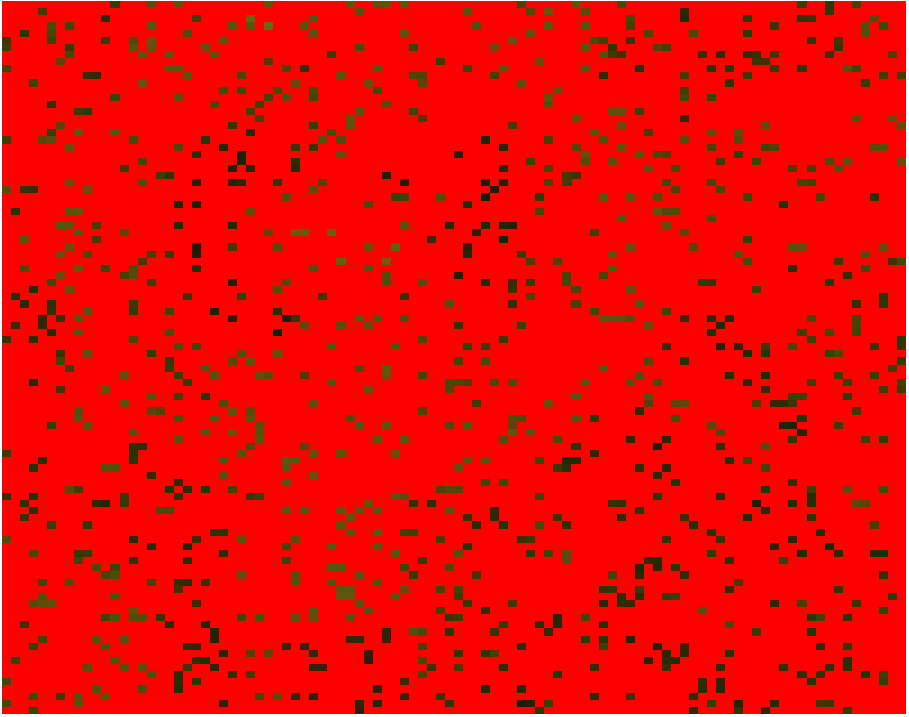}\\
    (a) & (b) &  (c) \\
     \includegraphics[width=0.3\textwidth]{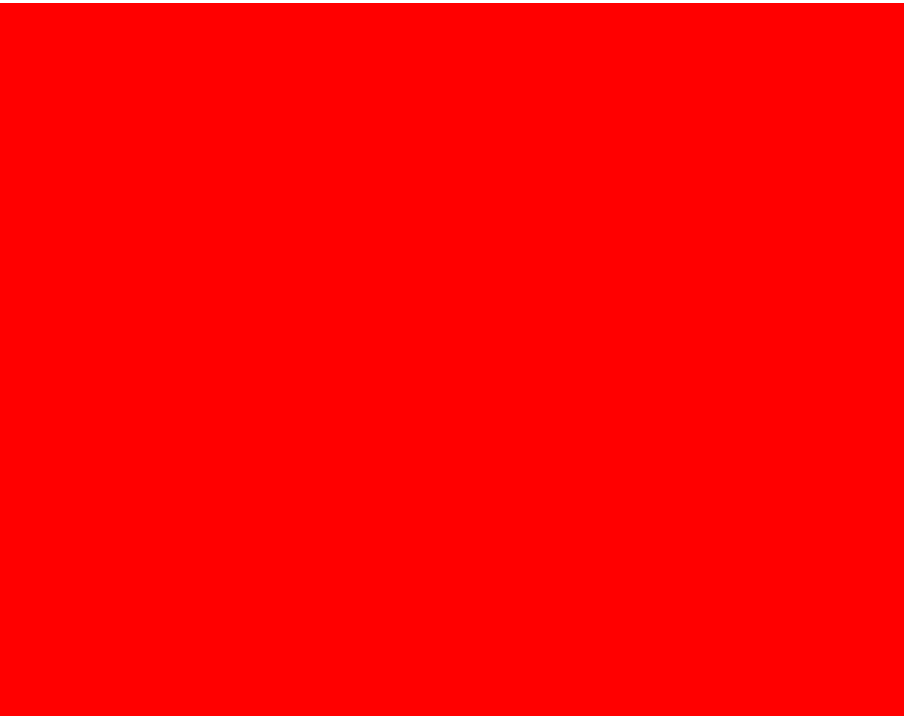} &  \includegraphics[width=0.3\textwidth]{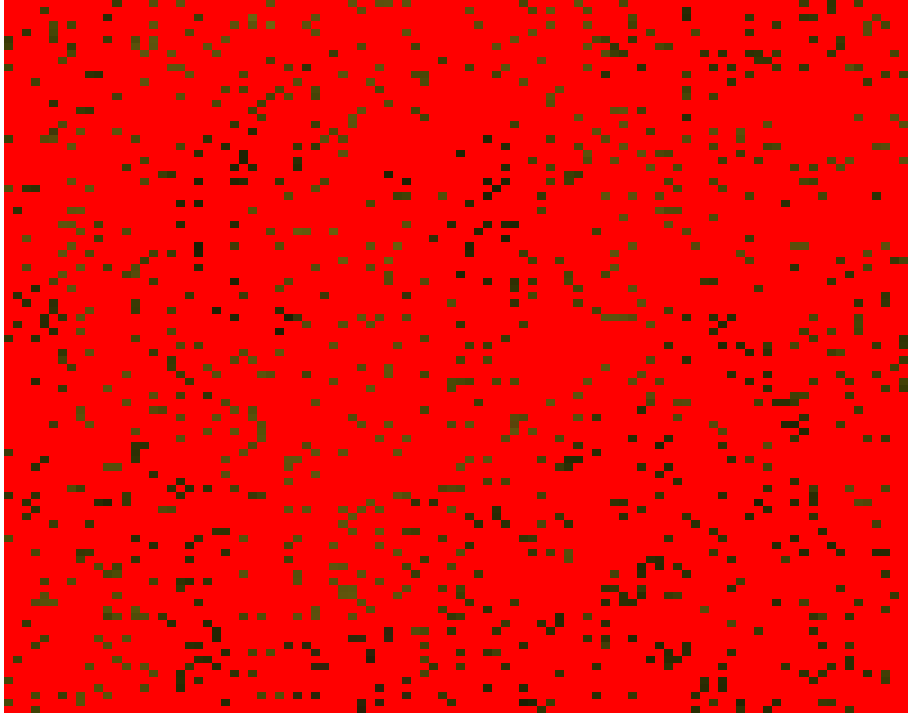} &  \includegraphics[width=0.3\textwidth]{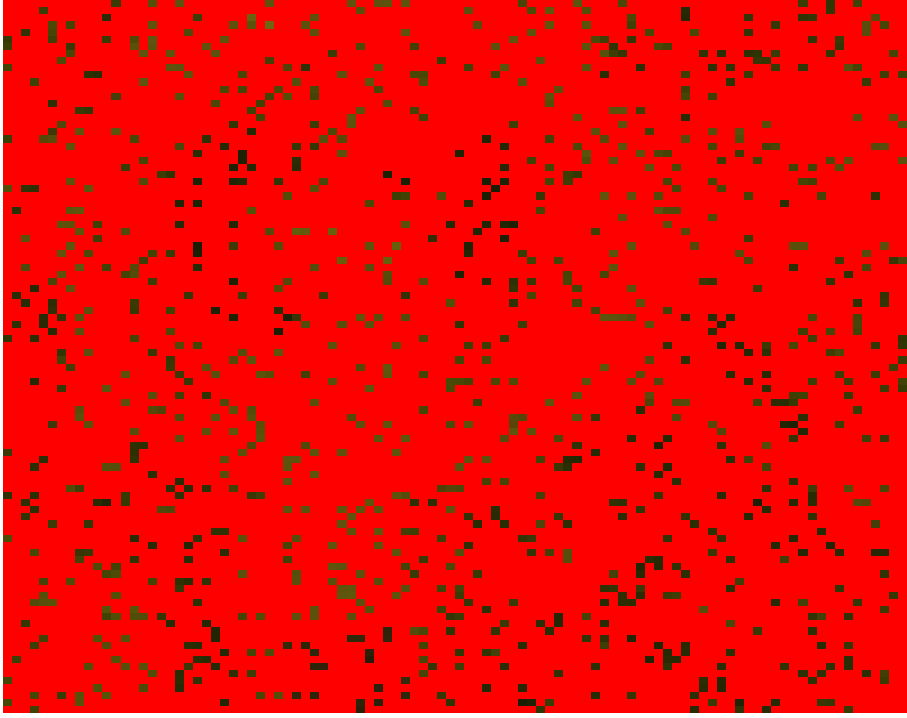}\\
     (d) & (e) & (f)
     \end{tabular}
    \caption{(a) False RGB color image of the clean version of Washington Mall AVIRIS HSI, (b) corrupted by outliers depicted with red and inliers correpsonding the non-red pixels (c) annotated outliers as recoverd by the proposed DPCP-PSGM method initialized with $c'=10$ (d) RSGM with $c'=10$, (e) RSGM with $c'=5$ and (f) DPCP-IRLS with $c'=5$.}
    \label{fig:hsi_results}
\end{figure}


\end{document}